\documentclass{article}

\usepackage{microtype}
\usepackage{graphicx}
\usepackage{subcaption}
\usepackage{booktabs} 

\usepackage{hyperref}

\usepackage[preprint]{icml2026}

\usepackage{amsmath}
\usepackage{amssymb}
\usepackage{mathtools}
\usepackage{amsthm}

\usepackage{enumitem}

\usepackage[capitalize,noabbrev]{cleveref}

\theoremstyle{plain}
\newtheorem{theorem}{Theorem}[section]
\newtheorem{proposition}[theorem]{Proposition}
\newtheorem{lemma}[theorem]{Lemma}

\theoremstyle{definition}
\newtheorem{definition}[theorem]{Definition}
\newtheorem{assumption}[theorem]{Assumption}
\theoremstyle{remark}
\newtheorem{remark}[theorem]{Remark}

\usepackage[textsize=tiny]{todonotes}

\newcommand{\doubleD}{\mathbb{D}}
\newcommand{\doubleE}{\mathbb{E}}

\newcommand{\doubleI}{\mathbb{I}}

\newcommand{\doubleP}{\mathbb{P}}

\newcommand{\doubleR}{\mathbb{R}}

\newcommand{\doubleV}{\mathbb{V}}

\newcommand{\doubleZ}{\mathbb{Z}}

\newcommand{\Acal}{\mathcal{A}}

\newcommand{\Mcal}{\mathcal{M}}

\newcommand{\Ocal}{\mathcal{O}}

\newcommand{\Rcal}{\mathcal{R}}
\newcommand{\Scal}{\mathcal{S}}

\newcommand{\Xcal}{\mathcal{X}}

\newcommand{\cbr}[1]{\left\{ #1 \right\}}

\newcommand{\intern}{$^\dagger$Work done partly during an internship at LY Corporation.}

\icmltitlerunning{A Relative-Budget Theory for Reinforcement Learning with Verifiable Rewards in LLM Reasoning}

\begin{document}

\twocolumn[
  \icmltitle{A Relative-Budget Theory for Reinforcement Learning with Verifiable Rewards in Large Language Model Reasoning}



  \icmlsetsymbol{equal}{*}
  \icmlsetsymbol{intern}{$\dagger$}

  \begin{icmlauthorlist}
    \icmlauthor{Akifumi Wachi}{ly}
    \icmlauthor{Hirota Kinoshita}{ttic,intern}
    \icmlauthor{Shokichi Takakura}{ly}
    \icmlauthor{Rei Higuchi}{ut,riken}
    \icmlauthor{Taiji Suzuki}{ut,riken}
  \end{icmlauthorlist}

  \icmlaffiliation{ly}{LY Corporation}
  \icmlaffiliation{ttic}{Toyota Technological Institute at Chicago}
  \icmlaffiliation{ut}{University of Tokyo}
  \icmlaffiliation{riken}{RIKEN AIP}

  \icmlcorrespondingauthor{Akifumi Wachi}{akifumi.wachi@lycorp.co.jp}

  \icmlkeywords{Machine Learning, ICML}

  \vskip 0.3in
]



\printAffiliationsAndNotice{\intern}  

\begin{abstract}
    Reinforcement learning (RL) is a dominant paradigm for improving the reasoning abilities of large language models, yet its effectiveness varies across tasks and compute budgets.
    We propose a \emph{relative-budget} theory explaining this variation through a single quantity called relative budget $\xi \coloneqq H/\mathbb{E}[T]$, where $H$ is the generation horizon (token budget) and $T$ denotes the number of tokens until the first correct solution under a base policy. 
    We show that $\xi$ determines sample efficiency by controlling reward variance and the likelihood of informative trajectories.
    Our analysis reveals three regimes:
    in the \emph{deficient} regime ($\xi \to 0$), informative trajectories are rare and the sample complexity explodes;
    in the \emph{balanced} regime ($\xi=\Theta(1)$), informative trajectories occur with non-negligible probability and RL is maximally sample-efficient;
    and in the \emph{ample} regime ($\xi \to \infty$), learning remains stable but marginal gains per iteration diminish.
    We further provide finite-sample guarantees for online RL that characterize learning progress across these regimes.
    Specifically, in a case study under idealized distributional assumptions, we show that the relative budget grows linearly over iterations.
    Our empirical results confirm these predictions in realistic settings, identifying a budget $\xi \in [1.5, 2.0]$ that maximizes learning efficiency and coincides with peak reasoning performance.
\end{abstract}

\section{Introduction}
\label{sec:intro}

Reinforcement learning with verifiable rewards (RLVR) has emerged as a powerful paradigm for improving the reasoning capabilities of large language models (LLMs).
When rewards can be automatically verified, RLVR supports long-horizon planning and iterative self-correction, such as OpenAI o-series~\cite{jaech2024openai}, DeepSeek-R1~\cite{guo2025deepseek}, and Kimi K1.5~\cite{team2025kimi}.
Notably, \citet{jaech2024openai} report that gains from continued RL training and increased test-time compute exhibit scaling behavior distinct from pre-training~\cite{kaplan2020scaling}.
\begin{figure}[t]
    \centering
    \includegraphics[width=0.94\linewidth]{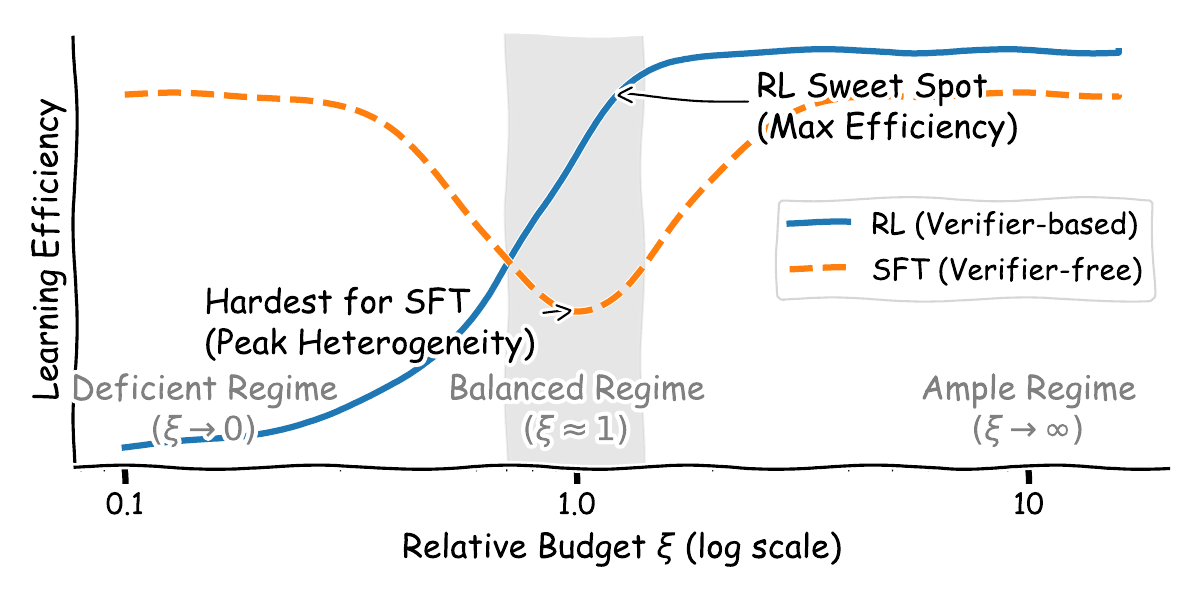}
    \caption{Conceptual illustration of our relative budget theory. The balanced relative budget regime ($\xi \approx 1$) acts as a phase transition point. In this regime, RL achieves maximal sample efficiency, whereas SFT suffers from performance degradation.}
    \label{fig:concept}
\end{figure}

Despite rapid empirical progress, theoretical understanding of RLVR under compute constraints remains under-explored.
Existing studies have primarily focused on the empirical development of algorithms~\cite{yu2025dapo} or data recipes~\citep{yephysics,zhou2025gsm}, with less emphasis on scaling guarantees.
A notable exception is \citet{setlurscaling}, who provided a foundational analysis showing that verifier-based approaches (e.g., RL) outperform verifier-free approaches (e.g., supervised fine-tuning, SFT) in scaling test-time compute. 
However, this framework does not account for the relationship between task difficulty and compute budget as well as the dynamics of online learning.
Without such guidelines, practitioners risk allocating vast computational resources to areas where learning is theoretically inefficient.

This relationship dictates the efficacy of RLVR.
Empirically, performance gains peak at the ``edge of competence''~\citep{zhang2025interplay}, where tasks are neither trivial nor intractable for the base model. 
Crucially, test-time compute scaling~\citep{snell2024scaling} renders difficulty dynamic as task tractability scales with the budget.
This shift necessitates a theoretical framework unifying capability and compute constraints to explain RL's regime-dependent success.

Our contribution is to analyze RLVR dynamics via the relative budget $\xi \coloneqq H/\doubleE[T]$, where $H$ is the token budget and $T$ denotes the number of tokens until the first correct solution.
Using $\xi$ to link the compute budgets and task difficulty, we present the following theoretical results:

\textbf{RL convergence in three regimes}~(\cref{thm:rl_suboptimality}). 
As shown in \cref{fig:concept}, we analyze the sub-optimality gap and sample complexity across three regimes governed by $\xi$:
\begin{itemize}[itemsep=2pt, parsep=0pt, topsep=1pt, partopsep=0pt]
\item \textbf{Deficient} ($\xi \to 0$): The probability of discovering high-reward trajectories vanishes, causing sample complexity to diverge.
\item \textbf{Balanced} ($\xi = \Theta(1)$): A critical divergence point where RL achieves maximal efficiency as the probability of high-reward trajectories is bounded away from $0$. Conversely, SFT suffers worst-case sub-optimality due to peak solution heterogeneity (\cref{sec:sft_comparison}).
\item \textbf{Ample} ($\xi \to \infty$): The probability of high-reward trajectories remains bounded away from zero, ensuring efficient learning though gains per iteration diminish.
\end{itemize}
Fundamentally, our core theory relies only on general assumptions, ensuring robustness to specific distributions.

\textbf{Finite-sample guarantees for online RL}~(\cref{thm:one-step-RL,thm:three-regimes_onlineRL,thm:online_rl}). 
Extending our analysis to the online setting, we derive the sample complexity required for monotonic policy improvement, which identifies the balanced regime as the optimal zone for sample efficiency. 
Furthermore, modeling reasoning time via a gamma distribution, we prove that the relative budget $\xi$ grows linearly over iterations, requiring increasingly many rollout samples.
This guarantees that the policy monotonically reduces its expected time-to-solution, effectively shifting the task towards the ample regime.

\textbf{Empirical validation.} We verify our theoretical findings through experiments with standard LLMs on reasoning benchmarks. 
We observe the predicted phase transition around $\xi \approx 1$. 
Moreover, the learning signal is maximized at $\xi \in [1.5, 2.0]$, which matches the optimal post-training performance. 
This confirms that our theory effectively explains optimal compute allocation in realistic tasks.

\section{Related Work}
\label{sec:related}

Recent advances in reasoning capabilities have shifted focus from simply scaling model parameters to optimizing the training data distribution and test-time compute allocation~\cite{alomrani2025reasoning}.
These empirical methods align with our theoretical analysis of the relative budget $\xi$.

\textbf{Test-time scaling and data efficiency.}
The trade-off between inference compute and performance has been formalized as test-time scaling laws~\citep{snell2024scaling,welleckdecoding}.
Practical implementations such as \textit{budget forcing}~\citep{muennighoff2025s1} manipulate the token budget $H$ by appending ``Wait'' tokens.
In our framework, this increases the relative budget $\xi$, effectively shifting problems from the \textit{deficient} regime ($\xi \ll 1$) where learning signals are absent to the \textit{balanced} regime ($\xi \approx 1$) where reasoning becomes feasible.
Alternatively, \citet{ye2025limo} and \citet{muennighoff2025s1} focus on the task distribution itself.
They show that reasoning capabilities emerge with minimal training examples (e.g., $\approx$1k samples) if trivial or unsolvable cases are removed.
These studies suggest that relative difficulty drives data efficiency more than raw scale.

\textbf{Difficulty filtering in RL.}
Aligning task difficulty with model capability is critical in RLVR~\cite{albalaksurvey,sun2025improving}. 
\citet{bae2025online} addressed this via online difficulty filtering, empirically demonstrating that training is most effective when samples are dynamically selected to have a success rate of $0.5$.
They categorized prompts into three regimes based on  difficulty levels, showing that maximizing the variance of the success rate leads to optimal learning efficiency.
This empirical result is closely related to our \cref{thm:rl_suboptimality,thm:three-regimes_onlineRL}, which identifies the balanced regime ($\xi \approx 1$) as the sweet spot for sample efficiency.
Similarly, \citet{cui2025process} proposed updating a process reward model online using outcome labels and then employed a filtering strategy to retain only prompts where the model's accuracy fell within $[0.2, 0.8]$.
Both \citet{bae2025online} and \citet{cui2025process} employ dynamic filtering to handle the ``moving target'' problem, where model improvements shift fixed tasks from the balanced to the ample regime. 

\section{Preliminaries and Problem Statement}
\label{sec:prelim}

\subsection{LLM Reasoning as Markov Decision Processes} 
Given a prompt (or problem) $x \in \Xcal$, an LLM generates an answer token by token.
Following~\citet{ke2025survey} and \citet{kazemnejad2025vineppo}, we model this process as a finite-horizon Markov decision process (MDP) defined as $\Mcal \coloneqq \langle\Scal, \Acal, F, r, H, \rho \rangle$,
where $\Scal$ is the state space, $\Acal$ is the token space, $F: \Scal \times \Acal \to \Scal$ is the deterministic state transition, $r: \Scal \times \Acal \to \cbr{0, 1}$ is a binary reward function in class $\Rcal$, 
$H \in \doubleZ_+$ is the horizon (i.e., token budget), and $\rho \in \Delta(\Xcal)$ is the initial prompt distribution, where $\Delta(\cdot)$ is the probability simplex.
For simplicity, we assume noiseless reward feedback.
Since decoding is autoregressive, each token corresponds to an action drawn from a policy $\pi: \Scal \to \Delta(\Acal)$.
Specifically, for all $h \in \{1, 2, \ldots, H\}$, let $s_h=(x,a_1,\dots,a_{h-1})$ and sample $a_h\sim\pi(\cdot\mid s_h)$. Thus
$\Scal \coloneqq \{ (x, a_1, \dots, a_{h-1}): x \in \Xcal, a_1, a_2, \ldots, a_{h-1} \in \Acal, h \in [H] \}$ with $s_1 =x$.
The state transition $F$ is deterministic; that is, upon producing token $a_h$, it simply appends the newly chosen action token to the current state by concatenation $s_{h+1} = (s_h, a_h)$ and collects reward $r_h \coloneqq r(s_h, a_h)$. 
A solution trace is a rollout  $\tau = (x, a_1, \dots, a_H)$ in the MDP, and the corresponding cumulative reward over $\tau$ is $R(\tau) = \sum_{h=1}^H r(s_h, a_h)$.

Following~\citet{setlurscaling}, we use a \emph{binary} reward with step shaping to encode a preference for early solutions.
Let $h^\star$ be the first step at which the trajectory reaches a correct solution. 
If $h^\star \le H$, we set $r_h=1$ for $h\ge h^\star$ and $r_h=0$ for $h<h^\star$, so the return is larger when the solution is found earlier, equivalent to imposing a length penalty~\cite{alomrani2025reasoning,arora2025training,liu2025learn}.
A natural alternative is to terminate immediately upon reaching a correct solution and assign a fixed terminal reward. 
This objective depends only on the success probability and does not distinguish early from late solutions, whereas our setting explicitly values earlier solutions under a token budget. 
We call a trace $\tau$ correct if it reaches the desired answer within $H$ tokens. 
In summary, we seek a policy $\pi$ that maximizes
\begin{align}
    \label{eq:optimization}
    J(\pi)
    \coloneqq \doubleE_{\rho, \pi} \left[ R(\tau)  \right] 
    = \mathbb{E}_{\rho, \pi} \left[ \sum_{h=1}^H r(s_h, a_h)  \right],
\end{align}
where $\doubleE_{\rho,\pi} [\cdot]$ abbreviates $\doubleE_{x \sim \rho} [\doubleE_{\tau \sim \pi(\cdot \mid x)} [\cdot]]$.
We use $\tau \sim \pi(\cdot \mid x)$ to denote a trajectory $\tau$ sampled from the distribution induced by $\pi$ starting at $x$.

\textbf{Notations.}
Throughout the paper, we use the standard asymptotic notation: $\Ocal(\cdot)$ for upper bounds, $\Omega(\cdot)$ for lower bounds, and $\Theta(\cdot)$ when both hold.
We write $a = \widetilde{\Ocal}(b)$ and $a = \widetilde{\Omega}(b)$ to respectively indicate $a = \Ocal(b \cdot \max(1, \mathrm{polylog}(b)))$ and $a = \Omega(b \cdot \max(1, \mathrm{polylog}(b)))$.
Additionally, we let $a \lesssim b$ denote $a = \Ocal(b)$. 
For a positive integer $n \in \doubleZ_+$, $[n]$ denotes the set $\{1, 2, . . . , n\}$.
Finally, $[\mathbf{x}]_{\mathbf{y}}^{\mathbf{z}} \coloneqq \min\{\mathbf{z},\max\{\mathbf{x}, \mathbf{y}\}\}$ denotes clipping.

\subsection{Sub-optimality Gap in RL}

\citet{setlurscaling} demonstrate that verifier-based RL methods theoretically outperform verifier-free baselines (e.g., SFT) in scaling test-time compute. While SFT is limited by solution heterogeneity, RL can exploit rare high-reward traces sampled by a base policy $\pi_b: \Scal \to \Delta(\Acal)$.
Motivated by this theoretical superiority, we focus on the RL setting.\footnote{We also discuss connections to SFT in \cref{sec:sft_comparison}.}

Following \citet{setlurscaling}, we compare against the best policy inside a $\chi^2$ trust region around $\pi_b$.
For any instance $x\in\Xcal$, define the instance-wise $\chi^2$-divergence
$\doubleD_{\chi^{2}}\bigl(\pi(\cdot \mid x)\,\|\,\pi_b(\cdot \mid x)\bigr)
\coloneqq
\mathbb{E}_{\tau\sim\pi_b (\cdot \mid x)}\left[
\left(\frac{\pi(\tau\mid x)}{\pi_b(\tau\mid x)} - 1\right)^2
\right]$.
Given a radius $\kappa>0$ and an instance distribution $\rho$ over $\Xcal$, define
$\Pi_{\kappa} \coloneqq \left\{\pi \,\middle|\,
\mathbb{E}_{x\sim\rho} \left[
\mathbb{D}_{\chi^{2}} \bigl(\pi(\cdot\mid x)\,\|\,\pi_b(\cdot\mid x)\bigr)
\right] \le \kappa
\right\}$ and 
$\bar\pi_{\kappa}\in\arg\max_{\pi\in\Pi_\kappa} J(\pi)$.
Although $\Pi_\kappa$ implies an average constraint over $x\sim\rho$, the comparator induces an instance-wise divergence level
\begin{equation}
\label{eq:kappa_x_def}
\kappa_x
\coloneqq
\mathbb{D}_{\chi^{2}}\!\bigl(\bar{\pi}_{\kappa}(\cdot \mid x)\,\|\,\pi_b(\cdot \mid x)\bigr),
\qquad \forall x\in\Xcal.
\end{equation}

Define $\sigma_{b,x}^2 \coloneqq \doubleV_{\tau \sim \pi_b(\cdot \mid x)}[R(\tau)]$.
Then, on instance $x$, a $\chi^2$ radius $\kappa_x$ limits the expected improvement to $\Ocal(\sigma_{b,x}\sqrt{\kappa_x})$.
This motivates measuring the probability of events that are $\sqrt{\kappa_x}$ standard deviations away from the mean.

\begin{definition}[Anti-concentration]
\label{def:anticonc}
Let $\pi_{b}$ be the base policy.
For each $x \in \Xcal$ and scalar $\varepsilon>0$, define
\begin{equation*}
    c_x(\varepsilon)
    \coloneqq \doubleP_{\tau\sim\pi_b(\cdot\mid x)}
    \Bigl[
       R(\tau)\ge
       \mathbb{E}_{\tau\sim\pi_b(\cdot \mid x)}[R(\tau)]
       +\sigma_{b,x}\sqrt{\varepsilon}
     \Bigr].
\end{equation*}
Then, with a $\chi^2$ radius $\kappa > 0$,
the \emph{anti-concentration coefficient} of $\pi_{b}$ is defined as $c_0(\kappa) \coloneqq \min_{x \in \Xcal} c_x(\kappa_x)$.
\end{definition}
The significance of anti-concentration is formalized by the following bound on the sub-optimality gap.
\begin{lemma}[Sub-optimality gap of RL, Theorem 5.7 in \citet{setlurscaling}]
\label{thm:upper}
With probability at least \(1-\delta\), the policy
\(\hat\pi^{\mathrm{RL}}_{n}\) returned by the RL algorithm proposed in \citet{setlurscaling} achieves the following sub-optimality gap with respect to the best comparator $\bar\pi_{\kappa}$:
\begin{align}
  J(\bar\pi_{\kappa})-
  J(\hat{\pi}_{n}^\mathrm{RL})
  \lesssim
  \frac{H\log(|\mathcal R|/\delta)}{c_{0}(\kappa)\,n}.
\end{align}
Here, $c_0(\kappa)$ is the anti-concentration coefficient of the base policy $\pi_{b}$ with a trust region radius $\kappa>0$ such that 
\begin{align} 
\label{eq:kappa_condition}
& \sqrt{\kappa_x} \cdot \mathbb{E}_{\tau\sim\pi_b (\cdot \mid x)}[R(\tau)] \le \sigma_{b,x} 
& \forall x \in \mathcal{X}.
\end{align}
\end{lemma}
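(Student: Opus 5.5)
This lemma is imported from \citet{setlurscaling}, so rather than re-derive it in full I would reconstruct the argument in three stages: a pessimistic reward-estimation step, a comparison of the learned policy against $\bar\pi_\kappa$ on each instance, and an anti-concentration step that turns the $\chi^2$ trust region into the factor $1/c_0(\kappa)$.

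\textbf{Algorithm and concentration.} The RL algorithm of \citet{setlurscaling} collects $n$ rollouts from $\pi_b$ with their verifier rewards. It fits a reward model $\hat r\in\Rcal$ by least squares and then optimizes a pessimistic objective inside the $\chi^2$ ball around $\pi_b$. The first step is a standard realizable least-squares bound with a union bound over $\Rcal$. With probability $1-\delta$, the in-distribution error satisfies
\begin{equation*}
\mathbb{E}_{\rho,\pi_b}\bigl[(\hat R(\tau)-R(\tau))^2\bigr]\lesssim H^2\log(|\Rcal|/\delta)/n .
\end{equation*}
Because the data are drawn from $\pi_b$, this error is measured only under $\pi_b$.

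\textbf{Transfer to the comparator.} The second step moves this guarantee from $\pi_b$ to $\bar\pi_\kappa$. Cauchy--Schwarz with the density ratio gives
\begin{equation*}
|\mathbb{E}_{\pi}[\hat R-R]|\le\sqrt{(1+\kappa_x)\,\mathbb{E}_{\pi_b}[(\hat R-R)^2]}.
\end{equation*}
Used naively, this yields only a $1/\sqrt n$ rate. To reach the fast $1/n$ rate, I would exploit the structure of the comparator. Maximizing a linear objective over a $\chi^2$ ball gives $\bar\pi_\kappa(\tau\mid x)/\pi_b(\tau\mid x)=1+\sqrt{\kappa_x}\,(R(\tau)-\mu_x)/\sigma_{b,x}$, where $\mu_x$ is the mean return under $\pi_b$. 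Condition~\eqref{eq:kappa_condition}, $\sqrt{\kappa_x}\mu_x\le\sigma_{b,x}$, is exactly what makes this ratio nonnegative, so this is a valid policy. It follows that $\bar\pi_\kappa$ concentrates its extra mass on traces with $R\ge\mu_x+\sigma_{b,x}\sqrt{\kappa_x}$. By Definition~\ref{def:anticonc}, $\pi_b$ assigns these traces probability at least $c_x(\kappa_x)\ge c_0(\kappa)$.

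\textbf{Fast rate via anti-concentration.} The third step shows that the algorithm therefore sees high-reward traces roughly $c_0 n$ times. Since rewards are binary and the class is realizable, a Bernstein or localized argument on that subpopulation identifies the high-reward set with error about $\log(|\Rcal|/\delta)/(c_0 n)$. Multiplying by the return scale $H$ and summing over the instance distribution then gives $H\log(|\Rcal|/\delta)/(c_0(\kappa)n)$.

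I expect this last step to be the main obstacle: getting the $1/n$ rate rather than $1/\sqrt n$. It requires a variance-aware, Bernstein-type concentration bound together with a careful handling of the pessimism penalty so that the error scales linearly rather than through the square root. My first attempt would be to bound the self-normalized deviation only on the event $\{R\ge\mu_x+\sigma_{b,x}\sqrt{\kappa_x}\}$. Given that this is a cited result, I would ultimately defer to the proof of Theorem 5.7 in \citet{setlurscaling} for these constants.
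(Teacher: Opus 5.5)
The paper gives no proof of this lemma; it is imported verbatim from \citet{setlurscaling}, so your final deferral matches what the paper does. As a reconstruction, though, your sketch breaks at the step meant to produce the factor $1/c_0(\kappa)$. Write $\bar R_x$ for the mean return, since the paper reserves $\mu_x$ for the mean time-to-solution. Your own formula gives $\bar\pi_\kappa(\tau\mid x)/\pi_b(\tau\mid x)=1+\sqrt{\kappa_x}\,(R(\tau)-\bar R_x)/\sigma_{b,x}$. Under it, the extra mass $\bar\pi_\kappa-\pi_b$ is positive on \emph{all} traces with $R>\bar R_x$, growing linearly in $R-\bar R_x$. Unless the trust-region condition is tight, the comparator also still charges failed traces. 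So it is not concentrated on $A_x\coloneqq\{R\ge \bar R_x+\sigma_{b,x}\sqrt{\kappa_x}\}$: the threshold $\bar R_x+\sigma_{b,x}\sqrt{\kappa_x}$ is the comparator's \emph{value}, not its support. Controlling the reward-model error only on $A_x$, as in your third step, therefore does not control $J(\bar\pi_\kappa)-J(\hat\pi^{\mathrm{RL}}_n)$. If you do compare to $\bar\pi_\kappa$ directly, the sup norm of its explicit density ratio (at most $1+\sqrt{\kappa_x}(H-\bar R_x)/\sigma_{b,x}$) already gives a $1/n$ rate under pessimism. But the constant is then that quantity, not $1/c_0(\kappa)$ as the lemma asserts.

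The missing idea is to replace the comparator by the surrogate $\tilde\pi(\cdot\mid x)\coloneqq\pi_b(\cdot\mid x,A_x)$.
\begin{itemize}
\item By definition of $A_x$, its per-instance value is at least $\bar R_x+\sigma_{b,x}\sqrt{\kappa_x}$, which is the value of $\bar\pi_\kappa$ on $x$.
\item Its density ratio is bounded in sup norm by $1/c_x(\kappa_x)\le 1/c_0(\kappa)$. Hence for any candidate reward $r'$ with return $R'$, $|\mathbb{E}_{\rho,\tilde\pi}[R'-R]|\le c_0(\kappa)^{-1}H\,\mathbb{P}_{\rho,\pi_b}[R'\neq R]$, which is linear with no square root. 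Your ``$c_0 n$ high-reward samples'' heuristic is exactly this inequality, but it only helps once the comparator lives on $A_x$.
\item The $1/n$ rate needs no Bernstein or localization argument. Labels are noiseless and $\Rcal$ is realizable, so least squares interpolates the data. With probability $1-\delta$, every $r'\in\Rcal$ consistent with the $n$ samples then has $\mathbb{P}_{\rho,\pi_b}[R'\neq R]\le\log(|\Rcal|/\delta)/n$.
\item Pessimism over this version space, which contains the true $r$, closes the argument. With $\hat\pi\coloneqq\hat\pi^{\mathrm{RL}}_n$ and $J_{r'}$ the objective under $r'$: $J(\hat\pi)\ge\min_{r'}J_{r'}(\hat\pi)\ge\min_{r'}J_{r'}(\tilde\pi)\ge J(\bar\pi_\kappa)-H\log(|\Rcal|/\delta)/(c_0(\kappa)\,n)$.
\end{itemize}
The middle inequality requires $\tilde\pi$ to be feasible. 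By Cantelli's inequality $c_x(\kappa_x)\le 1/(1+\kappa_x)$, so $\mathbb{D}_{\chi^2}\bigl(\tilde\pi(\cdot\mid x)\,\|\,\pi_b(\cdot\mid x)\bigr)=1/c_x(\kappa_x)-1\ge\kappa_x$. Thus $\tilde\pi$ generically lies outside $\Pi_\kappa$. The pessimistic optimization therefore cannot be over the radius-$\kappa$ $\chi^2$ ball you describe; it must range over a class containing such conditioned policies, for example those with density ratio at most $1/c_0(\kappa)$.
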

Intuitively, a larger anti-concentration coefficient leads to a higher probability of discovering reward-rich solution traces during sampling, thereby theoretically reducing the sample complexity required for the RL agent.

\section{The Relative Budget Framework}
\label{sec:model-time-to-solution}

\subsection{Modeling Time to Correct Solutions}
We model the number of tokens until the first correct solution under the base policy $\pi_b$, and then relate its distributional properties to the reward variance and sample complexity of verifier-based RL.
Given a verifier $W_x: \Scal \to \{0, 1\}$ and solution trace $\tau$, we define $T(\tau) \coloneqq \min\{h \in \doubleZ_+ : W_x(s_h) = 1\}$, with $T(\tau) = \infty$ if unsolved.
Intuitively, more difficult problems for $\pi_b$ correspond to a larger expected number of tokens until the first correct solution.
We denote the first two moments of $T(\tau)$ as
\begin{align*}
    \mu_x \coloneqq \doubleE_{\tau \sim \pi_b(\cdot \mid x)}[T(\tau)] 
    \quad \text{and} \quad
    v_x \coloneqq \doubleV_{\tau \sim \pi_b(\cdot \mid x)}[T(\tau)].
\end{align*}
and the success probability for a problem $x$ under $\pi_b$ as
\begin{align}
    q_x \coloneqq \doubleP_{\tau \sim \pi_b(\cdot \mid x)}\bigl[T(\tau)\le H\bigr].
\end{align}
Finally, based on the bi-level reward structure, the cumulative reward $R(\tau)$ is determined solely by $T(\tau)$ as follows:
\begin{equation}
\label{eq:R_tau_x}
R(\tau) = 
\begin{cases} 
H - T(\tau) + 1 & \text{\quad if \quad} T(\tau) \le H, \\
0 & \text{\quad otherwise.}
\end{cases}
\end{equation}

\subsection{Defining Relative Budget and Assumptions}
To analyze the interplay between compute budget and problem difficulty, we introduce the \textit{relative budget}.
This parameter indicates how ample the budget $H$ is relative to the difficulty of a specific problem $x$ for the base policy $\pi_b$.
\begin{definition}[Relative budget]
    \label{definition:criticality_x}
    For a given problem $x$, we normalize the compute budget $H \in \doubleZ_+$ by its mean time-to-solution $\mu_x$ and define the relative budget (i.e., compute budget relative to task difficulty) as
    \begin{equation}
        \xi_x \coloneqq \frac{H}{\mu_x} = \frac{H}{\doubleE_{\tau \sim \pi_b(\cdot \mid x)}[T(\tau)]}.
    \end{equation}
\end{definition}

To ensure mathematical tractability, we restrict our analysis to the subset of solvable problems $\mathcal{X}_\sharp \subseteq \mathcal{X}$, where $\pi_b$ exhibits a finite expected time-to-solution (i.e., $\mu_x < \infty$ for all $x \in \mathcal{X}_\sharp$).
This restriction is empirically grounded, as recent studies~\cite{kim2025reinforcement,yue2025does,wu2025invisible} demonstrate that RLVR primarily optimizes efficiency on solvable tasks rather than providing new capabilities for fundamentally unsolvable ones.
Thus, we treat inherently unsolvable instances as outside our scope. 
We assume finite moments for $T(\tau)$ on $\mathcal{X}_\sharp$ as follows.

\begin{assumption}[Moments of $T(\tau)$]
We assume that $v_x = \Theta(\mu_x^2)$ and that the second moment
$\doubleE_{\tau \sim \pi_b(\cdot \mid x)}[T(\tau)^2]$ is finite for all $x \in \Xcal_\sharp$.
\label{ass:moments}
\end{assumption}
Intuitively, this assumption rules out heavy-tailed behaviors, ensuring the standard deviation of the time-to-solution scales with its mean.
A representative sufficient condition is a uniform sub-Weibull tail~\cite{vladimirova2020sub}.
By focusing on $\mathcal{X}_\sharp$, we address the efficiency of budget allocation rather than task feasibility.

To analyze the probability of discovering high-reward solution traces, we characterize the lower tail of $T(\tau)$.
\begin{assumption}[Left-tail of $T(\tau)$]
\label{assumption:left_tail_T_tau}
There exist constants $z_0\in(0,1]$, $c_-,c_+>0$, and a nondecreasing function
$f:[0,z_0]\to[0,1]$ with $f(0)=0$ and $f(z_1)>0$ for some $z_1\in(0,z_0]$ such that for every problem $x\in\mathcal{X}_\sharp$ and
every $t\le\mu_x$ with $z_x \coloneqq t/\mu_x \in [0,z_0]$,
\begin{align}
  c_-\,f(z_x)\ \le\ \doubleP_{\tau \sim \pi_b(\cdot \mid x)}[T(\tau)\le t]\ \le\ c_+\,f(z_x).
\end{align}
Also, $f$ satisfies a mild doubling condition near the origin.
Specifically, there exists $D \ge 1$ such that for all $z\in(0,z_0]$, 
$D^{-1} f(z) \le f(z/2) \le D f(z)$.
\end{assumption}
Here, $f(z)$ governs the rarity of solving a problem within a fraction $z$ of the expected time.
The doubling condition ensures $f$ decays polynomially rather than exponentially near the origin, ruling out distributions where early solutions are effectively impossible.

We next characterize the \textit{balanced} regime, where the compute budget is comparable to the expected time-to-solution.
\begin{assumption}[Non-degeneracy with balanced $\xi$]
\label{ass:balanced-nondeg-min}
Fix constants $0<\xi_{\min}<\xi_{\max}<\infty$ and define the set of instances with $\xi_x \in [\xi_{\min}, \xi_{\max}]$
\begin{align}
    \Xcal_b \coloneqq \Bigl\{x \in \Xcal: \xi_x = H/\mu_x \in[\xi_{\min}, \xi_{\max}]\Bigr\}.
\end{align}
For each $x \in \Xcal_b$, we assume there exist universal constants $q_{\min}\in(0, 1/2)$ and $c_v \in (0, 1]$ such that
$q_{\min} \le q_x \le 1-q_{\min}$
and the conditional variance satisfies
\begin{align}
    \mathbb{V}_{\tau \sim \pi_b(\cdot \mid x)}[T(\tau) \mid T(\tau)\le H] \ge c_v v_x.
    \label{eq:balanced-variance}
\end{align}
\end{assumption}
This assumption posits that in the balanced regime, the truncation at $H$ does not collapse the variance of the reasoning time distribution; that is, the difficulty distribution remains non-degenerate even conditioned on success.

\section{Sub-optimality Gap of RL on Different Relative Budget Regimes}
\label{sec:rl}

\subsection{Regime-Dependent Anti-Concentration Bounds}
Since RL learns from both successful and failed solution traces,
the expectation and variance of the cumulative reward $R(\tau)$ must be taken for the entire trajectory distribution, including failure cases $T(\tau) > H$ with $R(\tau) = 0$.

We now analyze the behavior of the anti-concentration coefficient
$c_0$ as a function of the relative budget $\xi_x = H / \mu_x$
by modeling the time to correct solution as discussed in \cref{sec:model-time-to-solution}. 
By analyzing the reward statistics via the success probability $q_x$, we relate $c_x(\varepsilon)$ to the left tail of $T(\tau)$.

\begin{lemma}[Anti-concentration in terms of $T(\tau)$]
\label{lemma:anticonc_T_tail}
For each $\varepsilon>0$ and instance $x$ such that $q_x > 0$, there exists a scalar coefficient $\eta(x,\varepsilon)$ such that
\begin{equation*}
    c_x(\varepsilon)
    =
    \doubleP_{\tau \sim \pi_b(\cdot \mid x)} \left[T(\tau) \le \eta(x,\varepsilon) \cdot \doubleE_{\tau \sim \pi_b(\cdot \mid x)}[T(\tau)]\right].
\end{equation*}
Also, for some constant $c_->0$ independent of $x$, we have
$c_x(\varepsilon) \ge c_-\,f\bigl(\bar{\eta}(x,\varepsilon)\bigr)$,
where $\bar{\eta}(x, \varepsilon) \coloneqq [\eta(x, \varepsilon)]_0^{z_0}$.
Note that $z_0$ is the constant introduced in \cref{assumption:left_tail_T_tau}.
\end{lemma}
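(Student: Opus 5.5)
The plan is to exploit the fact, recorded in \cref{eq:R_tau_x}, that $R(\tau)$ is a deterministic, nonincreasing function of $T(\tau)$. On the event $\{T(\tau)\le H\}$ it equals $H-T(\tau)+1$, which strictly decreases in $T$; off that event it is $0$. So a superlevel event for $R$ can be rewritten as a sublevel event for $T$. We fix $x$ with $q_x>0$ and $\varepsilon>0$, and write $m_x\coloneqq\doubleE_{\pi_b}[R(\tau)]$ and the threshold $\theta\coloneqq m_x+\sigma_{b,x}\sqrt{\varepsilon}$.

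\textbf{Exact representation.} Since $q_x>0$, we have $m_x\ge 0$, so $\theta\ge 0$. We then split into two cases.
\begin{itemize}
\item If $\sigma_{b,x}>0$, then $\theta>0$, and the event $\{R(\tau)\ge\theta\}$ forces $T(\tau)\le H$. On that set $R(\tau)\ge\theta$ is equivalent to $T(\tau)\le H+1-\theta$. Hence
\begin{equation*}
c_x(\varepsilon)=\doubleP_{\pi_b}\bigl[T(\tau)\le H+1-\theta\bigr],
\end{equation*}
and we may take $\eta(x,\varepsilon)\coloneqq (H+1-m_x-\sigma_{b,x}\sqrt{\varepsilon})/\mu_x$, using $0<\mu_x<\infty$. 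If this quantity is negative, both sides are $0$ because $T\ge 1$, so the identity still holds.
\item In the degenerate case $\sigma_{b,x}=0$, $R$ is almost surely constant. Then $c_x(\varepsilon)$ is $1$, or equals $\doubleP[T\le H]$, and we can again choose $\eta$ so that the identity holds; for instance $\eta=(H+1-m_x)/\mu_x$ works by the same monotonicity argument.
\end{itemize}
If instead $\mu_x=\infty$, meaning $x\notin\Xcal_\sharp$, we interpret the statement only on $\Xcal_\sharp$, which is where \cref{assumption:left_tail_T_tau} is in force.

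\textbf{Lower bound.} We first use monotonicity of the CDF of $T$. If $\eta\ge\bar\eta$, then
\begin{equation*}
c_x(\varepsilon)\ge\doubleP[T\le\bar\eta\,\mu_x].
\end{equation*}
This covers the case $\eta\ge 0$, since then $\bar\eta=\min\{\eta,z_0\}\le\eta$. Next we apply \cref{assumption:left_tail_T_tau} with $t=\bar\eta\mu_x$. This $t$ is admissible: $z_x=\bar\eta\in[0,z_0]$, and $t\le\mu_x$ because $z_0\le 1$. The assumption gives $\doubleP[T\le t]\ge c_-f(\bar\eta)$, where $c_-$ is the uniform constant from the assumption and is therefore independent of $x$. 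If $\eta<0$, then $\bar\eta=0$ and $f(0)=0$, so the bound $c_x(\varepsilon)\ge 0$ is trivial.

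\textbf{Main obstacle.} The main difficulty is bookkeeping rather than depth. We must handle the strict-versus-weak inequality and the $+1$ shift from the integer-valued reward, as well as the boundary cases $\theta=0$ and $\eta<0$. We also need to confirm that clipping to $[0,z_0]$ lands inside the domain where \cref{assumption:left_tail_T_tau} applies, which is exactly why $z_0\le 1$ and $t\le\mu_x$ are needed. We will state $\eta$ explicitly so that later sections can bound it in each regime of $\xi_x$.
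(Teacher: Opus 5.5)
Your proposal is correct and follows essentially the same route as the paper. You rewrite the superlevel event $\{R(\tau)\ge \doubleE[R(\tau)]+\sigma_{b,x}\sqrt{\varepsilon}\}$ as a sublevel event for $T(\tau)$, set $\eta(x,\varepsilon)$ to that threshold divided by $\mu_x$, and combine monotonicity of the CDF under the clipping $[\eta]_0^{z_0}$ with \cref{assumption:left_tail_T_tau}. The only difference is cosmetic: the paper caps the numerator as $\min\{H,\,H+1-\doubleE[R(\tau)]-\sigma_{b,x}\sqrt{\varepsilon}\}$, whereas you use the integrality of $T$ to absorb the case $\theta<1$; your explicit handling of the boundary cases is, if anything, more careful than the paper's.
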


We defer the full proof to \cref{proof_lemma:anticonc_T_tail}.
In our bi-level setting, higher rewards correspond to earlier solutions.
Consequently, the event where the reward exceeds its mean by $\sigma_{b,x} \sqrt{\varepsilon}$ is equivalent to solving the instance within a fraction $\eta(x,\varepsilon)$ of its mean time $\mu_x$.
\cref{assumption:left_tail_T_tau} then lower-bounds this probability by $c_-\,f(\bar\eta(x,\varepsilon))$.

We next analyze the scaling of $\sigma_{b,x}$ and $c_0$
across three regimes of the relative budget $\xi_x$.
For $\xi > 0$, we define the following subset of $\Xcal$ written as:
\begin{align}
  \mathcal{X}(\xi)
  \coloneqq \bigl\{x \,\bigm|\,
      \xi/2 \le \xi_x \le 2\xi
    \bigr\},
\end{align}
and the corresponding anti-concentration coefficient
\begin{align}
  c_0(\xi; \varepsilon)
  \coloneqq \inf_{x \in \mathcal{X}(\xi)} c_x(\varepsilon).
\end{align}
Accordingly, we define the objective conditioned on the relative budget slice $\mathcal{X}(\xi)$ as:
\begin{align}
    J_{\xi}(\pi) \coloneqq \doubleE_{x \sim \rho(\cdot | x \in \Xcal(\xi)), \tau \sim \pi(\cdot | x)}[R(\tau)].
\end{align}

\cref{thm:upper} establishes that the sample efficiency of RL is governed by the anti-concentration coefficient $c_0$.
To understand the impact of compute constraints, we now analyze the dependence of $c_0$ on the relative budget $\xi$.
The following theorem characterizes the asymptotic behavior of $c_0(\xi)$ across three distinct regimes, thereby determining the scaling of the sub-optimality gap.

\begin{figure*}[t]
  \centering
  \begin{minipage}[t]{0.65\textwidth}
  \vspace{0pt}
  \begin{subfigure}[t]{0.48\textwidth}
    \centering
    \includegraphics[height=3.9cm]{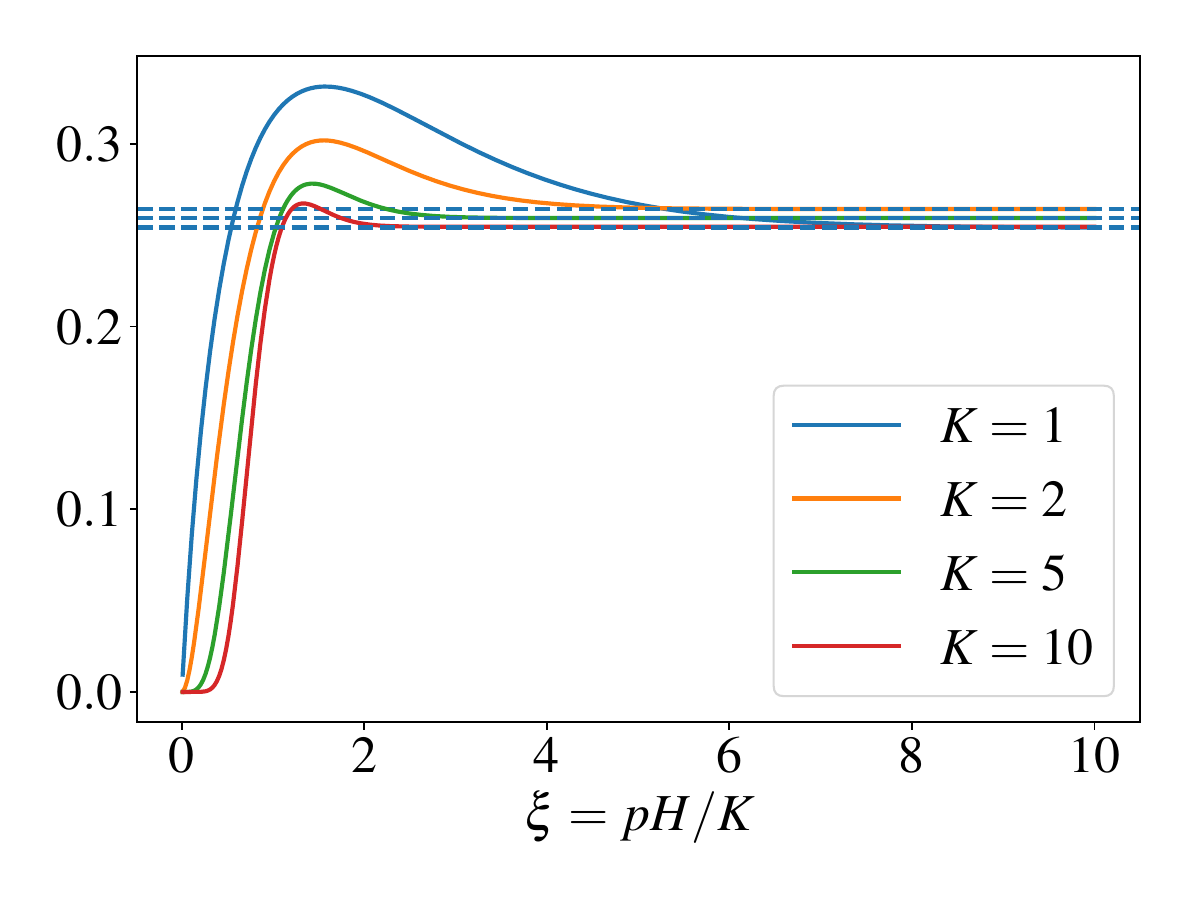}
    \caption{$\varepsilon=0.5$.}
    \label{fig:c0_eps_rl_1}
  \end{subfigure}
  \begin{subfigure}[t]{0.48\textwidth}
    \centering
    \includegraphics[height=3.9cm]{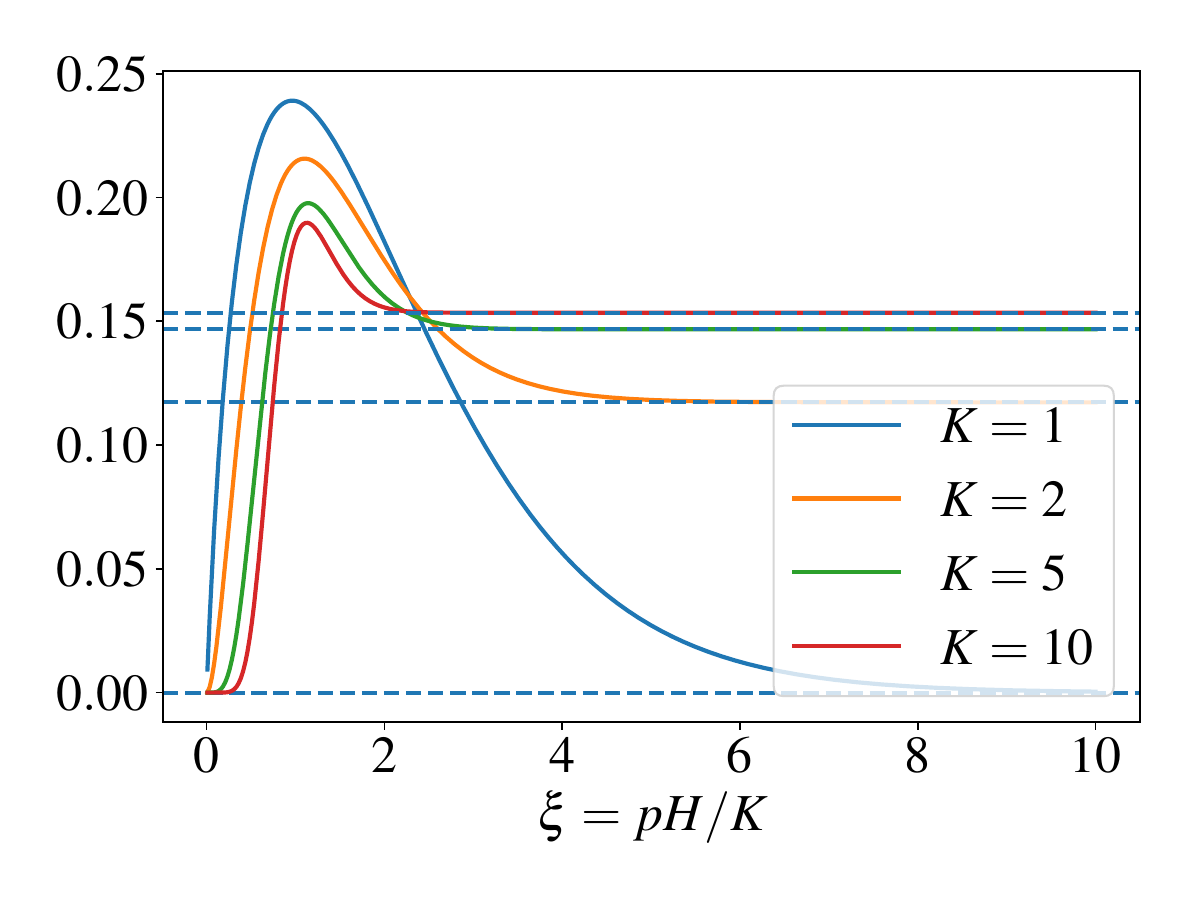}
    \caption{$\varepsilon=1.0$.}
    \label{fig:c0_eps_rl_2}
  \end{subfigure}
  \caption{Relations between $\xi$ and $c_0(K, \xi; \varepsilon)$ with different $K$ and $\varepsilon$.}
    \label{fig:c0_rl}
  \end{minipage}
  \begin{minipage}[t]{0.32\textwidth}
  \vspace{7pt}
    \centering
    \includegraphics[height=3.65cm]{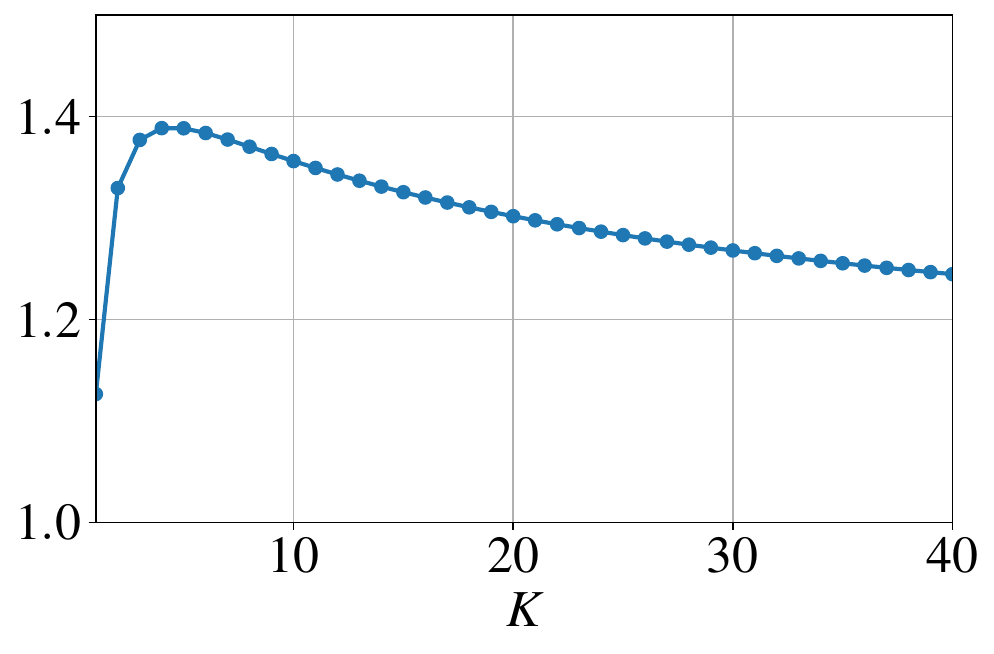}
    \vspace{-6pt}
  \caption{Optimal  relative budget $\xi^\star(K)$ depending on $K$.}
  \label{fig:optimal_xi}
  \end{minipage}
\end{figure*}

\begin{theorem}[Regime-dependent anti-concentration and an implication for RL]
\label{thm:rl_suboptimality}
Fix a relative-budget slice $\Xcal(\xi) \coloneqq \{x:\ \xi/2\le \xi_x \le 2\xi\}$.
Let $\hat\pi^{\mathrm{RL}}_n$ be a policy returned by the RL procedure  in \cref{thm:upper}.
With a $\chi^2$ trust-region radius $\kappa>0$, define $c_0(\xi; \kappa) \coloneqq \inf_{x\in \Xcal(\xi)} c_x(\kappa)$.
Provided that the trust region condition \eqref{eq:kappa_condition} is satisfied and that the comparator $\bar{\pi}_\kappa$ satisfies $\kappa_x \le \kappa$ for all $x \in \Xcal(\xi)$, the following bound holds with probability at least $1-\delta$:
\begin{align}
    J_{\xi}(\bar\pi_\kappa) - J_{\xi}(\hat\pi^{\mathrm{RL}}_n)
    \lesssim\ \frac{H\log(|\mathcal R|/\delta)}{c_0(\xi;\kappa)\,n}.
    \label{eq:suboptimality_c0}
\end{align}
Under Assumptions~\ref{ass:moments} and \ref{assumption:left_tail_T_tau}, the regime-dependent behavior of $c_0(\xi;\kappa)$ is:
\begin{enumerate}[itemsep=2pt, parsep=0pt, topsep=2pt, partopsep=0pt]
    \item (Deficient) $c_0(\xi;\kappa)=\Theta(f(\xi))$ as $\xi\to 0$.
    \item (Balanced) $c_0(\xi;\kappa)=\Theta(1)$ for $\xi\in[\xi_{\min},\xi_{\max}]$ if \cref{ass:balanced-nondeg-min} additionally holds and $\kappa$ is chosen so that $\kappa \le (q_{\min}/2)^2$.
    \item (Ample) $c_0(\xi;\kappa)=\Theta(1)$ as $\xi\to\infty$ for all $\kappa\le \bar{\kappa}$ for some constant $\bar{\kappa}\in(0,1)$.
\end{enumerate}
\end{theorem}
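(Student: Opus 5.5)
The bound \eqref{eq:suboptimality_c0} follows by applying \cref{thm:upper} to the restricted prompt distribution $\rho(\cdot\mid x\in\Xcal(\xi))$. Since $\kappa_x\le\kappa$ on $\Xcal(\xi)$ and $c_x(\varepsilon)$ is nonincreasing in $\varepsilon$ (the threshold $\mathbb{E}[R]+\sigma_{b,x}\sqrt\varepsilon$ rises with $\varepsilon$), we get $\min_x c_x(\kappa_x)\ge \inf_{x\in\Xcal(\xi)}c_x(\kappa)=c_0(\xi;\kappa)$. Substituting this lower bound for the anti-concentration coefficient gives the claim.

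The regime analysis rests on \cref{lemma:anticonc_T_tail}. The event $R\ge \mathbb{E}R+\sigma_{b,x}\sqrt\kappa$ is, by \eqref{eq:R_tau_x}, the event $T\le H+1-\mathbb{E}R-\sigma_{b,x}\sqrt\kappa$. Hence
\[
\eta(x,\kappa)=\frac{H+1-\mathbb{E}R-\sigma_{b,x}\sqrt\kappa}{\mu_x}.
\]
I will control $\mathbb{E}R=\mathbb{E}[(H-T+1)\mathbf 1\{T\le H\}]$ and $\sigma_{b,x}$ in each regime in units of $\mu_x$.

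\textbf{Deficient regime ($\xi\to0$).} Here $H\ll\mu_x$, so $q_x=\mathbb{P}[T\le H]=\Theta(f(\xi_x))$ by \cref{assumption:left_tail_T_tau}. Also $\mathbb{E}R\le Hq_x$ and $\sigma_{b,x}\le H\sqrt{q_x}$, so $\eta\le\xi_x$. This gives the upper bound $c_x\le c_+f(\xi_x)\lesssim f(\xi)$, using the doubling condition to pass from $\xi_x\in[\xi/2,2\xi]$ to $\xi$. For the lower bound, I need $\eta\ge c\,\xi_x$ for a constant $c$. Because $q_x\to0$, both $\mathbb{E}R/H\lesssim q_x$ and $\sigma_{b,x}\sqrt\kappa/H\lesssim\sqrt{\kappa q_x}$ vanish, so $\eta\ge\xi_x/2$ eventually. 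Doubling then gives $c_x\ge c_-f(\xi_x/2)\ge c_-D^{-1}f(\xi_x)=\Theta(f(\xi))$.

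\textbf{Balanced regime.} $H=\Theta(\mu_x)$, so by \cref{ass:moments} all quantities scale with $\mu_x$. The task is to show $\eta$ is bounded below by a positive constant while $c_x\ge$ const. I plan to bypass the left-tail function and argue directly. The condition \eqref{eq:kappa_condition} together with $\sqrt\kappa\le q_{\min}/2$ keeps the threshold $\mathbb{E}R+\sigma\sqrt\kappa$ strictly below the conditional-on-success reward range. The variance condition \eqref{eq:balanced-variance} gives $\mathbb{V}[R\mid T\le H]\ge c_v v_x=\Theta(\mu_x^2)$. Since $R\le H+1$ is bounded, a Paley–Zygmund argument on $R-\mathbb{E}R$ restricted to successes then yields $\mathbb{P}[R\ge \mathbb{E}R+\sigma\sqrt\kappa]\ge$ const. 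This uses $\sigma_{b,x}=\Theta(\mu_x)$, which follows from $q_x\in[q_{\min},1-q_{\min}]$ and $\mathbb{E}[R\mid\text{success}]=\Theta(\mu_x)$. I expect this step to be the main obstacle: the constants must be tracked so that $\sigma\sqrt\kappa$ with $\sqrt\kappa\le q_{\min}/2$ stays below the gap guaranteed by Paley–Zygmund. The upper bound $c_0\le1$ is trivial.

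\textbf{Ample regime ($\xi\to\infty$).} Here $q_x\to1$ by Chebyshev with $v_x=\Theta(\mu_x^2)$. Then $R=H+1-T$ up to an event of vanishing probability, so $\mathbb{E}R=H+1-\mu_x+o(H)$ and $\sigma_{b,x}=\Theta(\sqrt{v_x})=\Theta(\mu_x)$, with the truncation error controlled by the finite second moment. Consequently $\eta=1-\sqrt{\kappa}\sqrt{v_x}/\mu_x+o(1)$. I choose $\bar\kappa$ small enough that this exceeds a fixed constant below $1$, say $\eta\ge z_1\wedge z_0$ up to doubling. Then $c_x\ge c_-f(\bar\eta)\ge$ a positive constant, uniformly in $x$ because the constants in \cref{ass:moments} are uniform.
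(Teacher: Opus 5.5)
Your handling of the bound \eqref{eq:suboptimality_c0}, the deficient regime and the ample regime matches the paper's proof. For the bound you use monotonicity of $c_x(\cdot)$ in $\varepsilon$ plus \cref{thm:upper} on the slice. In the deficient case you actually derive $\eta(x,\kappa)=\Theta(\xi_x)$ from $\mathbb{E}R\le Hq_x$ and $\sigma_{b,x}\le H\sqrt{q_x}$, which the paper states without derivation.

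In the ample case, state the truncation error as $o(\mu_x)$, not $o(H)$; an $o(H)$ error is useless after dividing by $\mu_x=H/\xi_x$. What you need is $H+1-\mathbb{E}R\ge\mu_x-\mathbb{E}[T\,\mathbb{I}\{T>H\}]=\mu_x(1-O(1/\xi_x))$. Combine this with only the upper bound $\sigma_{b,x}=O(\mu_x)$, as the paper does. The two-sided claim $\sigma_{b,x}=\Theta(\sqrt{v_x})$ is not needed.

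The genuine gap is the balanced regime, where you propose to bypass \cref{assumption:left_tail_T_tau} via Paley--Zygmund. A Paley--Zygmund or first-moment bound only certifies mass above $\mathbb{E}R+t$ for offsets $t$ below $\mathbb{E}[(R-\mathbb{E}R)_+]$. The guaranteed size of that quantity is only of order $\sigma_{b,x}^2/H$. With $t=\sigma_{b,x}\sqrt{\kappa}$ you therefore need $\sqrt{\kappa}\lesssim\sigma_{b,x}/H$. That ratio is controlled by $c_v$, $\xi_{\max}$ and the constant in \cref{ass:moments}, not by $q_{\min}$. No tracking of constants turns $\sqrt{\kappa}\le q_{\min}/2$ into it.

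Here is a concrete failure case. Take $q_x=1/2$ and $T=2H$ on failure; given success, let $T=1$ with probability $\epsilon$ and $T=H$ otherwise. Every hypothesis your argument uses holds, with $c_v\asymp\epsilon$. Yet $\mathbb{E}[(R-\mathbb{E}R)_+]\lesssim\epsilon H$ while $\sigma_{b,x}\sqrt{\kappa}\asymp\sqrt{\kappa\epsilon}\,H$. So for $\epsilon\ll\kappa$ your bound is vacuous, although $c_x(\kappa)=\epsilon/2$. Noting that the threshold lies below the maximal reward does not help: you need a lower bound on the mass of short solutions. That is exactly what the left-tail assumption supplies, and it is a hypothesis of the theorem.

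The paper's argument runs as follows. Since $\mathbb{E}R\le q_xH\le(1-q_{\min})H$ and $\sigma_{b,x}\le H$, taking $\sqrt{\kappa}\le q_{\min}/2$ gives $\eta(x,\kappa)\mu_x\ge\tfrac{q_{\min}}{2}H$. Hence $\eta(x,\kappa)\ge\tfrac{q_{\min}}{2}\xi_x\ge\tfrac{q_{\min}}{4}\xi_{\min}$ on the slice. It follows that $c_x(\kappa)\ge\mathbb{P}[T\le\tfrac{q_{\min}}{2}H]\ge c_-f(\min\{q_{\min}\xi_{\min}/4,z_0\})>0$. Positivity comes from the doubling condition and $f(z_1)>0$. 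This is precisely why the radius is $(q_{\min}/2)^2$.

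If you insist on avoiding the left tail, a finer second-moment accounting does work, though it is not Paley--Zygmund. Let $m=\mathbb{E}[R\mid T\le H]$ and $\theta=\mathbb{E}R+\sigma_{b,x}\sqrt{\kappa}$. If $\theta\le m$, use $\mathbb{P}[R\ge m\mid T\le H]\ge\tilde v_x/(2H^2)$. If $\theta>m$, the atom at $0$ gives $\theta-m\le\sqrt{\kappa q_x\tilde v_x}-\tfrac34(1-q_x)m$. Splitting $\tilde v_x$ over $\{R<\theta\}$ and $\{R\ge\theta\}$ then yields $\tilde v_x\le 2H^2\,\mathbb{P}[R\ge\theta\mid T\le H]+\tfrac14\tilde v_x$. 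In both cases $c_x(\kappa)\gtrsim q_{\min}c_v v_x/H^2$, which is bounded below by a constant.
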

The proof is in \cref{appendix:proof_thm:rl_suboptimality}.
This theorem links RL performance to the relative budget $\xi$, offering a unified view of reasoning efficiency.
Crucially, in the balanced-to-ample regimes, where the compute budget $H$ is comparable to or exceeds the expected time-to-solution $\mu_x$, the anti-concentration coefficient remains strictly bounded away from $0$; that is, $c_0(\xi; \kappa) = \Theta(1)$.
This ensures that the sample complexity remains tractable, with error scaling linearly with the horizon $H$.
In contrast, in the deficient relative budget regime, learning efficiency degrades according to the solution's tail probability $f(\xi)$, quantifying the difficulty of learning from rare successful traces.

Our theoretical analysis also identifies a fundamental limitation: applying RL in the deficient regime ($\xi \ll 1$) is computationally prohibitive due to vanishing anti-concentration.
From the perspective of our framework, standard practices such as preliminary SFT or distillation can be interpreted as mechanisms that shift tasks into the balanced-to-ample regimes ($\xi \gtrsim 1$) by reducing the effective task difficulty $\mu_x$, thereby complementing the increase of compute budgets $H$.

\subsection{Concrete Analysis with Gamma Distribution}
\label{sec:gamma_statistics}

To provide concrete estimates, we employ the gamma distribution to model $T(\tau)$.
This choice satisfies our assumptions and offers mathematical tractability.
While reasoning steps are discrete, a continuous relaxation is justified by the large token counts in complex tasks; indeed, the discrete Negative Binomial distribution converges to the gamma distribution in the limit of low per-step success probability.

We characterize problems by the number of required insights $K \in \doubleZ_+$ and a success probability per step $p \in [0, 1]$.
Specifically, we assume $T(\tau) \mid x \sim \mathsf{Gamma}(K, p)$ with shape $K$ and rate $p$, yielding an expected time $\doubleE[T(\tau)] = K/p$.
For analytical convenience, we use the proxy reward $R(\tau) \coloneqq \max\{0, H-T(\tau)\}$, which differs from the original objective by at most an additive constant on successful trajectories.
This model generalizes the theoretical framework of \citet{kim2025metastable} and aligns with our empirical observations in \Cref{appendix:token_dist}.
Under this model, we quantitatively characterize the reward variance and anti-concentration.

\begin{proposition}[Statistics under Gamma model]
\label{prop:gamma_statistics}
Let $\gamma(\cdot, \cdot)$ denote the regularized lower incomplete gamma function.
Although the exact expressions involve incomplete gamma functions and are deferred to \cref{proof_lemma:sigma_RL_gamma,proof_lemma:anti-concentration-gamma}, we identify closed-form coefficients $C_\mathrm{RL}(K,\xi)$ and $\psi(K,\xi,\varepsilon)$ that characterize the scaling.
Specifically, the standard deviation $\sigma_b$ and anti-concentration coefficient $c$ satisfy:
\begin{align}
    \sigma_b(K,\xi) &= C_{\mathrm{RL}}(K,\xi)H \cdot (K\xi)^{-1}, \\
    c(K,\xi,\varepsilon) &= \gamma \bigl(K, K\xi \cdot [\psi(K,\xi,\varepsilon)]_0^1 \bigr).
\end{align}
\end{proposition}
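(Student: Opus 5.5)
The plan is a direct computation under $T\sim\mathsf{Gamma}(K,p)$ with $p=K\xi/H$, so that $\mu=K/p=H/\xi$. Work with the proxy reward $R=(H-T)_+$. The natural scale is $1/p = H/(K\xi)$, and every moment will be $H(K\xi)^{-1}$ times a dimensionless function of $(K,\xi)$. Write $u\coloneqq pH=K\xi$ and use the regularized lower incomplete gamma $\gamma(a,u)=\Gamma(a)^{-1}\int_0^u s^{a-1}e^{-s}ds$.

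First, the variance. Substitute $s=pt$ to get
\begin{align*}
\doubleE[R]=\frac{1}{p}\,\doubleE\bigl[(u-S)_+\bigr],\qquad \doubleE[R^2]=\frac{1}{p^2}\,\doubleE\bigl[(u-S)_+^2\bigr],
\end{align*}
where $S\sim\mathsf{Gamma}(K,1)$. Then apply the identity $\doubleE[S^j\mathbf 1\{S\le u\}]=\frac{\Gamma(K+j)}{\Gamma(K)}\gamma(K+j,u)$ for $j=0,1,2$. This gives
\begin{align*}
m_1(K,\xi)&=u\gamma(K,u)-K\gamma(K+1,u),\\
m_2(K,\xi)&=u^2\gamma(K,u)-2uK\gamma(K+1,u)+K(K+1)\gamma(K+2,u).
\end{align*}
Hence $\sigma_b = p^{-1}\sqrt{m_2-m_1^2}=H(K\xi)^{-1}C_{\mathrm{RL}}(K,\xi)$ with $C_{\mathrm{RL}}\coloneqq\sqrt{m_2-m_1^2}$. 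This is the first claim, and it is exact; no asymptotics are needed.

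Second, the anti-concentration. Following the same reduction as in \cref{lemma:anticonc_T_tail}, the event $R\ge \doubleE[R]+\sigma_b\sqrt\varepsilon$ is $\{T\le H-\doubleE R-\sigma_b\sqrt\varepsilon\}$ when the right side is positive, since $R$ is decreasing in $T$ on $\{T\le H\}$. In scaled units, set
\begin{align*}
\psi(K,\xi,\varepsilon)\coloneqq 1-\frac{m_1+C_{\mathrm{RL}}\sqrt\varepsilon}{K\xi}.
\end{align*}
The threshold is then $T\le H\psi$, i.e. $S\le K\xi\,\psi$. Therefore $c=\doubleP[S\le K\xi[\psi]_0^1]=\gamma(K,K\xi[\psi]_0^1)$.

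The clipping needs a case split. If $\psi\le0$ the event is essentially empty, which matches $\gamma(K,0)=0$. The event $\{R=0\}$ cannot satisfy the inequality, since $\doubleE R+\sigma_b\sqrt\varepsilon>0$ whenever $\sigma_b>0$. The bound $\psi\le1$ holds automatically because $m_1\ge0$, so the upper clip is only for safety.

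The main obstacle is bookkeeping rather than conceptual. I need to check that the atom of $R$ at $0$ is handled correctly, so that the event matches a strict versus non-strict inequality, which is irrelevant for a continuous $T$. I also need to verify that $m_2-m_1^2\ge0$ in the stated incomplete-gamma form, which is immediate from its definition as a variance. The closed forms are then exactly those deferred to the appendix.
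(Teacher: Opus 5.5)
Your proposal is correct and takes the same route as the paper: compute the truncated gamma moments via $\doubleE[S^j\mathbf{1}\{S\le u\}]=\frac{\Gamma(K+j)}{\Gamma(K)}\gamma(K+j,u)$ to get $C_{\mathrm{RL}}^2=p^2\doubleV[R]$ with $pH=K\xi$, then rewrite the anti-concentration event as $T\le H\psi$, where your $\psi=1-(m_1+C_{\mathrm{RL}}\sqrt{\varepsilon})/(K\xi)$ is exactly the paper's $1-\gamma(K,K\xi)+\xi^{-1}\gamma(K+1,K\xi)-C_{\mathrm{RL}}\sqrt{\varepsilon}/(K\xi)$. Your explicit handling of the atom at $R=0$, the $\psi\le 0$ case, and the redundancy of the upper clip (since $m_1\ge 0$) are small refinements that the paper leaves implicit.
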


The proof is provided in \cref{proof_lemma:sigma_RL_gamma,proof_lemma:anti-concentration-gamma}.
Notably, these explicit forms allow us to analyze the asymptotic behavior and optimal budget allocation.
\begin{remark}[Asymptotics]
\label{remark:asymptotics}
As $\xi\to\infty$, $C_{\mathrm{RL}}(K,\xi) \to \sqrt{K}$ holds and the anti-concentration coefficient converges to
$\lim_{\xi\to\infty} c(K,\xi,\varepsilon) = \gamma (K,\,K-\sqrt{\varepsilon K})$.
For a fixed $\varepsilon$, the coefficient does \emph{not} vanish with large $\xi$ unless $\varepsilon\ge K$, ensuring stable learning in the ample budget regime.
\end{remark}

\Cref{fig:c0_rl} illustrates the relation between the relative budget $\xi = pH/K$ and the coefficient $c_0$.
Consistent with \cref{remark:asymptotics}, $c_0$ remains strictly positive in the regime of $\xi = \Omega(1)$.
Furthermore, numerical analysis of $\sigma_b(K, \xi)$ reveals a unique global maximizer $\xi^\star(K)$.
As shown in \Cref{fig:optimal_xi}, the optimal relative budget falls in the range $1.0 \le \xi^\star(K) \le 1.4$ for various $K$.
This implies that the sub-optimality gap is minimized when the compute budget $H$ is set slightly larger than the expected time to correct solution. 
For more details, see~\cref{optimal_opt_xi}.

\subsection{Theoretical Comparison: RL vs. SFT}
\label{sec:sft_comparison}

Contrasting our findings with verifier-free SFT reveals a critical trade-off.
As established by \citet{setlurscaling}, the sub-optimality gap of SFT is lower-bounded by the \textit{policy heterogeneity} conditioned on success, denoted as $\sigma_b^{\mathrm{SFT}}(\xi)$.
Formally, its sub-optimality gap scales as $\Omega( \sigma_b^{\mathrm{SFT}}(\xi)/\sqrt{\bar{n}})$, where $\bar{n}$ is the number of successful solution traces.
This implies that learning is impeded when the successful demonstrations are diverse and lack a coherent strategy.

Incorporating the relative budget framework reveals a sharp contrast in the balanced regime ($\xi \approx 1$).
While RL achieves peak sample efficiency due to stable anti-concentration, SFT faces its worst-case scenario.
Our analysis in \cref{appendix:sft} shows that $\sigma_b^{\mathrm{SFT}}(\xi)$ is maximized at $\xi = \Theta(1)$.
Unlike the deficient or ample regimes where solution variance is low, the balanced regime yields a maximally heterogeneous mix of ``lucky'' short solutions and those barely fitting the budget.
Thus, the ``sweet spot'' for verifier-based RL simultaneously presents the hardest theoretical challenge for SFT.

\section{Dynamics of Online RL}
\label{sec:online_RL}

We now extend our analysis from the static setting to the dynamics of learning.
We consider an \textit{iterative on-policy} setting: at each iteration $i$, the agent collects a fresh dataset using the current policy $\pi^{(i)}$ and updates it to $\pi^{(i+1)}$.
This protocol, which discards historical samples, allows us to rigorously track the evolution of the relative budget $\xi_i$ and quantify the sample complexity required for monotonic policy improvement across different regimes.

\subsection{Sample Complexity for Monotonic Improvement}

Let $m \in \doubleZ_+$ be the total number of iterations.
We initialize the process with the base policy $\pi^{(0)} \coloneqq \pi_b$ and consider the learning dynamics over iterations $i = \{0, 1, \dots, m-1\}$.
For any instance $x$, we denote the expected time-to-solution and the instance-specific relative budget under the current policy $\pi^{(i)}$ as
$\mu_x^{(i)} \coloneqq \doubleE_{\tau \sim \pi^{(i)}(\cdot\mid x)}[T(\tau)]$
and
$\xi_x^{(i)} \coloneqq H/\mu_x^{(i)}$, respectively.
Fix a target relative budget $\xi>0$ and define the corresponding slice of inputs
$\Xcal(\xi) \coloneqq \{x: \xi/2 \le \xi_x^{(i)} \le 2\xi\}$
together with the slice-conditioned input distribution
$\rho_\xi(x) \coloneqq \rho (x \mid x\in \Xcal(\xi))$.
In the following analysis, expectations are taken over $x \sim \rho_\xi$ and $\tau \sim \pi^{(i)}(\cdot \mid x)$ unless stated otherwise.
Accordingly, we define the representative relative budget for this slice as
\begin{equation}
    \xi_{i} \coloneqq H / \doubleE_{x \sim \rho_\xi}[\mu_x^{(i)}].
\end{equation}
To ensure the validity of our aggregate analysis, we introduce a homogeneity assumption for problems within the same relative budget slice. This implies that the reward variance is primarily determined by the relative budget regime and is approximately constant across inputs within the slice.

\begin{assumption}[Homogeneity within relative budget slice]
\label{assump:homogeneity}
For each iteration $i$, we assume reward homogeneity within $\Xcal(\xi)$, implying that for all $x \in \Xcal(\xi)$, the conditional variance is approximated by the slice-level statistic.
\end{assumption}

Accordingly, we define the representative standard deviation of the cumulative reward achieved by the current policy $\pi^{(i)}$ as the mean conditional standard deviation: $\sigma(\pi^{(i)}) \coloneqq \doubleE_{x\sim\rho_\xi}\left[\sqrt{
\doubleV_{\tau\sim\pi^{(i)}(\cdot\mid x)}[R(\tau)]}\right]$.

Building on this formalism, we now establish a finite-sample guarantee for a single iteration.
The following theorem characterizes the policy improvement by updating from $\pi^{(i)}$ to $\hat{\pi}^{(i+1)}$ using $n_i$ sample rollouts drawn from $\pi^{(i)}$.

\begin{theorem}[One-step improvement by RL]
\label{thm:one-step-RL}
Fix an iteration index $i$.
Let $\pi^{(i)}$ denote the current base policy with relative budget $\xi_{i}$, and let $\kappa_i$ be the trust region radius (cf. \cref{def:anticonc}).
Under Assumption~\ref{assump:homogeneity}, let $\hat{\pi}^{(i+1)}$ be the policy obtained by the verifier-based RL update specified in \cref{thm:upper} with a trust region radius $\kappa_i$, using $n_{i}$ sample rollouts from $\pi^{(i)}$.
Then, with probability at least $1-\delta$,
\begin{equation*}
    J_{\xi_i}\big(\hat{\pi}^{(i+1)}\big)
    \ge J_{\xi_i}\big(\pi^{(i)}\big)
    + \sqrt{\kappa_{i}}\,\sigma(\pi^{(i)})
    - \frac{CH\log(|\Rcal|/\delta)}{c_0(\xi_{i}; \kappa_{i})\,n_i}.
\end{equation*}
Moreover, if the sample size $n_i$ is sufficiently large such that
\begin{equation}
  n_i \ge \frac{2CH\log(|\mathcal{R}|/\delta)}{\sqrt{\kappa_i} \, c_0(\xi_{i}; \kappa_{i}) \, \sigma(\pi^{(i)})},
  \label{eq:estimator-n}
\end{equation}
then setting the next base policy to $\pi^{(i+1)} \coloneqq \hat{\pi}^{(i+1)}$ yields
\begin{equation}
    J_{\xi_i} \big(\pi^{(i+1)}\big) \ge J_{\xi_i}\big(\pi^{(i)}\big)
    + \frac{1}{2}\sqrt{\kappa_i}\,\sigma(\pi^{(i)}).
  \label{eq:estimator-improve}
\end{equation}
Here, $C$ is a universal constant implied by Lemma~\ref{thm:upper}, and $c_0(\xi_{i}; \kappa_{i})$ is the anti-concentration coefficient for $\pi^{(i)}$.
\end{theorem}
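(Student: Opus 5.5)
The argument splits into a comparator-improvement bound and a sample-error bound. Treat $\pi^{(i)}$ as the base policy in \cref{thm:upper}, restricted to the slice $\Xcal(\xi)$ with the slice-conditioned distribution $\rho_\xi$. Then write
\begin{align*}
J_{\xi_i}(\hat\pi^{(i+1)}) - J_{\xi_i}(\pi^{(i)})
&= \bigl[J_{\xi_i}(\bar\pi_{\kappa_i}) - J_{\xi_i}(\pi^{(i)})\bigr]\\
&\quad - \bigl[J_{\xi_i}(\bar\pi_{\kappa_i}) - J_{\xi_i}(\hat\pi^{(i+1)})\bigr],
\end{align*}
where $\bar\pi_{\kappa_i}$ is the best policy in the $\chi^2$ trust region of radius $\kappa_i$ around $\pi^{(i)}$.

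The second bracket is bounded directly by \cref{thm:upper}, in the slice form of \cref{thm:rl_suboptimality} with $\pi_b$ replaced by $\pi^{(i)}$. With probability $1-\delta$ it is at most $CH\log(|\Rcal|/\delta)/(c_0(\xi_i;\kappa_i)n_i)$, and this is where the constant $C$ comes from.

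The first bracket requires a lower bound on comparator improvement, and I expect it to be the main obstacle. The natural witness is the exponential-tilting-type policy
\[
\pi_x(\tau) = \pi^{(i)}(\tau\mid x)\bigl(1+\sqrt{\kappa_i}\,(R(\tau)-\mathbb{E}R)/\sigma_x\bigr),
\]
where $\sigma_x$ is the instance-wise reward standard deviation. Its $\chi^2$ divergence to $\pi^{(i)}(\cdot\mid x)$ is exactly $\kappa_i$, and its value gain on $x$ is $\mathrm{Cov}(R, R-\mathbb{E}R)\sqrt{\kappa_i}/\sigma_x = \sqrt{\kappa_i}\,\sigma_x$. Averaging over $x\sim\rho_\xi$ then gives $\sqrt{\kappa_i}\,\mathbb{E}_{\rho_\xi}[\sigma_x] = \sqrt{\kappa_i}\,\sigma(\pi^{(i)})$.

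The delicate point is that this witness must be a valid distribution. Nonnegativity requires $\sqrt{\kappa_i}\,(\mathbb{E}R - R)\le \sigma_x$, and since $R\ge0$ this reduces to $\sqrt{\kappa_i}\,\mathbb{E}R\le\sigma_x$. This is precisely the trust-region condition \eqref{eq:kappa_condition}, which I will assume for $\pi^{(i)}$. It is also where \cref{assump:homogeneity} is used: homogeneity lets me use the same $\kappa_i$ for every $x$ in the slice, so the average constraint defining $\Pi_{\kappa_i}$ holds. Without it, instance-wise radii would differ and the gain would be a weighted average rather than $\sigma(\pi^{(i)})$. Since $\bar\pi_{\kappa_i}$ maximizes $J$ over $\Pi_{\kappa_i}$, its improvement dominates the witness's, which gives the first bracket $\ge\sqrt{\kappa_i}\,\sigma(\pi^{(i)})$. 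Combining the two brackets proves the first inequality.

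For the second claim, the sample-size condition \eqref{eq:estimator-n} is equivalent to
\[
\frac{CH\log(|\Rcal|/\delta)}{c_0(\xi_i;\kappa_i)\,n_i}\le\tfrac12\sqrt{\kappa_i}\,\sigma(\pi^{(i)}).
\]
Substituting this into the first bound yields \eqref{eq:estimator-improve}. Setting $\pi^{(i+1)}=\hat\pi^{(i+1)}$ is just notation.
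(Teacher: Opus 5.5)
Your proof is correct and follows the same skeleton as the paper. Both write the change in value as comparator gain minus the finite-sample error from \cref{thm:upper}, then choose $n_i$ so the error is at most half the gain. The difference is in the comparator step. The paper imports it as an identity, $J_{\xi_i}(\bar\pi_\kappa)=J_{\xi_i}(\pi^{(i)})+\sqrt{\kappa_i}\,\sigma(\pi^{(i)})$, from Lemma~A.19 of \citet{setlurscaling}. You instead prove the one-sided bound you actually need, by exhibiting the linear $\chi^2$ tilt as a feasible point of $\Pi_{\kappa_i}$ (it is a first-order tilt, not an exponential one). Your version is self-contained and needs only ``$\ge$''; the matching upper bound in the cited lemma plays no role in the conclusion. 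It also shows explicitly that \eqref{eq:kappa_condition}, imposed at radius $\kappa_i$, is exactly the nonnegativity condition for the tilt.

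One correction: homogeneity is not what makes your witness work. Giving every $x$ radius $\kappa_i$ meets the averaged constraint automatically. The resulting gain is $\sqrt{\kappa_i}\,\doubleE_{x\sim\rho_\xi}[\sigma_x]=\sqrt{\kappa_i}\,\sigma(\pi^{(i)})$ by the definition of $\sigma(\pi^{(i)})$ as a mean standard deviation, whatever the spread of $\sigma_x$.

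\cref{assump:homogeneity} genuinely enters through the maximizer $\bar\pi_{\kappa_i}$. Under an averaged budget it can gain up to $\sqrt{\kappa_i\,\doubleE_{\rho_\xi}[\sigma_x^2]}\ge\sqrt{\kappa_i}\,\doubleE_{\rho_\xi}[\sigma_x]$ by shifting divergence toward high-variance instances. Constancy of $\sigma_x$ is what makes the optimal allocation uniform, $\kappa_x=\kappa_i$. That uniformity gives the paper its equality. More relevant to your argument, it also justifies the hypothesis $\kappa_x\le\kappa_i$ on the slice, which you need to invoke the slice form of the bound with the constant $c_0(\xi_i;\kappa_i)$. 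You use that hypothesis without checking it; the paper does not check it either.
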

The proof relies on decomposing the performance improvement into two competing terms: a theoretical improvement component and a stochastic estimation error.
The theorem is established by determining the minimum sample size $n_i$ required for the improvement signal to dominate the error, thereby guaranteeing a monotonic increase in the objective.
The full proof is detailed in \cref{proof_thm:one-step-RL}.

\subsection{Role of Relative Budget in Learning Dynamics}

Applying the finite-sample guarantee from \cref{thm:one-step-RL} to the regimes defined in \cref{sec:rl}, we derive the minimum sample complexity $n_i$ for monotonic policy improvement. 
Specifically, substituting the scaling laws for reward standard deviation $\sigma$ and anti-concentration $c_{0}(\xi; \kappa_i)$ yields the following regime-dependent complexity bounds.

\begin{theorem}[Three regimes of relative budget in online RL]
\label{thm:three-regimes_onlineRL}
Consider an online RL algorithm satisfying the one-step improvement guarantee of \cref{thm:one-step-RL}.
Then the attainable per-iteration improvement and the sufficient sample complexity $n_i$ required for monotonic policy improvement exhibit three distinct relative budget regimes:

\begin{enumerate}[itemsep=0pt, parsep=0pt, topsep=1pt, partopsep=0pt]
    \item (Deficient: $\xi_i \ll 1$)
        The improvement is bounded by $\mathcal{O}(H \sqrt{\kappa_i f(\xi_i)})$, with diverging sample complexity:
        \begin{align}
            n_i = \widetilde{\Omega}
                \left(
                    \kappa_{i}^{-1/2} (f(\xi_{i}))^{-3/2}
                \right).
        \end{align}
    \item (Balanced: $\xi_i = \Theta(1)$)
        The improvement is
        $\mathcal{O}(\sqrt{\kappa_i} H)$,
        and the sufficient sample size is
        \begin{align}
            n_i = \widetilde{\Omega}\left(1/\sqrt{\kappa_{i}}\right).
        \end{align}
    \item (Ample: $\xi_i \gg 1$)
        The improvement decays to
        $\mathcal{O}(\sqrt{\kappa_i} H / \xi_i)$, and the sample complexity scales as
        \begin{align}
            n_i = \widetilde{\Omega}\left(\xi_{i}/\sqrt{\kappa_{i}}\right).
        \end{align}
\end{enumerate}
\end{theorem}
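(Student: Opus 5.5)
The plan is to plug regime-wise scalings for the two quantities in \cref{thm:one-step-RL}, namely $\sigma(\pi^{(i)})$ and $c_0(\xi_i;\kappa_i)$, into the improvement term $\tfrac12\sqrt{\kappa_i}\,\sigma(\pi^{(i)})$ and the sample-size condition \eqref{eq:estimator-n}. \cref{thm:rl_suboptimality} already gives $c_0$: it is $\Theta(f(\xi_i))$ in the deficient regime and $\Theta(1)$ in the balanced and ample regimes. What remains is to show that the reward standard deviation behaves as
\begin{align*}
\sigma \asymp H\sqrt{f(\xi)} \ (\xi\to 0),\qquad \sigma\asymp H\ (\xi=\Theta(1)),\qquad \sigma \asymp H/\xi=\mu\ (\xi\to\infty),
\end{align*}
applied per instance via \cref{ass:moments,assumption:left_tail_T_tau,ass:balanced-nondeg-min} and then aggregated using \cref{assump:homogeneity}, so that the slice average $\sigma(\pi^{(i)})$ inherits the instance-level scaling with $\xi_x^{(i)}\in[\xi_i/2,2\xi_i]$.

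\textbf{Computing $\sigma$ per regime.} By \eqref{eq:R_tau_x}, $R=(H-T+1)\mathbf 1\{T\le H\}$.
\begin{itemize}
\item \emph{Ample.} On $T\le H$ the variance of $R$ equals $\doubleV[T\mid T\le H]$, which tends to $v_x=\Theta(\mu_x^2)$ as $H/\mu_x\to\infty$. This needs finite second moments and $q_x\to1$ via Chebyshev. The contribution of the failure event is $O(H^2(1-q_x))$, and Chebyshev only gives $1-q_x\le v_x/H^2$, which is exactly the borderline. I would rather bound $\doubleE[R^2\mathbf 1\{T>H\}]=0$ directly and control the mean shift $H(1-q_x)$ using $\doubleE[T\mathbf 1\{T>H\}]\to0$ (dominated convergence). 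This gives $\sigma=\Theta(\mu_x)=\Theta(H/\xi)$.
\item \emph{Balanced.} The law of total variance gives $\doubleV[R]\ge q_x\,\doubleV[T\mid T\le H]\ge q_{\min}c_v v_x=\Theta(\mu_x^2)=\Theta(H^2)$. The upper bound $\doubleV[R]\le H^2$ is trivial.
\item \emph{Deficient.} Here $R\le H$, so $\doubleV[R]\le H^2\,\doubleP[T\le H]\le c_+H^2 f(\xi)$. For the lower bound, use $\doubleP[T\le H/2]\ge c_-f(\xi/2)\ge c_-D^{-1}f(\xi)$ by the doubling condition. On that event $R\ge H/2$, while $\doubleE R\le c_+Hf(\xi)\to0$, so $\doubleV[R]\gtrsim H^2 f(\xi)$.
\end{itemize}

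\textbf{Substitution.} The improvement term $\sqrt{\kappa_i}\,\sigma$ becomes $O(H\sqrt{\kappa_i f(\xi_i)})$, $O(\sqrt{\kappa_i}H)$, and $O(\sqrt{\kappa_i}H/\xi_i)$ in the three regimes. The sample-size condition \eqref{eq:estimator-n} then reads $n_i\gtrsim H\log(|\Rcal|/\delta)\big/\bigl(\sqrt{\kappa_i}\,c_0\,\sigma\bigr)$. The factor $H$ cancels against the $H$ in $\sigma$, and this yields
\begin{itemize}
\item $\kappa_i^{-1/2}f(\xi_i)^{-1}f(\xi_i)^{-1/2}=\kappa_i^{-1/2}f(\xi_i)^{-3/2}$ in the deficient regime;
\item $\kappa_i^{-1/2}$ in the balanced regime;
\item $\xi_i\kappa_i^{-1/2}$ in the ample regime.
\end{itemize}
The $\log$ factor is absorbed into $\widetilde\Omega$.

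\textbf{Side conditions.} The $\kappa_i$ restrictions from \cref{thm:rl_suboptimality}, namely $\kappa_i\le(q_{\min}/2)^2$ and $\kappa_i\le\bar\kappa$, must be imposed. In the deficient regime, the trust-region condition \eqref{eq:kappa_condition} requires $\sqrt{\kappa_i}\,\doubleE R\le\sigma$. This holds there because $\doubleE R\lesssim H f$ while $\sigma\gtrsim H\sqrt f$.

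\textbf{Main obstacle.} I expect the ample-regime variance asymptotics to be the hardest step. The Chebyshev tail is not strong enough to kill the $H^2(1-q_x)$ contribution at the required rate, so I would try to use the sub-Weibull-type tail suggested after \cref{ass:moments}, or the uniform integrability of $T^2$, to get $H^2\doubleP[T>H]=o(\mu_x^2)$. If uniformity over $x$ fails, I would state the ample bound as an upper bound $\sigma\lesssim\mu_x$ only. That suffices for the improvement claim, and the sample-complexity claim is then a sufficient rate using $\sigma\gtrsim\mu_x$ from $\doubleV[T\mid T\le H]\ge c\,v_x$ for large $\xi$.
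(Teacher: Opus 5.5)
Your route is the paper's. It also substitutes the regime-wise scalings of $\sigma(\pi^{(i)})$ and $c_0(\xi_i;\kappa_i)$ from \cref{prop:low_criticality,prop:middle_criticality,prop:high_criticality} into the improvement term and into the threshold \eqref{eq:estimator-n}.

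The difference is that the paper uses only the \emph{upper} bounds $\sigma=\Ocal(H\sqrt{f(\xi_i)})$, $\Ocal(H)$ and $\Ocal(H/\xi_i)$, and that is all the statement needs. The improvement claims are $\Ocal(\cdot)$. The paper reads each $\widetilde{\Omega}$ as a lower bound on the right-hand side of \eqref{eq:estimator-n}, which is decreasing in $c_0\sigma$. Your matching lower bounds are correct: the doubling argument $\doubleP[T\le H/2]\ge c_-D^{-1}f(\xi)$ in the deficient regime, and $q_{\min}c_v v_x$ in the balanced one. They additionally give the converse reading, that $n_i$ of the stated order also suffices, which the paper does not establish in the deficient regime.

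Your ``main obstacle'' is therefore not one. For the ample-regime upper bound, Markov gives $(1-q_x)(H+1)^2\le \doubleE[T^2](H+1)^2/H^2=\Ocal(\mu_x^2)$, and also $q_x\tilde v_x\le \doubleE[T^2]=\Ocal(\mu_x^2)$. Hence $\sigma=\Ocal(\mu_x)=\Ocal(H/\xi_x)$, exactly as in the paper. The failure term only needs to be $\Ocal(\mu_x^2)$, not $o(\mu_x^2)$. Dominated convergence would in any case be pointwise in $x$, not uniform over the slice.

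Your closing sentence also has the direction reversed. The $\widetilde{\Omega}(\xi_i/\sqrt{\kappa_i})$ claim follows from $\sigma\lesssim\mu_x$, which you already have. The bound $\sigma\gtrsim\mu_x$ would instead give the converse $\widetilde{\Ocal}$ bound on the sufficient $n_i$.

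If you want that converse, you need neither a sub-Weibull tail nor an extra conditional-variance assumption, because \cref{assumption:left_tail_T_tau} already gives it uniformly in $x$:
\begin{itemize}
\item $\tilde\mu_x\ge \mu_x-\doubleE[T\,\mathbb{I}\{T>H\}]\ge \mu_x-\doubleE[T^2]/H=\mu_x(1-\Ocal(1/\xi_x))$.
\item $\doubleP[T\le z_0\mu_x/2]\ge c_-f(z_0/2)>0$, using positivity of $f$ on $(0,z_0]$.
\item So for large $\xi_x$, $\tilde v_x\ge c_-f(z_0/2)(\mu_x/4)^2$, and hence $\sigma_{b,x}^2\ge q_x\tilde v_x=\Omega(\mu_x^2)$.
\end{itemize}
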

Learning is most sample-efficient in the balanced regime, while both insufficient and excessive relative budgets lead to reduced improvement efficiency, either through vanishing learning signal or increasing sample cost.
Note that, in the ample regime, ensuring monotonic improvement typically requires scaling the trust region as $\sqrt{\kappa_i} = \Ocal(\xi_i^{-1})$, which results in an effective improvement rate of $\Ocal(H/\xi_i^2)$ and a quadratic degradation in sample complexity: $n_i = \widetilde{\Omega}(\xi_{i}^2)$.

\subsection{Linear Budget Growth under Gamma Model}
\label{sec:online_RL_gamma}

Finally, we adopt the same assumption as \cref{sec:gamma_statistics} where the time-to-solution of a policy follows a gamma distribution.
\cref{thm:online_rl} guarantees under this model that the relative budget $\xi_i$ grows linearly over iterations, while the expected return $J(\pi^{(i)})$ converges asymptotically to the token budget $H$.\footnote{As the gamma model assumes $T(\tau) \mid x \sim \mathsf{Gamma}(K,p)$ for any $x\in\mathcal{X}$, we draw input $x$ from the entire problem set $\mathcal{X}$, not from a certain input slice (thereby dropping the subscript of $J$).}
\Cref{thm:online_rl} also demonstrates the fundamental trade-off in the ample budget regime: the linear growth of the relative budget comes at the cost of quadratic sample complexity per iteration.

\begin{theorem}[Linear relative budget growth under Gamma model]
  \label{thm:online_rl}
  Let $m \in \{1,2,\ldots\}$ and $\delta \in (0, 1]$ be arbitrary.
  Under the gamma model and by repeating the iteration in \cref{thm:one-step-RL} with 
  $\kappa_i \coloneqq \left(\min\left\{\frac{\sigma(\pi^{(i)})}{J(\pi^{(i)})}, \frac{H - J(\pi^{(i)})}{2\,\sigma(\pi^{(i)})}\right\}\right)^2$,
  it holds with probability at least $1 - \delta$ that, for each $i \in [m]$,
  \begin{align}
    \xi_i - \xi_0 & \ge i \, (2K)^{-1} - \Ocal_{K}(1), \\
    J(\pi^{(i)}) & = H \left(1 - \xi_i^{-1} + \Ocal_K\!\left(\xi_i^{K-1} e^{-K\xi_i}\right)\right), \\
    \sigma^2(\pi^{(i)}) & = H^2 \left((\xi^2_{i} K)^{-1} + \Ocal_K\!\left(\xi_i^{K-1} e^{-K\xi_i}\right)\right), \\
    c_0^{(i)} & = \Theta_K\!\left(1\right),
  \end{align}
  provided that the number of rollouts $n_i$ from $\pi^{(i)}$ satisfies
  \begin{align}
    n_i
    & \ge \frac{2\,C H \log \left(m \left|\Rcal \right| / \delta \right)}{\sqrt{\kappa_{i}}\, c_0^{(i)} \sigma(\pi^{(i)})} 
    = \widetilde\Theta\left(K \xi_{i}^2\right).
  \end{align}
  Here, $C$ is a universal constant implied by Lemma~\ref{thm:upper}, and $c_0^{(i)}$ is the anti-concentration coefficient of $\pi^{(i)}$, defined as $c_0^{(i)} \coloneqq \doubleP_{x\sim\rho, \tau\sim\pi^{(i)}(\cdot \mid x)}[R(\tau) \ge J(\pi^{(i)}) + \sqrt{\kappa_i}\, \sigma(\pi^{(i)})]$.
\end{theorem}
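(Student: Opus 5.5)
The plan is an induction on $i$ that uses \cref{thm:one-step-RL} at every step, with the confidence level $\delta/m$ per iteration so that a union bound gives overall probability $1-\delta$; this accounts for the $\log(m|\Rcal|/\delta)$ factor. Throughout, the time-to-solution of $\pi^{(i)}$ is taken to be $\mathsf{Gamma}(K,p_i)$. The shape $K$ stays fixed across iterations and only the rate changes, which is the usual modeling reading of the gamma assumption. Hence $\xi_i = p_iH/K$ parametrizes the policy completely, and $J(\pi^{(i)})$, $\sigma(\pi^{(i)})$ and $c_0^{(i)}$ are all explicit functions of $(K,\xi_i)$. These functions are given by \cref{prop:gamma_statistics}, using the proxy reward $R=\max\{0,H-T\}$.

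\textbf{Step 1 (moment asymptotics).} Write $\lambda=K\xi_i=p_iH$. Then
$\doubleE[(H-T)_+] = H\gamma(K,\lambda) - (K/p_i)\gamma(K+1,\lambda)$,
and $\doubleE[(H-T)_+^2]$ is obtained the same way from $\gamma(K,\lambda)$, $\gamma(K+1,\lambda)$ and $\gamma(K+2,\lambda)$. The upper tail satisfies $1-\gamma(K+j,\lambda)=\Ocal_K(\lambda^{K+j-1}e^{-\lambda})$. Substituting gives
$J=H(1-\xi_i^{-1})+H\cdot\Ocal_K(\xi_i^{K-1}e^{-K\xi_i})$.
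For the variance, note that on the event $\{T\le H\}$ we have $R=H-T$, so $\mathrm{Var}(R)=\mathrm{Var}(T)+$(tail corrections). Since $\mathrm{Var}(T)=K/p_i^2=H^2/(K\xi_i^2)$, this yields the claimed formula for $\sigma^2$. For anti-concentration, \cref{remark:asymptotics} states that $c\to\gamma(K,K-\sqrt{\varepsilon K})$. With the chosen $\kappa_i\le(\sigma/J)^2$, the threshold is $\sqrt{\kappa_i}\sigma\le\sigma^2/J\approx H/(K\xi_i^2)$, which is at most one standard deviation. Combined with the continuity of $c$ in $\xi$ and its positivity for every finite $\xi>0$ (the gamma left tail is polynomial, $f(z)\asymp z^K$), this gives $c_0^{(i)}=\Theta_K(1)$ uniformly once $\xi\ge\xi_0$.

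\textbf{Step 2 (translating improvement into $\xi$ growth).} \cref{thm:one-step-RL} gives $J(\pi^{(i+1)})-J(\pi^{(i)})\ge\tfrac12\sqrt{\kappa_i}\sigma(\pi^{(i)})$. For large $\xi$ we have $\sigma/J\approx 1/(\sqrt K\xi)$ and $(H-J)/(2\sigma)\approx \sqrt K/2$, so the minimum defining $\kappa_i$ is the first term. Hence $\sqrt{\kappa_i}\sigma=\sigma^2/J\approx H/(K\xi_i^2)$. The identity $J=H(1-1/\xi)$ turns this into $H(\xi_{i+1}^{-1}-\xi_{i}^{-1})^{-}\ge H/(2K\xi_i^2)$, that is, $1/\xi_i-1/\xi_{i+1}\ge 1/(2K\xi_i^2)$ up to exponentially small errors. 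Multiplying through gives $\xi_{i+1}-\xi_i\ge \xi_{i+1}/(2K\xi_i)\ge 1/(2K)$. Summing over iterations yields $\xi_i-\xi_0\ge i/(2K)-\Ocal_K(1)$. The $\Ocal_K(1)$ term absorbs two things: the early iterations in which $\xi$ is not yet large and the second branch of the minimum may be active (even then the increments stay positive), and the summable exponential error terms $\sum_i \xi_i^{K-1}e^{-K\xi_i}$, which are finite because $\xi_i$ grows linearly.

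\textbf{Step 3 (sample size).} Substituting $\sqrt{\kappa_i}\sigma\asymp H/(K\xi_i^2)$ and $c_0^{(i)}=\Theta_K(1)$ into \eqref{eq:estimator-n} gives $n_i\ge 2CH\log(m|\Rcal|/\delta)\cdot K\xi_i^2/(H\,\Theta(1))=\widetilde\Theta(K\xi_i^2)$. It also remains to check that the trust-region condition \eqref{eq:kappa_condition}, $\sqrt{\kappa}\,\doubleE R\le\sigma$, holds; it is exactly the first branch of the minimum. The second branch keeps the improvement below $H-J$, so $J$ never exceeds the feasible range.

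\textbf{Main obstacle.} The hardest step will be making Step 1 uniform in $\xi$, specifically obtaining $c_0^{(i)}=\Theta_K(1)$ and controlling which branch of $\kappa_i$ is active for moderate $\xi$ (not only asymptotically). For this I would first bound the incomplete-gamma tails explicitly, using $1-\gamma(K,\lambda)\le e^{-\lambda}\sum_{j<K}\lambda^j/j!$. I would then handle the finitely many iterations with $\xi_i$ below a $K$-dependent threshold by compactness, charging them to the $\Ocal_K(1)$ term.
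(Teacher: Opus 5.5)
Your proposal is correct and takes essentially the same route as the paper. Both arguments use a union bound at level $\delta/m$ and incomplete-gamma asymptotics for $J(\pi^{(i)})$ and $\sigma^2(\pi^{(i)})$. Both note that the first branch $\sigma/J$ of $\kappa_i$ is active once $\xi_i$ passes a $K$-dependent threshold, convert $\tfrac12\sigma^2/J\ge H/(2K\xi_i^2)$ into $\xi_{i+1}-\xi_i\ge 1/(2K)$ minus exponentially small errors, and then sum. One small tightening for $c_0^{(i)}$: do not rely on ``at most one standard deviation'', since with $\varepsilon=1$ and $K=1$ the limit $\gamma(K,K-\sqrt{\varepsilon K})$ equals $\gamma(1,0)=0$. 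Instead use your own estimate $\sqrt{\kappa_i}\,\sigma=\sigma^2/J\approx H/(K\xi_i^2)$ directly; this puts the $T$-threshold at $H/\xi_i-\Ocal(H/\xi_i^2)$ and gives $c_0^{(i)}=\gamma(K,K-\Ocal_K(1/\xi_i))$, exactly as the paper computes.
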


The proof in \cref{proof_thm:online_rl} is based on an asymptotic evaluation using the per-step guarantee by \cref{thm:one-step-RL} and the statistics analysis in \cref{sec:gamma_statistics}.

\begin{remark}
    Under the gamma model, online RL can improve any base policy so that the mean time-to-solution becomes arbitrarily small.
    While this may appear unrealistic, each iteration requires an unbounded number of (i.e., $\widetilde\Omega_K(i^2)$ many) sample rollouts from the current policy.
    Additionally, the continuous approximation by the gamma distribution may involve a significant gap compared to the negative binomial distribution in the ample budget regime.
\end{remark}

\section{Experiments}
\label{sec:experiment}

\subsection{Empirical Validation of Relative Budget Theory}
\label{subsec:empirical_validation}

To validate the regime-dependent behaviors predicted in \cref{thm:rl_suboptimality}, we evaluated \texttt{Llama-3.2-3B-Instruct} \cite{grattafiori2024llama}, \texttt{Phi4-mini-instruct} \cite{abouelenin2025phi}, and \texttt{Qwen3-4B-Instruct} \cite{yang2025qwen3} on GSM8K~\cite{cobbe2021gsm8k} and MATH-500~\cite{hendrycks2measuring,lightman2023lets}. 
We measured the reward variance and anti-concentration coefficient $c_0$ across varying relative budgets $\xi = H/\mu_x$.
See Appendix~\ref{app:static_analysis_details} for detailed setup and additional results.

\Cref{fig:llama_anti_concentration} shows representative results on GSM8K with \texttt{Llama-3.2-3B-Instruct}, which are consistent with our theoretical framework (see \cref{fig:c0_eps_rl_1,fig:c0_eps_rl_2}).
We observed consistent regime-dependent behaviors across all evaluated models and datasets (see \cref{app:static_analysis_details} for complete plots).
First, we observe a phase transition around $\xi \approx 1.0$. 
In the deficient regime ($\xi < 0.8$), both metrics remain negligible, indicating that the learning signal effectively vanishes.
While the anti-concentration coefficient $c_0$ increases and peaks around $\xi \approx 1.2$, the reward variance reaches a plateau at a larger relative budget, approximately $1.5 \le \xi \le 1.8$ in our experiments.
This suggests that although high-reward solutions become discoverable once the task enters the solvable regime (around $\xi \approx 1.2$), maximizing the learning signal requires a larger buffer ($H \approx 1.5\mu_x \text{--} 1.8\mu_x$) to accommodate the long-tailed behavior of the generation distribution.
Finally, for $\xi > 2.0$, the anti-concentration coefficient saturates and remains stable at a high level, supporting the robustness of RL in ample-budget regimes.
  
\begin{figure}
    \centering
    \includegraphics[width=0.97\linewidth]{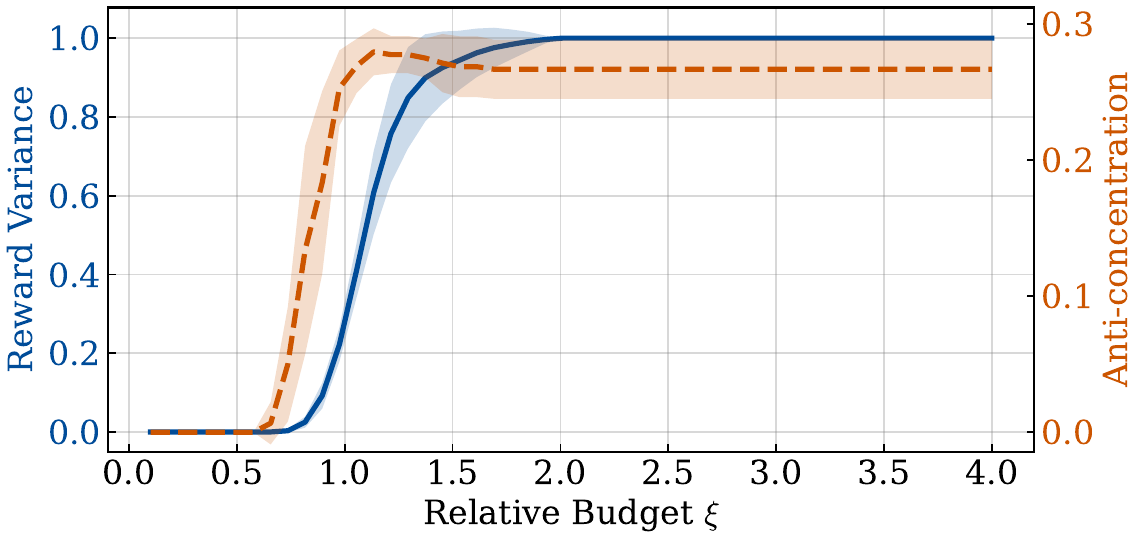}
    \caption{Reward variance and anti-concentration coefficient vs. relative budget $\xi$ for \texttt{Llama-3.2-3B-Instruct} on GSM8K.
    A phase transition occurs at $\xi \approx 1.0$, with the learning signal maximizing at $\xi \approx 1.5 \text{--} 1.8$ in our experiments.}
    \label{fig:llama_anti_concentration}
\end{figure}

\subsection{Performance Improvement via RLVR}
\label{sec:experiment_rlvr}

To validate the learning dynamics predicted in \Cref{thm:rl_suboptimality}, we fine-tuned \texttt{Llama-3.2-3B-Instruct} and \texttt{Qwen3-4B-Instruct} using Group Relative Policy Optimization (GRPO, \citet{shao2024deepseekmath}).
We measured post-training accuracy on two reasoning tasks.
First, we performed both training and evaluation on GSM8K.
Second, to assess performance on more challenging tasks, we trained models on DAPO-Math-17k-Processed~\cite{yu2025dapo} and evaluated them on the MATH-500 test set.
We define the token budget $H$ as the maximum completion length.
Holding the task distribution fixed, we varied the token budget $H$ to traverse different relative budget regimes $\xi \coloneqq H/\mu$ (with $\mu$ estimated under $\pi_b$).
We trained separate models for a fixed number of steps at each $H$ and measured test-set accuracy.

\Cref{fig:rlvr_experiment} illustrates the relationship between the relative budget $\xi$ and the post-training test accuracy across models and datasets. 
Consistent with our theoretical framework, the results exhibit a phase-transition behavior governed by the relative budget.
In the \textit{deficient} regime ($\xi \ll 1$), the accuracy remains negligible, which aligns with our analysis that informative trajectories are exceedingly rare and the model fails to learn effective reasoning paths.
In the \textit{balanced} regime, as $\xi$ approaches $1$, we observe a sharp performance improvement.
This corresponds to the phase transition point where the compute budget $H$ becomes comparable to the inherent task difficulty $\mu_x$, making high-reward solutions discoverable and enabling efficient learning.
In the \textit{ample} regime beyond $\xi \approx 1.5$, performance gains saturate: marginal benefits diminish because tasks are already well within model capabilities. 
These consistent findings support the relative budget $\xi$ as a key factor for RLVR efficacy.

\begin{figure}[t]
    \centering
    \includegraphics[width=\linewidth]{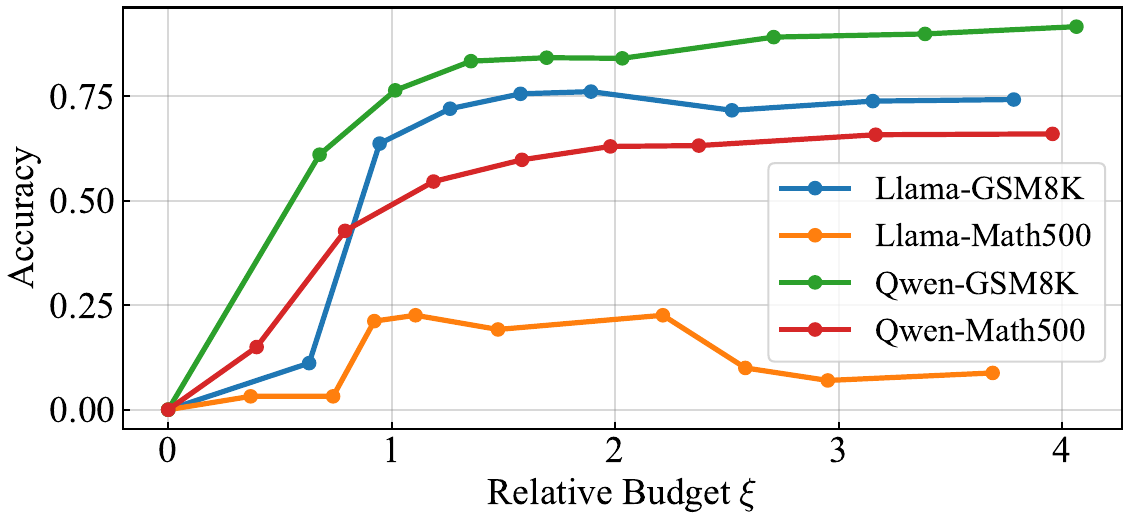}
    \caption{The relationship between relative budget $\xi$ and accuracy in LLMs trained via RLVR.}
    \label{fig:rlvr_experiment}
\end{figure}

\section{Conclusion and Future Work}
\label{sec:conclusion}

We have proposed a relative-budget theory unifying compute constraints and task difficulty to analyze the sample complexity of verifier-based RL.
Our analysis reveals that learning efficiency depends on the anti-concentration of the time-to-solution distribution, which identifies three regimes based on the relative budget $\xi$.
A phase transition emerges at the balanced regime ($\xi \approx 1$): this point provides the strongest learning signal for RL, yet is the most challenging for SFT due to peak solution heterogeneity. 
Our empirical results support these dynamics, identifying an optimal budget of $\xi \in [1.5, 2.0]$ consistent with our theory.
We further proved that online RL drives linear relative budget growth, offering a theoretical basis for test-time scaling.
Practically, our findings suggest allocating compute slightly above the expected difficulty ($\xi \gtrsim 1$) to ensure reliable learning signals while avoiding the inefficiency of excessive budgets.

While this paper provides a theoretical foundation for verifier-based RL, our analysis relies on a continuous proxy reward to make the variance analysis tractable.
Extending the theory to the binary-reward setting (i.e., terminal 1/0 outcome supervision) remains a key open problem for characterizing standard reasoning pipelines and for developing a more general theory of reasoning.

\section*{Acknowledgement}
TS and RH were partially supported by JSPS KAKENHI (24K02905) and JST CREST (JPMJCR2115). This research is supported by the National Research Foundation, Singapore, Infocomm Media Development Authority under its Trust Tech Funding Initiative, and the Ministry of Digital Development and Information under the AI Visiting Professorship Programme (award number AIVP-2024-004). Any opinions, findings and conclusions or recommendations expressed in this material are those of the author(s) and do not reflect the views of National Research Foundation, Singapore, Infocomm Media Development Authority, and the Ministry of Digital Development and Information.

\section*{Impact Statement}

This research contributes to more sustainable AI development by providing a framework to identify when reinforcement learning for reasoning is likely to yield diminishing returns.
By offering guidance on when RL for reasoning is likely to be productive versus wasteful, this work enables research and development teams to avoid unnecessary training runs and better target limited compute budgets.
The insights in this paper could lower the financial costs and environmental footprint associated with large-scale model training. However, while this contribution is theoretical, a plausible negative outcome is that organizations might use these results to justify increased aggregate compute spending to chase performance gains, potentially increasing emissions.

\bibliography{ref}
\bibliographystyle{icml2026}

\newpage
\appendix
\onecolumn

\section{Token Distributions}
\label{appendix:token_dist}

To validate the theoretical assumption made in \cref{sec:gamma_statistics,sec:online_RL_gamma} that reasoning steps follow a gamma distribution, we analyzed the token counts generated by three popular LLMs.

\Cref{fig:fig_token_distribution} visualizes the distribution of solution lengths for \texttt{Llama-3.2-3B-Instruct} and \texttt{Phi-4-mini-instruct} on the GSM8K~\cite{cobbe2021gsm8k} and MATH-500~\cite{hendrycks2measuring,lightman2023lets} datasets.
In each subplot, the blue histogram represents the empirical distribution of generated tokens, while the red curve shows the fitted gamma distribution. 
The close alignment between the observed data and the fitted curves empirically supports our modeling choice, demonstrating that the gamma distribution serves as a reliable continuous proxy for the discrete token counts in complex reasoning tasks.

\begin{figure}[h]
  \centering
  \begin{subfigure}[b]{0.45\textwidth}
    \centering
    \includegraphics[height=4cm]{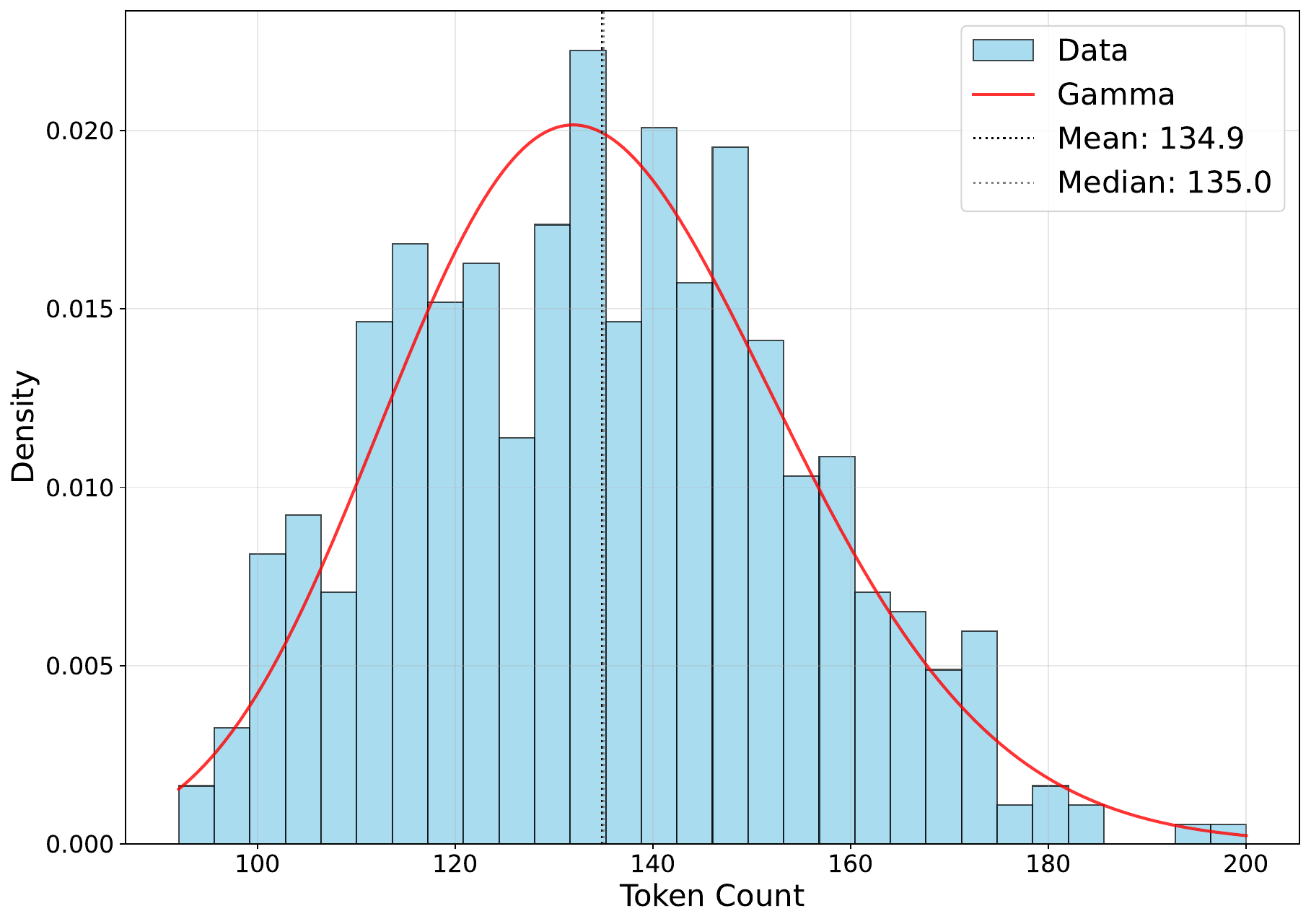}
    \caption{\texttt{Llama-3.2-3B-Instruct} + GSM8K.}
    \label{fig:llama_gsm8k}
  \end{subfigure}
  \begin{subfigure}[b]{0.45\textwidth}
    \centering
    \includegraphics[height=4cm]{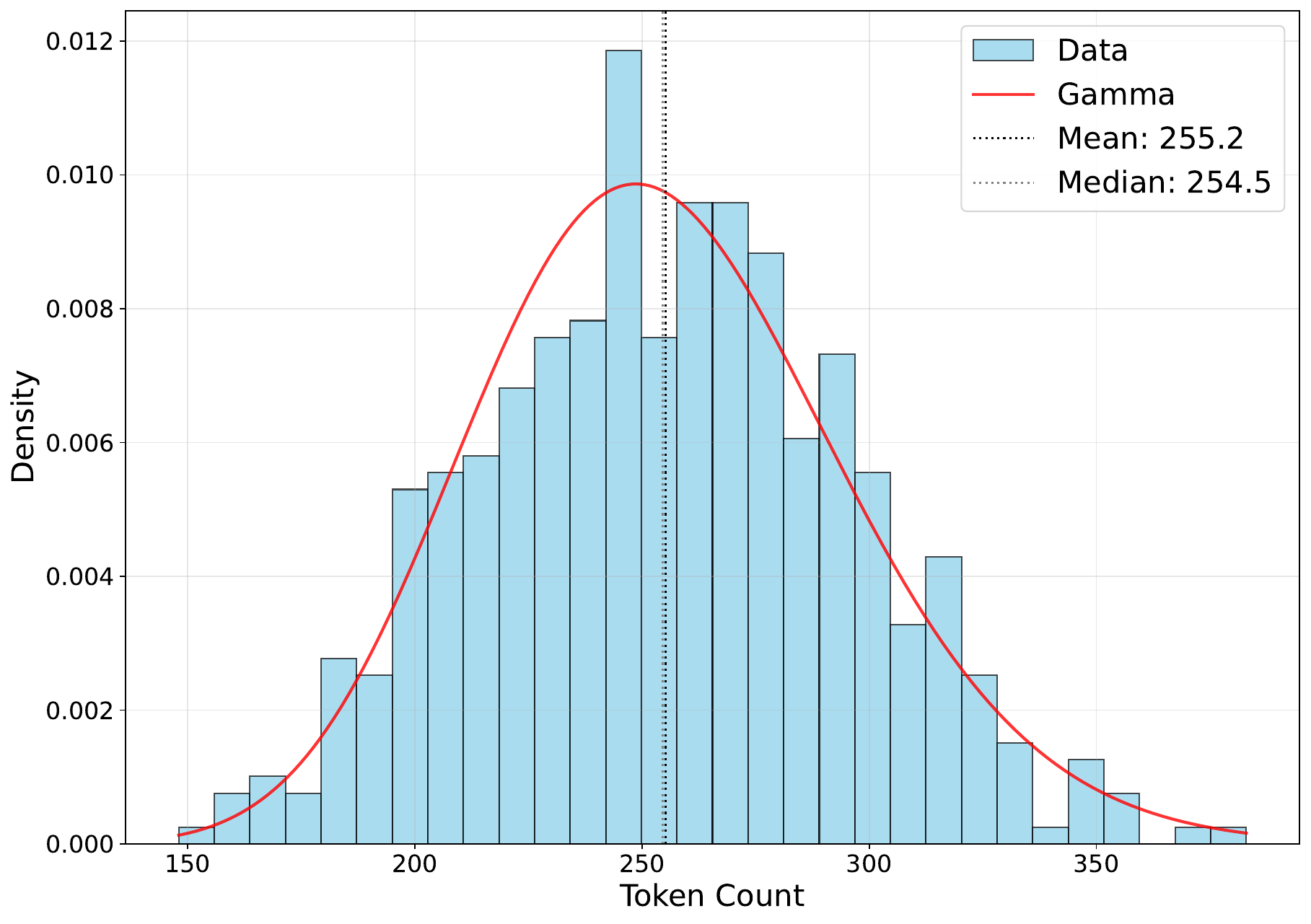}
    \caption{\texttt{Llama-3.2-3B-Instruct} + MATH500.}
    \label{fig:llama_math500}
  \end{subfigure}
  \par
  \begin{subfigure}[b]{0.45\textwidth}
    \centering
    \includegraphics[height=4cm]{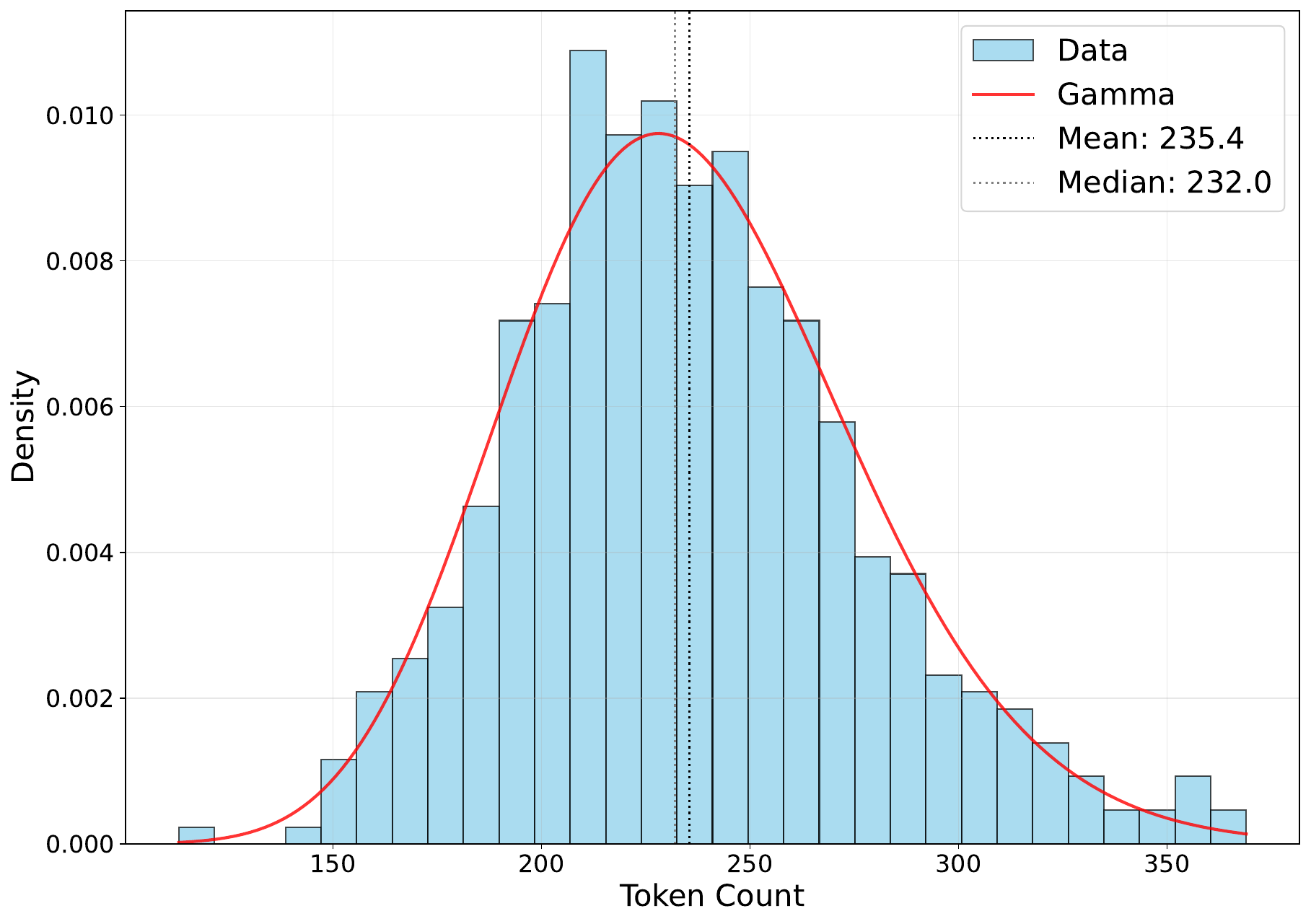}
    \caption{\texttt{Phi-4-mini-instruct} + GSM8K.}
    \label{fig:phi4_gsm8k}
  \end{subfigure}
  \begin{subfigure}[b]{0.45\textwidth}
    \centering
    \includegraphics[height=4cm]{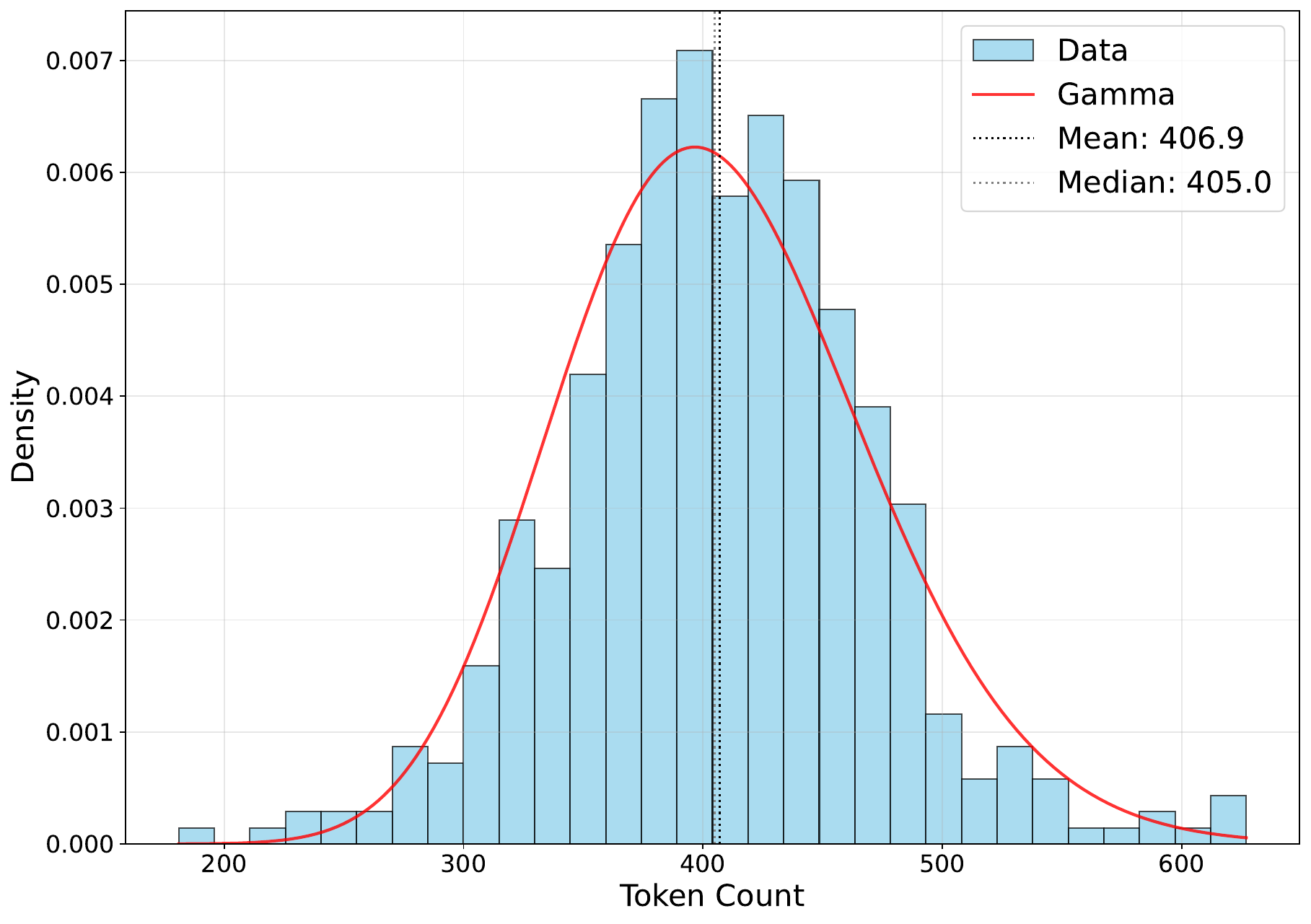}
    \caption{\texttt{Phi-4-mini-instruct} + MATH500.}
    \label{fig:phi4_math500}
  \end{subfigure}
  \caption{Token distributions generated by LLMs for a problem instance in reasoning datasets and their gamma distribution fittings.}
   \label{fig:fig_token_distribution}
\end{figure}

However, models like \texttt{Qwen3-4B-Instruct} exhibit a bimodal distribution as shown below. 
Recent studies attribute such behavior to data contamination, where the model oscillates between retrieving memorized solutions and engaging in genuine reasoning~\cite{wu2025reasoning}. 
While our gamma-based theory models the latter, these exceptions would require a mixture model approach.
Note that our core theoretical framework still holds for such outlier models as discussed in \cref{app:static_analysis_details}.
\begin{figure}[h]
  \centering
  \begin{subfigure}[b]{0.45\textwidth}
    \centering
    \includegraphics[height=4cm]{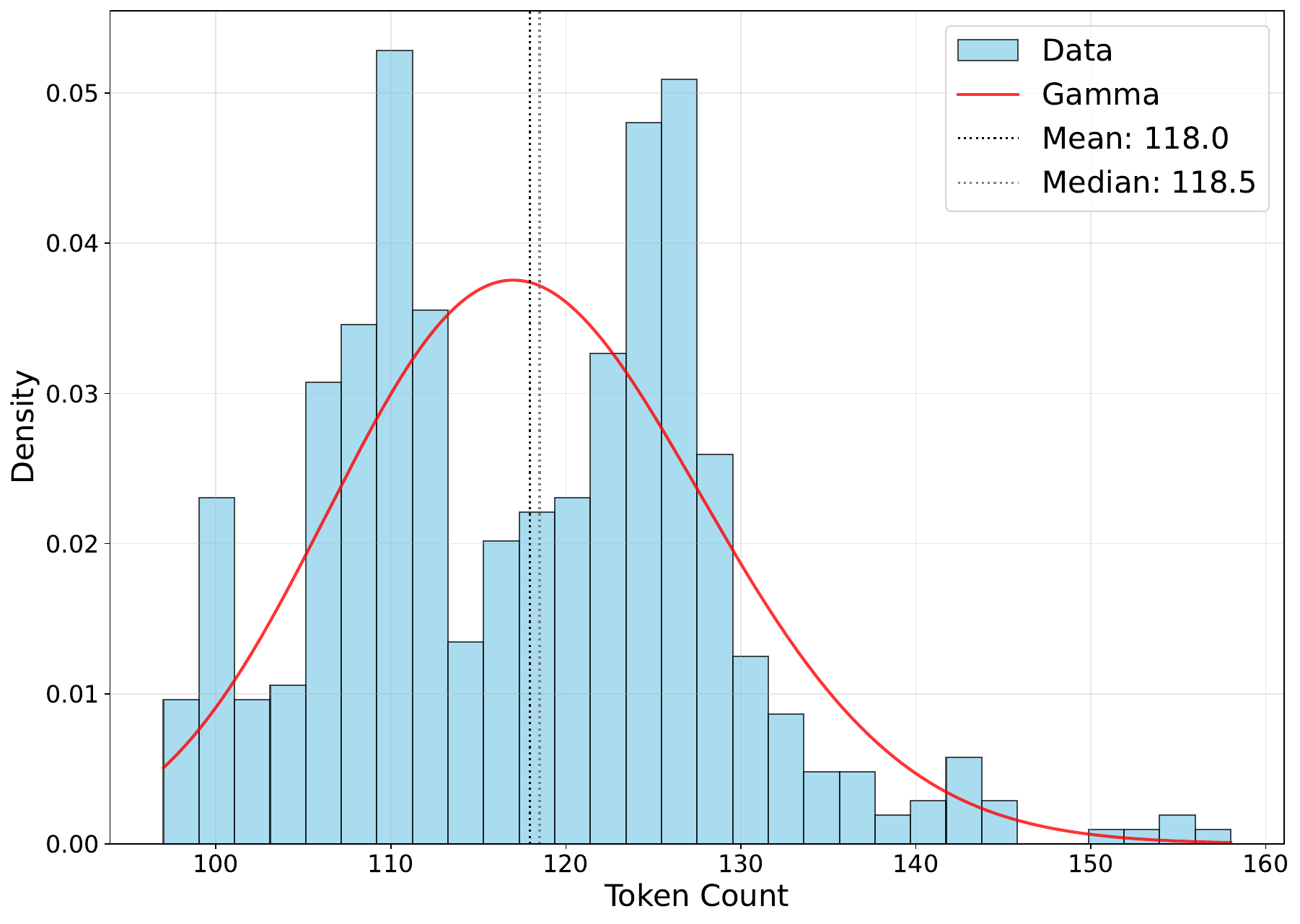}
    \caption{\texttt{Qwen3-4B-Instruct} + GSM8K.}
    \label{fig:qwen_gsm8k}
  \end{subfigure}
  \begin{subfigure}[b]{0.45\textwidth}
    \centering
    \includegraphics[height=4cm]{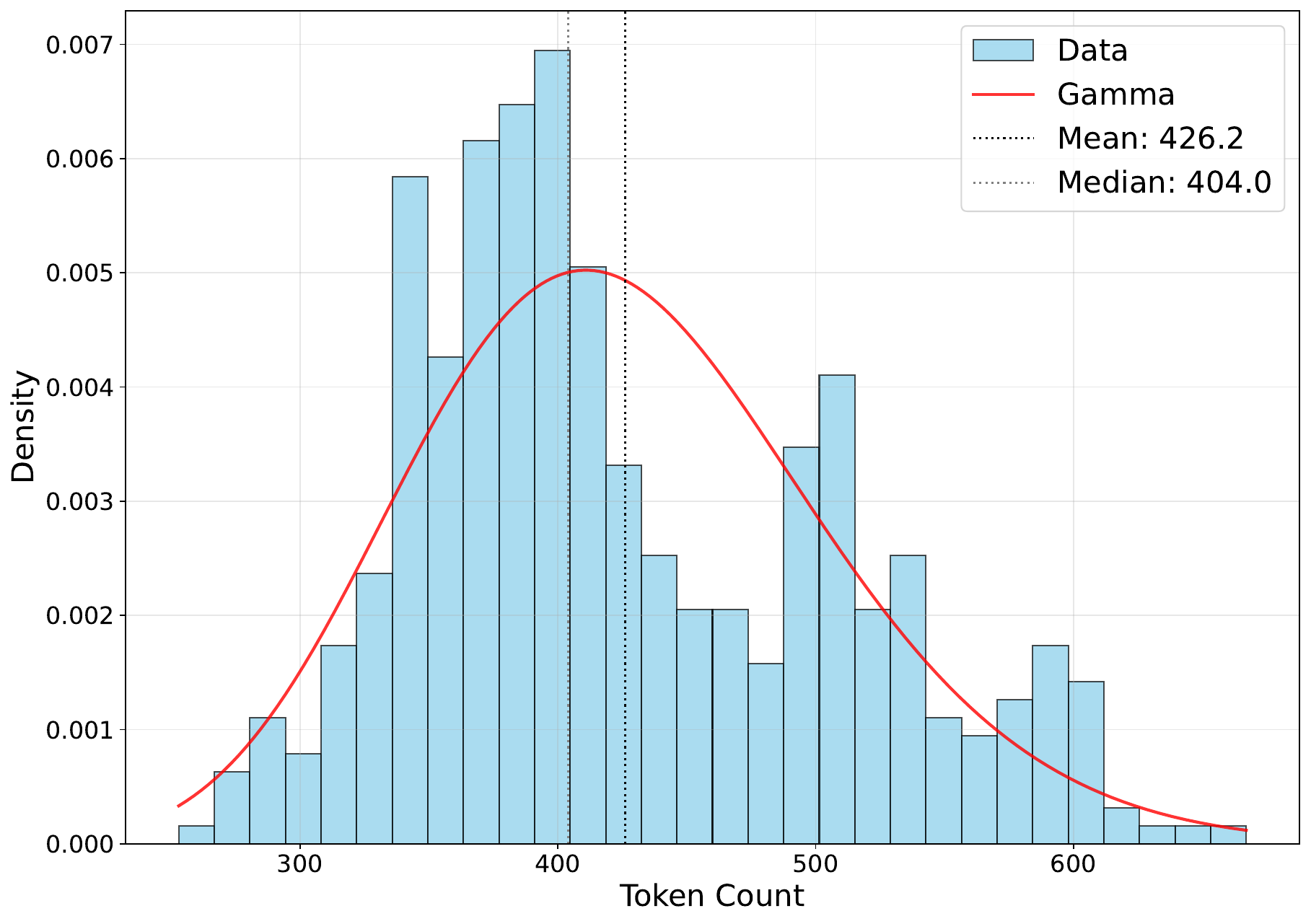}
    \caption{\texttt{Qwen3-4B-Instruct} + MATH500.}
    \label{fig:qwen_math500}
  \end{subfigure}
  \label{fig:token_distribution_qwen}
  \caption{Token distributions generated by \texttt{Qwen3-4B-Instruct}. The empirical data exhibit a bimodal structure, showing a clear deviation from the single gamma distribution fit.}
\end{figure}

\section{Experimental Details for Static Analysis}
\label{app:static_analysis_details}

In Section~\ref{subsec:empirical_validation}, we performed empirical analyses to validate our relative budget theory.
The detailed experimental setup is as follows.
For the data collection, we randomly sampled $20$ problems from the GSM8K and MATH-500 test sets. 
For each problem, we generated $100$ solution traces using \texttt{Llama-3.2-3B-Instruct}, \texttt{Phi-4-mini-instruct}, and \texttt{Qwen3-4B-Instruct} with temperature $T=0.7$ and $\texttt{top\_p}=0.9$.
To estimate the true difficulty, we set a sufficiently large max completion length ($1024$ tokens) and calculated the mean solution length $\mu_x$ based on correct trajectories.
We then simulated varying budget constraints within $\xi \in [0.1, 4.0]$.
The shaped reward was computed as $R(\tau) = \max \{0, H - T(\tau) + 1\}$ for successful traces and $0$ otherwise.
To ensure statistical reliability, we aggregated the normalized variance curves across problems.

In addition to \Cref{fig:llama_anti_concentration} for \texttt{Llama-3.2-3B-Instruct} on GSM8K, \Cref{fig:llama_math500_c0,fig:phi4_anti_concentration_appendix} present additional experimental results that use \texttt{Llama-3.2-3B-Instruct} and \texttt{Phi-4-mini-instruct}.
Observe that the following figures exhibit similar tendencies to \cref{fig:llama_anti_concentration,}.

\begin{figure}[h]
    \centering
    \includegraphics[width=0.40\linewidth]{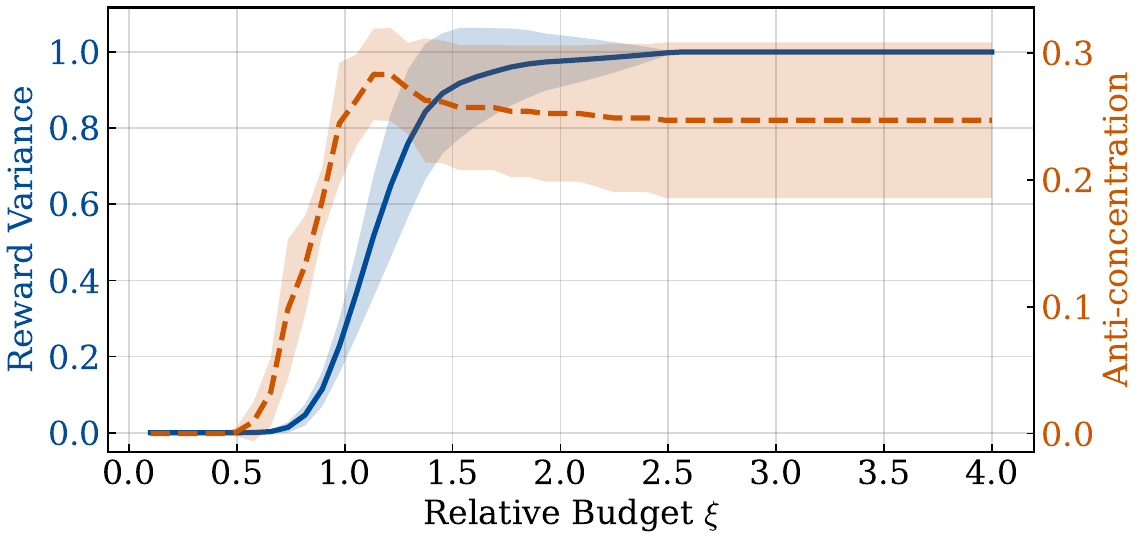}
        \caption{\texttt{Llama-3.2-3B-Instruct} + MATH-500.}
    \label{fig:llama_math500_c0}
\end{figure}

\begin{figure}[h]
    \centering
    \begin{subfigure}[t]{0.4\textwidth}
        \includegraphics[width=\linewidth]{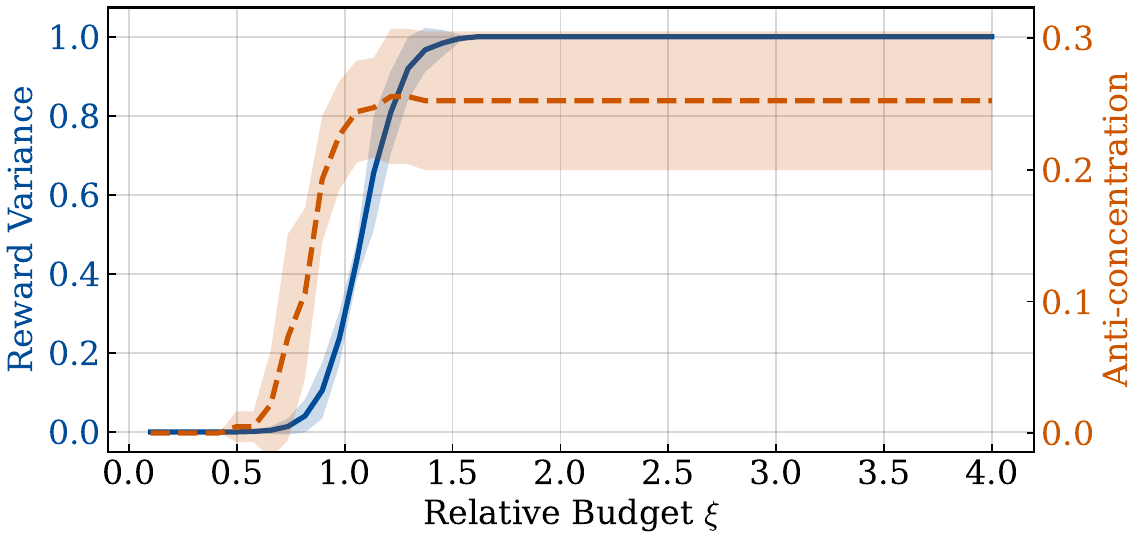}
        \caption{\texttt{Phi-4-mini-instruct} + GSM8K.}
    \end{subfigure}
    \begin{subfigure}[t]{0.4\textwidth}
        \includegraphics[width=\linewidth]{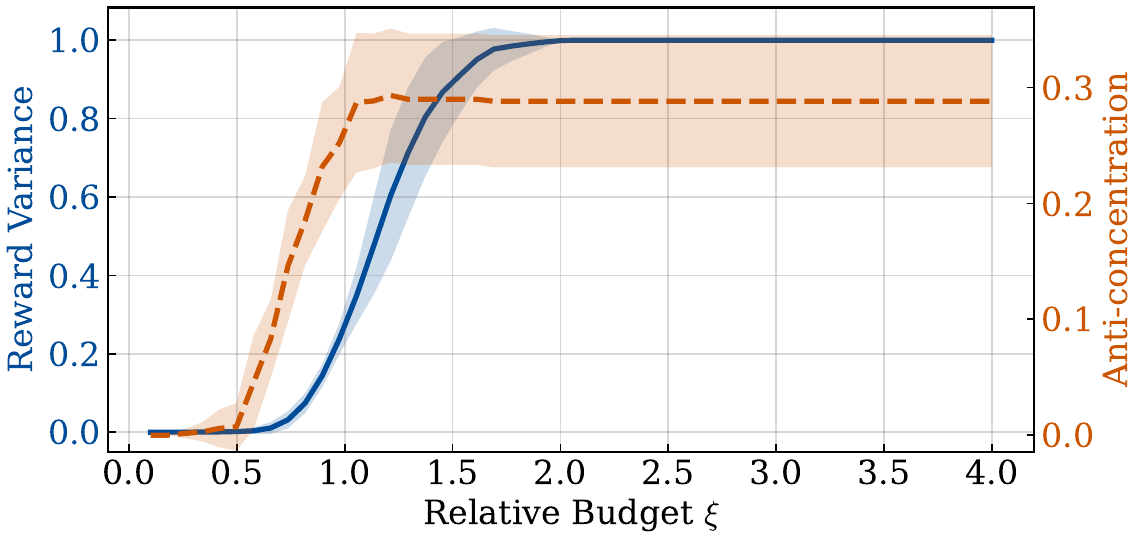}
        \caption{\texttt{Phi-4-mini-instruct} + MATH-500.}
    \end{subfigure}
    \caption{Phase transition in relative budget regimes. Normalized reward variance (blue) and anti-concentration coefficient (orange) vs. relative budget $\xi$. 
    A phase transition occurs at $\xi \approx 1.0$, with the learning signal maximizing at $\xi \approx 1.5 \text{--} 2.0$. in our experiments.}
    \label{fig:phi4_anti_concentration_appendix}
\end{figure}

We also present the following figures of the experimental results using \texttt{Qwen3-4B-Instruct} on GSM8K and MATH-500.
While token distributions generated by \texttt{Qwen3-4B-Instruct} are not aligned with the gamma distribution as discussed in \cref{appendix:token_dist}, we can see that the anti-concentration is as predicted in our theory.
\begin{figure}[h]
    \centering
    \begin{subfigure}[t]{0.4\textwidth}
        \includegraphics[width=\linewidth]{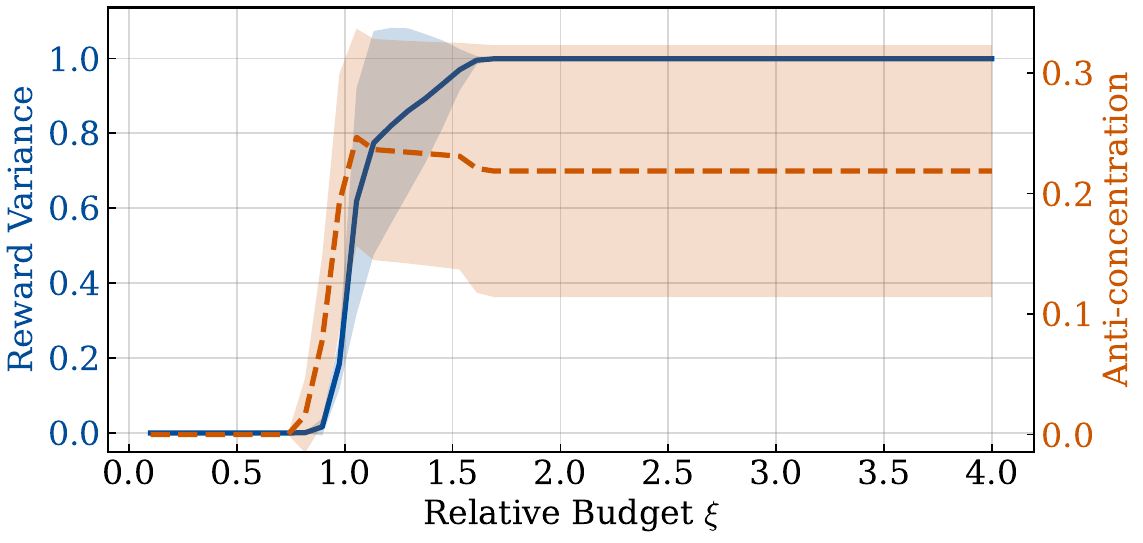}
        \caption{\texttt{Qwen3-4B-Instruct} + GSM8K.}
    \end{subfigure}
    \begin{subfigure}[t]{0.4\textwidth}
        \includegraphics[width=\linewidth]{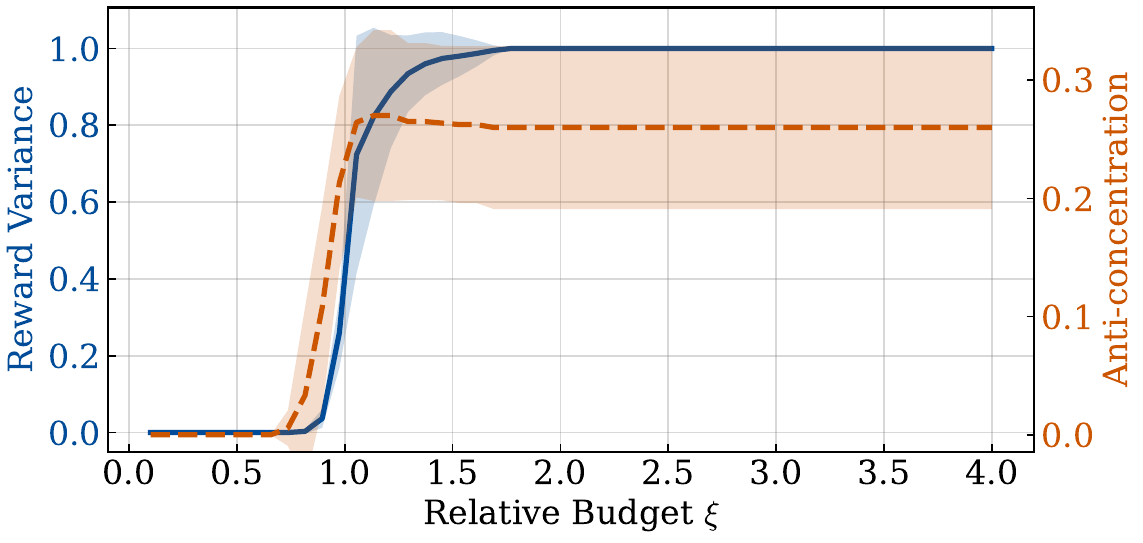}
        \caption{\texttt{Qwen3-4B-Instruct} + MATH-500.}
    \end{subfigure}
    \caption{Experimental results of \texttt{Qwen3-4B-Instruct}. As with other LLMs, we observe similar tendencies of the phase transition in relative budget regimes. Normalized reward variance (blue) and anti-concentration coefficient (orange) vs. relative budget $\xi$. 
    A phase transition occurs at $\xi \approx 1.0$, with the learning signal maximizing at $\xi \approx 1.5$ in our experiments.}
    \label{fig:qwen_anti_concentration_appendix}
\end{figure}

\section{Experimental Details for RL Fine-tuning}
\label{app:rl_details}

In \cref{sec:experiment_rlvr}, we perform RLVR using the GRPO algorithm.
This section details the hyper-parameters and implementation specifics.
We utilized a A100 GPU environment. 
The base models were \texttt{Llama-3.2-3B-Instruct} and \texttt{Qwen3-4B-Instruct}.
To manage memory efficiency, we employed 4-bit quantization during loading, although the LoRA adapters were trained in 16-bit precision.
We used the GRPOTrainer from the trl (\url{https://github.com/huggingface/trl}) library.
The configuration was set to explore the impact of the token budget $H$ on learning dynamics.
To validate the core claims in our relative budget theory presented in the main paper, we modulated the relative budget $\xi = H/\mu_x$ by varying the \texttt{max\_completion\_length} parameter in the GRPO configuration.
The specific hyper-parameters used for the results in \cref{sec:experiment_rlvr} are listed in \cref{tab:hyperparameters_llama,tab:hyperparameters_qwen}.

\begin{table}[h]
    \centering
    \caption{Hyper-parameters for GRPO Fine-tuning on \texttt{Llama-3.2-3B-Instruct}.}
    \label{tab:hyperparameters_llama}
    \begin{tabular}{lc}
    \toprule
    Hyperparameter & Value\\
    \midrule
    LoRA Rank & 64 \\
    LoRA Alpha & 64 \\
    Target Modules & q\_proj, k\_proj, v\_proj, o\_proj, \\
                   & gate\_proj, up\_proj, down\_proj \\
    \midrule
    Optimizer & AdamW 8-bit \\
    Learning Rate & $5 \times 10^{-6}$ \\
    LR Scheduler & Cosine (warmup ratio 0.1) \\
    Weight Decay & 0.1 \\
    Gradient Accumulation Steps & 4 \\
    Num Generations & 4 \\
    Max Gradient Norm & 1.0 \\
    Max Steps & 500 \\
    \bottomrule
    \end{tabular}
\end{table}

\begin{table}[h]
    \centering
    \caption{Hyper-parameters for GRPO Fine-tuning on \texttt{Qwen3-4B-Instruct}.}
    \label{tab:hyperparameters_qwen}
    \begin{tabular}{lc}
    \toprule
    Hyperparameter & Value \\
    \midrule
    LoRA Rank & 32 \\
    LoRA Alpha & 64 \\
    Target Modules & q\_proj, k\_proj, v\_proj, o\_proj, \\
                   & gate\_proj, up\_proj, down\_proj \\
    \midrule
    Optimizer & AdamW 8-bit \\
    Learning Rate & $5 \times 10^{-6}$ \\
    LR Scheduler & Linear \\
    Weight Decay & 0.01 \\
    Gradient Accumulation Steps & 4 \\
    Num Generations & 4 \\
    Max Gradient Norm & 1.0 \\
    Max Steps & 500 \\
    \bottomrule
    \end{tabular}
\end{table}

\section{Proofs from \cref{sec:rl}}

\subsection{Preliminary Lemma}

The following lemma expresses the reward moments in terms of the truncated
time-to-solution moments.

\begin{lemma}[Reward moments under bi-level feedback]
\label{lemma:reward_moments}
For any fixed problem instance $x \in \mathcal{X}$, let 
\begin{align}
    q_x \coloneqq \doubleP_{\tau \sim \pi_b(\cdot \mid x)}[T(\tau) \le H]
\end{align}
denote the success probability under the base policy. 
Let $\tilde{\mu}_x$ and $\tilde{v}_x$ denote the mean and variance of the time-to-solution $T(\tau)$ conditioned on success, defined as:
\begin{align}
    \tilde{\mu}_x &\coloneqq \doubleE_{\tau \sim \pi_b(\cdot \mid x)}[T(\tau) \mid T(\tau) \le H], \\
    \tilde{v}_x &\coloneqq \doubleV_{\tau \sim \pi_b(\cdot \mid x)}[T(\tau) \mid T(\tau) \le H].
\end{align}
Then, the mean and variance of $R(\tau)$ satisfy:
\begin{align}
    \doubleE_{\tau \sim \pi_b(\cdot \mid x)}[R(\tau)]
    &= q_x \bigl(H + 1 - \tilde{\mu}_x\bigr), \label{eq:ER_closed_form}\\ 
    \doubleV_{\tau \sim \pi_b(\cdot \mid x)}[R(\tau)]
    &= q_x \tilde{v}_x
     + q_x (1-q_x)\bigl(H + 1 - \tilde{\mu}_x\bigr)^2.
     \label{eq:VR_closed_form}
\end{align}
\end{lemma}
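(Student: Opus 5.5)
The plan is to condition on the success event $S \coloneqq \{T(\tau)\le H\}$, which has probability $q_x$, and then apply the law of total expectation and the law of total variance. By \eqref{eq:R_tau_x}, $R(\tau)$ is a deterministic function of $T(\tau)$: it equals $H+1-T(\tau)$ on $S$ and $0$ on its complement $S^c$. The second moment of $T$ is finite by \cref{ass:moments}, and $R\le H$ is bounded in any case. So all conditional moments below are well defined whenever $q_x>0$. If $q_x=0$, both identities hold trivially, since $R\equiv 0$ almost surely.

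For the mean, write
\begin{equation*}
\doubleE[R] = q_x\,\doubleE[R\mid S] + (1-q_x)\cdot 0.
\end{equation*}
Linearity gives $\doubleE[R\mid S] = H+1-\tilde{\mu}_x$, which yields \eqref{eq:ER_closed_form}.

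For the variance, let $B=\mathbf{1}_S$ and use the decomposition $\doubleV[R]=\doubleE[\doubleV[R\mid B]]+\doubleV[\doubleE[R\mid B]]$.
\begin{itemize}
\item On $S$, $R$ is an affine function of $T$ with slope $-1$, so $\doubleV[R\mid S]=\tilde{v}_x$.
\item On $S^c$, $R$ is constant, so $\doubleV[R\mid S^c]=0$.
\end{itemize}
The first term is therefore $q_x\tilde{v}_x$. The conditional mean $\doubleE[R\mid B]$ is a two-point random variable: it takes the value $H+1-\tilde{\mu}_x$ with probability $q_x$ and $0$ otherwise. It is thus a scaled Bernoulli variable, with variance $q_x(1-q_x)(H+1-\tilde{\mu}_x)^2$. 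Adding the two terms gives \eqref{eq:VR_closed_form}.

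The one step needing care is that $T(\tau)$ may equal $\infty$ on $S^c$. This causes no problem, because only conditional moments on $S$ are used, where $T\le H$ is bounded. As a check, one can instead compute directly
\begin{equation*}
\doubleE[R^2]=q_x\bigl(\tilde{v}_x+(H+1-\tilde{\mu}_x)^2\bigr)
\end{equation*}
and subtract the square of the mean, which recovers the same expression.
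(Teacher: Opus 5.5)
Your proof is correct and essentially matches the paper, which conditions on $\{T(\tau)\le H\}$ to compute $\doubleE[R]$ and $\doubleE[R^2]$ directly and then subtracts — exactly your ``check'' — so your law-of-total-variance decomposition is just a repackaging of the same computation that displays the within-success and success/failure terms at once. One small note: you don't need \cref{ass:moments} (which is only stated on $\Xcal_\sharp$, whereas the lemma is for all $x\in\Xcal$), since $T(\tau)\le H$ on the success event already makes every conditional moment finite, as you yourself observe at the end.
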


\begin{proof}
By definition, we have
\begin{align}
    R(\tau) = (H - T(\tau) + 1) \cdot \doubleI\{T(\tau)\le H\}.
\end{align}
Thus, the following chain of equations hold:
\begin{align}
    \doubleE_{\tau \sim \pi_b(\cdot \mid x)}[R(\tau)]
    &= \doubleE_{\tau \sim \pi_b(\cdot \mid x)}\bigl[(H - T(\tau) + 1) \doubleI \{T(\tau)\le H\}\bigr] \\
    &= q_x \doubleE_{\tau \sim \pi_b(\cdot \mid x)}[H - T(\tau) + 1 \mid T(\tau)\le H] \\
    &= q_x(H + 1 - \tilde{\mu}_x).
\end{align}
For the second moment, we have
\begin{align}
    \doubleE_{\tau \sim \pi_b(\cdot \mid x)}[R(\tau)^2]
    &= \doubleE_{\tau \sim \pi_b(\cdot \mid x)}\bigl[(H - T(\tau) + 1)^2 \cdot \doubleI \{T(\tau)\le H\}\bigr]\\
    &= q_x \doubleE_{\tau \sim \pi_b(\cdot \mid x)}\bigl[(H - T(\tau) + 1)^2\mid T(\tau)\le H\bigr]\\
    &= q_x\bigl(\tilde{v}_x + (H + 1 - \tilde{\mu}_x)^2\bigr).
\end{align}
Therefore, the following equation holds:
\begin{align}
    \doubleV_{\tau \sim \pi_b(\cdot \mid x)}[R(\tau)]
    &= \doubleE_{\tau \sim \pi_b(\cdot \mid x)}[R(\tau)^2] - \doubleE_{\tau \sim \pi_b(\cdot \mid x)}[R(\tau)]^2 \\
    &= q_x\tilde{v}_x + q_x(1-q_x)(H + 1 -\tilde{\mu}_x)^2.
\end{align}
\end{proof}

\subsection{Proof of \cref{lemma:anticonc_T_tail}}
\label{proof_lemma:anticonc_T_tail}

\begin{proof}
We analyze the event where the reward $R(\tau)$ exceeds its mean by
$\sqrt{\varepsilon}$ standard deviations. Using the bi-level reward definition
$R(\tau) = (H - T(\tau) + 1) \cdot \doubleI \{T(\tau) \le H\}$, the condition
\begin{align}
    R(\tau) \ge \doubleE_{\tau \sim \pi_b(\cdot \mid x)}[R(\tau)] + \sigma_{b,x}\sqrt{\varepsilon}
\end{align}
implies successful trajectories and transforms as follows:
\begin{align}
    H + 1 - T(\tau)
    & \ge \doubleE_{\tau \sim \pi_b(\cdot \mid x)}[R(\tau)] + \sigma_{b,x}\sqrt{\varepsilon} \\
    T(\tau)
    &\le H + 1 - \doubleE_{\tau \sim \pi_b(\cdot \mid x)}[R(\tau)] - \sigma_{b,x}\sqrt{\varepsilon}.
\label{eq:lemma41_proof_1}
\end{align}
By \cref{lemma:reward_moments}, the reward moments are functions of the conditional
time-to-solution moments
$\tilde\mu_x \coloneqq \doubleE_{\tau \sim \pi_b(\cdot \mid x)}[T(\tau)\mid T(\tau)\le H]$ and
$\tilde v_x \coloneqq \doubleV_{\tau \sim \pi_b(\cdot \mid x)}[T(\tau)\mid T(\tau)\le H]$.
In particular, \cref{lemma:reward_moments} shows that the right-hand side of~\eqref{eq:lemma41_proof_1} can be rewritten
as a deviation from the conditional mean.
We thus define the normalized threshold
\begin{equation*}
\eta(x,\varepsilon)
\coloneqq
\frac
    {\min\{H, H + 1 - \doubleE_{\tau \sim \pi_b(\cdot \mid x)}[R(\tau)] - \sigma_{b,x}\sqrt{\varepsilon}\}}
    {\doubleE_{\tau \sim \pi_b(\cdot \mid x)}[T(\tau)]}.
\end{equation*}
In addition, let
\begin{align}
    \bar{\eta}(x,\varepsilon) \coloneqq [\eta(x,\varepsilon)]_0^{z_0}.
\end{align}
Therefore, the following inequality holds:
\begin{align}
    \doubleP_{\tau \sim \pi_b(\cdot \mid x)} \left[T(\tau)\le \eta(x,\varepsilon)\,\doubleE_{\tau \sim \pi_b(\cdot \mid x)}[T(\tau)]\right]
    \ge
    \doubleP_{\tau \sim \pi_b(\cdot \mid x)} \left[T(\tau)\le \bar \eta(x,\varepsilon)\,\doubleE_{\tau \sim \pi_b(\cdot \mid x)}[T(\tau)]\right].
\end{align}
Therefore, by \cref{assumption:left_tail_T_tau},
\begin{align}
\doubleP_{\tau \sim \pi_b(\cdot \mid x)} \left[T(\tau)\le \bar \eta(x,\varepsilon)\,\doubleE_{\tau \sim \pi_b(\cdot \mid x)}[T(\tau)]\right]
\ge c_-\, f\big(\bar \eta(x,\varepsilon)\big).
\end{align}
\end{proof}

\subsection{Proof of \cref{thm:rl_suboptimality}}
\label{appendix:proof_thm:rl_suboptimality}

\begin{proof}
Fix $\xi>0$ and consider the slice $\Xcal(\xi)$.
Define $c_0(\xi; \kappa):=\inf_{x\in \Xcal(\xi)} c_x(\kappa)$, where $\kappa$ is the
trust-region radius in \cref{def:anticonc}.
For the following slice-wise bound, we additionally assume that on the slice $\Xcal(\xi)$ the comparator satisfies $\kappa_x \le \kappa$ for all $x\in\Xcal(\xi)$, which rules out degenerate solutions where the average $\chi^2$ constraint is satisfied by allocating large divergence to a small subset of instances.
Since $c_x(\varepsilon)$ is non-increasing in $\varepsilon$,
we have $c_x(\kappa_x)\ge c_x(\kappa)\ge c_0(\xi; \kappa)$ for all $x\in\Xcal(\xi)$.
Hence the base policy is $c_0(\xi;\kappa)$-anti-concentrated on $\Xcal(\xi)$.
Applying Lemma~3.2, with probability at least $1-\delta$,
\eqref{eq:suboptimality_c0} holds.
Finally, Propositions~\ref{prop:low_criticality},
\ref{prop:middle_criticality}, and~\ref{prop:high_criticality} give the stated regime-dependent behavior of $c_0(\xi; \kappa)$.
\end{proof}

\subsection{Deficient Relative Budget Regime}
\label{proof_prop:low_criticality}

In the regime of $\xi_x \to 0$, the compute budget $H$ is much smaller than the expected time-to-solution $\mu_x$.
Hence, we have $H \ll \mu_x$, and successful
solutions with $T(\tau) \le H$ are rare events.
In particular, by Assumption~\ref{assumption:left_tail_T_tau}, 
$q_x = \Theta\bigl(f(\xi_x)\bigr)$ as $\xi_x \to 0$.
The following proposition characterizes how this small success probability controls the reward variance and anti-concentration.

\begin{proposition}[Deficient relative budget]
\label{prop:low_criticality}
In the regime of $\xi_x \to 0$,
under Assumptions \ref{ass:moments} and \ref{assumption:left_tail_T_tau}, the RL reward variance and anti-concentration coefficient jointly satisfy
\begin{align}
    \sigma_{b,x}
    = \Ocal \bigl(H\sqrt{f(\xi_x)}\bigr)
    \quad \text{and} \quad
    c_0(\xi)
    = \Theta\bigl(f(\xi)\bigr).
    \label{eq:c0_low_criticality}
\end{align}
\end{proposition}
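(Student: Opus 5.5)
The argument has three steps: bound the reward moments using \cref{lemma:reward_moments}, bound $c_0(\xi)$ from above by the success probability, and bound it from below through \cref{lemma:anticonc_T_tail} by showing that the normalized threshold $\eta(x,\varepsilon)$ stays a constant fraction of $\xi_x$.

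\textbf{Step 1 (variance).} As $\xi_x\to 0$ we have $H\le z_0\mu_x$, so \cref{assumption:left_tail_T_tau} with $t=H$ gives
\[
q_x=\doubleP[T\le H]\in\bigl[c_-f(\xi_x),\,c_+f(\xi_x)\bigr],
\]
that is, $q_x=\Theta(f(\xi_x))$. On success, $R\in[1,H]$, so $\tilde v_x\le H^2$ and $(H+1-\tilde\mu_x)^2\le H^2$. Plugging these into \eqref{eq:VR_closed_form} gives $\sigma_{b,x}^2\le 2q_xH^2$, hence $\sigma_{b,x}=\Ocal(H\sqrt{f(\xi_x)})$.

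\textbf{Step 2 (upper bound on $c_0$).} The anti-concentration event requires $R>0$, i.e.\ $T\le H$. Therefore $c_x(\varepsilon)\le q_x\le c_+f(\xi_x)$. On the slice $\Xcal(\xi)$ we have $\xi_x\le 2\xi$, and $f$ is nondecreasing with $f(2z)\le Df(z)$ by the doubling condition. Hence $f(\xi_x)\le f(2\xi)\le Df(\xi)$, and so $c_0(\xi)=\Ocal(f(\xi))$.

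\textbf{Step 3 (lower bound on $c_0$).} This is the main obstacle. Using \cref{lemma:anticonc_T_tail}, I need $\eta(x,\varepsilon)\ge c\,\xi_x$ for a constant $c>0$; then $c_x\ge c_-f(\min\{c\xi_x,z_0\})$.

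Write $\mathbb{E}R=q_x(H+1-\tilde\mu_x)\le q_xH$ and $\sigma_{b,x}\sqrt{\varepsilon}\le H\sqrt{2q_x\varepsilon}$. The numerator of $\eta$ is then at least
\[
H\bigl(1-q_x-\sqrt{2q_x\varepsilon}\bigr)\ge H/2,
\]
because $q_x\to0$ uniformly on the slice as $\xi\to0$, with $\varepsilon=\kappa$ fixed. Dividing by $\mu_x$ gives $\eta\ge \xi_x/2\ge \xi/4$.

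Applying the doubling condition a constant number of times (three halvings from $2\xi$ down to $\xi/4$) gives $f(\xi/4)\ge D^{-2}f(\xi)$. Hence $c_0(\xi)\ge c_-D^{-2}f(\xi)$, which together with Step 2 yields $\Theta(f(\xi))$.

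The delicate point is uniformity: the threshold must hold simultaneously for all $x$ in the slice. This follows because $q_x\le c_+f(2\xi)\to0$ as $\xi\to0$, which requires $f(z)\to 0$ as $z\to0$. If $f$ is only assumed to satisfy $f(0)=0$ without right-continuity, I would instead use the bound $f(2\xi)\le Df(\xi)$ together with monotonicity and the doubling iteration to argue smallness.

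Discreteness of $T$ (the $+1$) affects only constants.
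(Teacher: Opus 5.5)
Your proof is correct and follows the same route as the paper: bound $\sigma_{b,x}^2$ by $\Ocal(q_xH^2)$ via the reward-moment lemma, then sandwich $\eta(x,\varepsilon)$ between constant multiples of $\xi_x$ and apply the left-tail assumption with doubling on the slice; your explicit bound $\eta(x,\varepsilon)\ge\xi_x/2$ (from $q_x\to 0$) supplies the justification for the constants $c_1,c_2$ that the paper only asserts. Drop your fallback for non-right-continuous $f$, since monotonicity and doubling do not force $f(0^+)=0$ (e.g.\ $f\equiv 1/2$ on $(0,z_0]$ satisfies both), so uniform smallness of $q_x$ on the slice must be taken as part of what the deficient regime means, which is the implicit premise the paper also relies on when it calls success ``a rare event.''
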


\begin{proof}
Since successful reasoning ($T(\tau)\le H$) is a rare event in the regime of $\xi \to 0$, we have $R(\tau)=0$ with high probability. Consequently, the reward variance is dominated by these rare success events.
Using~$q_x = \Theta\bigl(f(\xi_x)\bigr)$,
\begin{align*}
    \doubleE_{\tau \sim \pi_b(\cdot \mid x)}[R(\tau)^2]
    = \sum_{t=1}^H (H+1-t)^2 \doubleP_{\tau \sim \pi_b(\cdot \mid x)}[T(\tau) = t]
    \le \sum_{t=1}^H (H+1)^2 \doubleP_{\tau \sim \pi_b(\cdot \mid x)}[T(\tau) = t]
    = (H+1)^2 q_x.
\end{align*}
Since $\doubleE_{\tau \sim \pi_b(\cdot \mid x)}[R(\tau)]$ is negligible compared to the second moment in this regime, we have $\doubleV_{\tau \sim \pi_b(\cdot \mid x)}[R(\tau)] \asymp \doubleE_{\tau \sim \pi_b(\cdot \mid x)}[R(\tau)^2]$, which yields $\sigma_{b,x} = \Ocal \bigl(H\sqrt{f(\xi_x)}\bigr)$.

As for the anti-concentration coefficient, Lemma~\ref{lemma:anticonc_T_tail} states that there exists a coefficient $\eta(x,\varepsilon)$ such that
\begin{align}
    c_x(\varepsilon)
    = \doubleP_{\tau \sim \pi_b(\cdot \mid x)}\bigl[
        T(\tau)\le \eta(x,\varepsilon)\cdot \doubleE_{\tau \sim \pi_b(\cdot \mid x)}[T(\tau)]
      \bigr].
\end{align}
In the deficient relative budget regime where $H \ll \doubleE_{\tau \sim \pi_b(\cdot \mid x)}[T(\tau)]$, any trajectory achieving high reward must solve the problem within the budget $H$. Thus, Lemma~\ref{lemma:anticonc_T_tail} implies that the time threshold
$\eta(x,\varepsilon)\,\doubleE_{\tau \sim \pi_b(\cdot \mid x)}[T(\tau)]$ is comparable to $H$.
More precisely, there exist constants $0<c_1\le c_2<\infty$, independent of $x$,
such that for all sufficiently small $\xi_x$,
\begin{align}
    c_1 H
    \le
    \eta(x,\varepsilon)\,\doubleE_{\tau \sim \pi_b(\cdot \mid x)}[T(\tau)]
    \le
    c_2 H.
\end{align}
Substituting $\xi_x = H / \doubleE_{\tau \sim \pi_b(\cdot \mid x)}[T(\tau)]$, this is equivalent to
\begin{align}
    c_1 \xi_x
    \le 
    \eta(x,\varepsilon)
    \le
    c_2 \xi_x.
\end{align}

We now apply Assumption~\ref{assumption:left_tail_T_tau} with
$t = \eta(x,\varepsilon)\,\doubleE_{\tau \sim \pi_b(\cdot \mid x)}[T(\tau)]$.
Since $\eta(x,\varepsilon) \le c_2 \xi_x$ and we consider $\xi_x \to 0$, for all sufficiently small $\xi_x$ we have $t/\doubleE_{\tau \sim \pi_b(\cdot \mid x)}[T(\tau)] \le z_0$, falling within the validity range of the assumption. Thus,
\begin{align}
    c_-\, f\bigl(\eta(x,\varepsilon)\bigr)
    \le
    c_x(\varepsilon)
    \le
    c_+\, f\bigl(\eta(x,\varepsilon)\bigr).
\end{align}
Since $f$ is non-decreasing and satisfies the doubling property near the origin,
there exist constants $C_1,C_2>0$ such that
\begin{align}
    C_1\, f(\xi_x)
    \le
    f\bigl(\eta(x,\varepsilon)\bigr)
    \le
    C_2\, f(\xi_x).
\end{align}
For all sufficiently small $\xi_x$, we have
\begin{align}
    c_- C_1\, f(\xi_x)
    \le
    c_x(\varepsilon)
    \le
    c_+ C_2\, f(\xi_x),
\end{align}
Therefore, in the regime of $\xi_x\to 0$, the following equation holds:
\begin{align}
    c_x(\varepsilon)
    = \Theta\bigl(f(\xi_x)\bigr)
\end{align}

Finally, fix a small $\xi>0$ and consider the slice
$\mathcal{X}(\xi) \coloneqq \{x : \xi/2 \le \xi_x \le 2\xi\}$.
By the monotonicity and doubling property of $f$, we have
$f(\xi_x) = \Theta(f(\xi))$ uniformly over $x\in\mathcal{X}(\xi)$.
Taking the infimum over $x\in\mathcal{X}(\xi)$ yields
\begin{align}
    c_0(\xi)
    \coloneqq \inf_{x\in\mathcal{X}(\xi)} c_x(\varepsilon)
    = \Theta\bigl(f(\xi)\bigr).
\end{align}
Therefore, we obtained the desired results.
\end{proof}

\subsection{Balanced Relative Budget Regime}
\label{proof_prop:middle_criticality}

In the regime of $\xi_x = \Theta(1)$ (e.g., $H = \Theta(\mu_x)$),
both success and failure occur with probability $\Theta(1)$.
Thus, the truncation at $T(\tau)\le H$ does not substantially change the scale of the moments of $T(\tau)$.
We now provide the following proposition that characterizes the reward variance and anti-concentration in the balanced relative budget regime.

\begin{proposition}[Balanced relative budget]
\label{prop:middle_criticality}
Suppose Assumptions \ref{ass:moments} and \ref{assumption:left_tail_T_tau} hold and set $\varepsilon \le (q_{\min}/2)^2$.
Fix constants $0<\xi_{\min}\le \xi_{\max}<\infty$ and define
$\Xcal_b \coloneqq \{x: \xi_x \in [\xi_{\min}/2,\ 2\xi_{\max}]\}$.
Also, suppose that \cref{ass:balanced-nondeg-min} holds uniformly over $\Xcal_b$ and fix any $\xi \in [\xi_{\min},\xi_{\max}]$.
Then, for all instances $x\in X(\xi)$, the RL reward standard deviation and anti-concentration coefficient satisfy
\begin{align}
    \sigma_{b,x}
    = \Theta(H)
    \quad \text{and} \quad    
    c_0(\xi)
    &= \Theta(1).
    \label{eq:c0_middle_criticality}
\end{align}
\end{proposition}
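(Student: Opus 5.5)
We will prove the two claims separately, using \cref{lemma:reward_moments} for the variance and \cref{lemma:anticonc_T_tail} for the anti-concentration. Throughout, fix $x\in\Xcal(\xi)$ with $\xi\in[\xi_{\min},\xi_{\max}]$. Then $\xi_x\in[\xi_{\min}/2,2\xi_{\max}]$, so $x\in\Xcal_b$ and \cref{ass:balanced-nondeg-min} gives $q_{\min}\le q_x\le 1-q_{\min}$. We also use the fact that $\mu_x=H/\xi_x=\Theta(H)$.

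\textbf{Reward standard deviation.} By \eqref{eq:VR_closed_form},
\begin{align*}
\sigma_{b,x}^2=q_x\tilde v_x+q_x(1-q_x)(H+1-\tilde\mu_x)^2 .
\end{align*}
\emph{Upper bound.} Since $1\le T\le H$ on the success event, we have $0\le H+1-\tilde\mu_x\le H$ and $\tilde v_x\le H^2$. Hence $\sigma_{b,x}^2\le 2H^2$.

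\emph{Lower bound.} Using \eqref{eq:balanced-variance} and \cref{ass:moments},
\begin{align*}
\sigma_{b,x}^2\ge q_{\min}c_v v_x=\Omega(\mu_x^2)=\Omega(H^2/\xi_{\max}^2).
\end{align*}
Together these give $\sigma_{b,x}=\Theta(H)$ uniformly on the slice. (The second term of \eqref{eq:VR_closed_form} could also be used for the lower bound, but the variance term avoids controlling $\tilde\mu_x$.)

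\textbf{Anti-concentration.} The upper bound $c_x(\varepsilon)\le 1$ is trivial, so only a uniform lower bound is needed. By \cref{lemma:anticonc_T_tail}, $c_x(\varepsilon)\ge c_-f(\bar\eta(x,\varepsilon))$, so it suffices to show $\eta(x,\varepsilon)\ge z_\star$ for some constant $z_\star>0$ independent of $x$. Then monotonicity of $f$ and the doubling condition (applied finitely many times from $z_1$ down to $\min\{z_\star,z_0\}$) give $f(\bar\eta)\ge D^{-k}f(z_1)>0$.

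The numerator of $\eta$ is essentially $H+1-\doubleE[R]-\sigma_{b,x}\sqrt{\varepsilon}$. By \eqref{eq:ER_closed_form}, $\doubleE[R]=q_x(H+1-\tilde\mu_x)\le(1-q_{\min})(H+1)$. Combined with $\sigma_{b,x}\le\sqrt2 H$ and $\sqrt{\varepsilon}\le q_{\min}/2$, the numerator is at least
\begin{align*}
q_{\min}(H+1)-\tfrac{\sqrt2}{2}q_{\min}H .
\end{align*}
This is of order $q_{\min}H$ provided the constant in the upper bound for $\sigma_{b,x}$ is sharpened below $2$. Dividing by $\mu_x\le H/\xi_{\min}\cdot 2$ then gives $\eta\ge c\,q_{\min}\xi_{\min}$, which yields the claimed $z_\star$.

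\textbf{Main obstacle.} The delicate step is this numerator bound: the crude estimate $\sigma_{b,x}\le\sqrt2H$ is too weak by a constant factor. I would first try to bound $\sigma_{b,x}\le H$ exactly. Because $R\in[0,H]$, Popoviciu's inequality gives $\sigma_{b,x}\le H/2$. With $\sqrt\varepsilon\le q_{\min}/2$, the numerator is then at least
\begin{align*}
q_{\min}(H+1)-\tfrac{q_{\min}}{4}H\ge \tfrac34 q_{\min}H ,
\end{align*}
which closes the argument. Finally, taking the infimum over $x\in\Xcal(\xi)$, and noting that $c_x$ is non-increasing in $\varepsilon$ so the bound transfers to $\kappa\le(q_{\min}/2)^2$, gives $c_0(\xi)=\Theta(1)$.
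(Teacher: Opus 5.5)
Your proposal is correct and follows essentially the same route as the paper. It gets the variance lower bound from $q_x\tilde v_x\ge q_{\min}c_v v_x$ via \cref{lemma:reward_moments}, pairs it with a trivial $O(H^2)$ upper bound, and then feeds a uniform lower bound on $\eta(x,\varepsilon)$ (from $\doubleE[R]\le(1-q_{\min})(H+1)$ and a bound on $\sigma_{b,x}$) into \cref{lemma:anticonc_T_tail}. The ``main obstacle'' you flag does not exist, since even $\sigma_{b,x}\le\sqrt2H$ leaves the numerator at least $(1-\tfrac{\sqrt2}{2})q_{\min}H>0$; the paper simply uses $\sigma_{b,x}\le H$ (because $R\in[0,H]$) to get $(q_{\min}-\sqrt{\varepsilon})H\ge\tfrac{q_{\min}}{2}H$, and your Popoviciu bound is a sharper version of the same step.
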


\begin{proof}

Fix an instance $x \in \Xcal_b$ (i.e., $\xi_x \in [\xi_{\min}, \xi_{\max}]$).
By Assumption~\ref{ass:moments},
\begin{align}
    \doubleV_{\tau \sim \pi_b(\cdot \mid x)}[T(\tau)] = \Theta\left(\doubleE_{\tau \sim \pi_b(\cdot \mid x)}[T(\tau)]^2\right).
\end{align}
In the balanced regime, we have $\mu_x = H/\xi_x = \Theta(H)$. Consequently, the unconditional variance satisfies
\begin{align}
    v_x \coloneqq \doubleV_{\tau \sim \pi_b(\cdot \mid x)}[T(\tau)] = \Theta(H^2).
    \label{eq:proof-prop53-uncond-var}
\end{align}
We now consider the variance conditioned on success. 
By Assumption~\ref{ass:balanced-nondeg-min}, the truncation at $H$ preserves the order of the variance:
\begin{align}
    \tilde{v}_x \coloneqq \doubleV_{\tau \sim \pi_b(\cdot \mid x)}[T(\tau) \mid T(\tau) \le H] 
    \ge c_v v_x.
    \label{eq:proof-prop53-cond-var}
\end{align}
From Lemma~\ref{lemma:reward_moments}, the variance of the reward $R(\tau)$ is lower-bounded by the conditional variance term:
\begin{align}
    \doubleV_{\tau \sim \pi_b(\cdot \mid x)}[R(\tau)] 
    = q_x \tilde{v}_x + q_x(1-q_x)(H+1-\tilde{\mu}_x)^2
    \ge q_x \tilde{v}_x.
\end{align}
Using the bound on the success probability $q_x \ge q_{\min}$ from Assumption~\ref{ass:balanced-nondeg-min} and the variance bound from \eqref{eq:proof-prop53-cond-var}, we obtain
\begin{align}
    \doubleV_{\tau \sim \pi_b(\cdot \mid x)}[R(\tau)] = \Omega(H^2).
\end{align}
Since the reward is bounded by $H+1$, the variance is trivially upper-bounded by $\Ocal(H^2)$. Therefore, the standard deviation scales as $\sigma_{b,x} = \Theta(H)$.

By Lemma~\ref{lemma:anticonc_T_tail}, for each $\varepsilon>0$ there exists $\eta(x,\varepsilon)\in(0,1]$ such that
\begin{align}
c_x(\varepsilon)
=
\doubleP_{\tau \sim \pi_b(\cdot \mid x)}
[T(\tau)\le \eta(x,\varepsilon)\,\doubleE_{\tau \sim \pi_b(\cdot \mid x)}[T(\tau)]].
\end{align}
It suffices to show that $\eta(x,\varepsilon)$ is uniformly bounded away from $0$ over $x\in\Xcal_b$ for a fixed
$\varepsilon$ (equivalently, for $\varepsilon$ in a fixed compact interval).
Recall the definition of $\eta(x,\varepsilon)$
in Lemma~\ref{lemma:anticonc_T_tail}:
\begin{align}
\eta(x,\varepsilon)
=
\frac{\min\{H,\ H+1-\doubleE_{\tau \sim \pi_b(\cdot \mid x)}[R(\tau)]-\sigma_{b,x}\sqrt{\varepsilon}\}}{\doubleE_{\tau \sim \pi_b(\cdot \mid x)}[T(\tau)]}.
\end{align}
Using $R(\tau)\le H$ and $q_x\le 1-q_{\min}$, we have $\doubleE_{\tau \sim \pi_b(\cdot \mid x)}[R(\tau)]\le (1-q_{\min})H$; hence,
\begin{align}
    H+1-\doubleE_{\tau \sim \pi_b(\cdot \mid x)}[R(\tau)]\ge q_{\min}H.
\end{align}
Moreover, $\sigma_{b,x}\le H$ since $R(\tau)\in[0,H]$. Therefore, for any fixed $0<\varepsilon\le (q_{\min}/2)^2$,
\begin{align}
    H+1-\doubleE_{\tau \sim \pi_b(\cdot \mid x)}[R(\tau)] -\sigma_{b,x}\sqrt{\varepsilon}
    \ge (q_{\min}-\sqrt{\varepsilon})H
    \ge \frac{q_{\min}}{2}H.
\end{align}
Since $\doubleE_{\tau \sim \pi_b(\cdot \mid x)}[T(\tau)]=\mu_x=\Theta(H)$ on $\Xcal_b$, we obtain a uniform lower bound
\begin{align}
    \eta(x,\varepsilon)\ \ge\ \frac{(q_{\min}/2)H}{\mu_x}
    =\frac{q_{\min}}{2}\,\xi_x
    \ \ge\ \frac{q_{\min}}{2}\,\xi_{\min}
    =: \eta_{\min}>0.
    \end{align}
Let $\bar\eta(x,\varepsilon):=[\eta(x,\varepsilon)]_0^{z_0}$. 
Then Assumption~\ref{assumption:left_tail_T_tau}
yields
\begin{align}
c_x(\varepsilon)
\ge \doubleP_{\tau \sim \pi_b(\cdot \mid x)}[T(\tau)\le \bar\eta(x,\varepsilon)\,\mu_x]
\ge c_-f(\bar\eta(x,\varepsilon))
\ge c_- f (\min\{\eta_{\min},z_0\})
=: c_\star>0.
\end{align}
Taking the infimum over $x\in X(\xi)$ (with $\xi$ in the balanced range) gives $c_0(\xi)\ge c_\star$, while trivially
$c_0(\xi)\le 1$. Hence $c_0(\xi)=\Theta(1)$ in the balanced regime, proving~\eqref{eq:c0_middle_criticality}.
\end{proof}

\subsection{Ample Relative Budget Regime}
\label{proof_prop:high_criticality}
In the regime of $\xi \to \infty$, we have $H \gg \mu_x$. Failure events with $T(\tau) > H$ are rare, so $q_x \to 1$ and the truncation at $H$ becomes negligible. The reward is approximately $R(\tau) \approx H - T(\tau)$.
The following proposition characterizes how the reward variance and anti-concentration behave in this ample relative budget regime.

\begin{proposition}[Ample relative budget]
\label{prop:high_criticality}
Suppose Assumption~\ref{ass:moments} and \ref{assumption:left_tail_T_tau} hold and
fix $\varepsilon \le \varepsilon_0$ for a sufficiently small universal constant $\varepsilon_0\in(0,1)$.
In the regime $\xi_x \to \infty$, the RL reward standard deviation and anti-concentration coefficient jointly satisfy
\begin{align}
    \sigma_{b,x}
    = \Ocal(H / \xi_x)
    \quad \text{and} \quad
    c_0(\xi)
    = \Theta(1).
    \label{eq:c0_high_criticality}
\end{align}
\end{proposition}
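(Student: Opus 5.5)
The plan is to treat the two claims separately, mirroring the deficient and balanced cases, and to work throughout with the reward-moment identities of Lemma~\ref{lemma:reward_moments} together with the threshold representation of Lemma~\ref{lemma:anticonc_T_tail}.

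\textbf{Variance bound.} First I will show $\sigma_{b,x}=\Ocal(H/\xi_x)=\Ocal(\mu_x)$. From \eqref{eq:VR_closed_form}, $\doubleV[R]=q_x\tilde v_x+q_x(1-q_x)(H+1-\tilde\mu_x)^2$.
\begin{itemize}
\item For the first term, $\tilde v_x\le \doubleE[(T-\mu_x)^2\mid T\le H]\le v_x/q_x$, so $q_x\tilde v_x\le v_x=\Theta(\mu_x^2)$ by Assumption~\ref{ass:moments}.
\item For the second term, $(H+1-\tilde\mu_x)^2\le (H+1)^2$. By Chebyshev, $1-q_x=\doubleP[T>H]\le v_x/(H-\mu_x)^2=\Ocal(\xi_x^{-2})$ once $\xi_x\ge 2$. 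The term is therefore $\Ocal(H^2\xi_x^{-2})=\Ocal(\mu_x^2)$.
\end{itemize}
Hence $\sigma_{b,x}=\Ocal(\mu_x)=\Ocal(H/\xi_x)$.

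\textbf{Anti-concentration.} Next I will bound the threshold $\eta(x,\varepsilon)$ from below uniformly. By definition, its numerator is $\min\{H,\,H+1-\doubleE[R]-\sigma_{b,x}\sqrt\varepsilon\}$. From \eqref{eq:ER_closed_form}, $H+1-\doubleE[R]=(1-q_x)(H+1)+q_x\tilde\mu_x$. The plan is to show $q_x\tilde\mu_x\ge \mu_x-\doubleE[T\,\doubleI\{T>H\}]$, and to bound the tail mass $\doubleE[T\,\doubleI\{T>H\}]\le \doubleE[T^2]/H=\Ocal(\mu_x^2/H)=\Ocal(\mu_x/\xi_x)$. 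This gives $H+1-\doubleE[R]\ge \mu_x(1-o(1))$ as $\xi_x\to\infty$. Combined with $\sigma_{b,x}\le C_\sigma\mu_x$ from the first step, I get
\[
\eta(x,\varepsilon)\ge 1-o(1)-C_\sigma\sqrt{\varepsilon}\ge \tfrac12
\]
whenever $\varepsilon\le\varepsilon_0:=(4C_\sigma)^{-2}$ and $\xi_x$ is large; the $\min\{H,\cdot\}$ is inactive since $\mu_x\ll H$. Then $\bar\eta\ge\min\{1/2,z_0\}$, and Lemma~\ref{lemma:anticonc_T_tail} with Assumption~\ref{assumption:left_tail_T_tau} gives $c_x(\varepsilon)\ge c_-f(\min\{1/2,z_0\})$. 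This is positive provided $f$ is positive there; I will use the doubling condition together with $f(z_1)>0$ and monotonicity to argue $f>0$ on $(0,z_0]$. The upper bound $c_0\le 1$ is trivial. Taking the infimum over $\Xcal(\xi)$, all of whose members have $\xi_x\ge\xi/2\to\infty$, yields $c_0(\xi)=\Theta(1)$ uniformly.

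\textbf{Main obstacle.} The delicate step is making the constants uniform. The constant $C_\sigma$ comes from the implicit $\Theta$ in Assumption~\ref{ass:moments}, which I take to be uniform over $\Xcal_\sharp$; this is what makes $\varepsilon_0$ a universal constant. The care lies in the lower bound on $H+1-\doubleE[R]$: the $\sigma\sqrt\varepsilon$ shift must not push the threshold toward $0$, since the tail bound $f$ only helps at thresholds of order $\mu_x$.
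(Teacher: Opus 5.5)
Your proof is correct and follows essentially the same route as the paper: the variance decomposition of Lemma~\ref{lemma:reward_moments} plus a second-moment tail bound gives $1-q_x=\Ocal(\xi_x^{-2})$ and $\sigma_{b,x}=\Ocal(\mu_x)$. Then the tail-mass estimate $\doubleE[T\,\doubleI\{T>H\}]=\Ocal(\mu_x/\xi_x)$, together with positivity of $f$ via the doubling condition, yields a uniform lower bound on $\eta(x,\varepsilon)$. Your variations are cosmetic and slightly cleaner: Chebyshev in place of Markov on $T^2$, the bound $q_x\tilde v_x\le v_x$, the exact identity $q_x\tilde\mu_x=\mu_x-\doubleE[T\,\doubleI\{T>H\}]$ in place of the paper's bound on $\mu_x-\tilde\mu_x$ (which the paper states as an equality but is only an inequality), and an explicit $\varepsilon_0=(4C_\sigma)^{-2}$.
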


\begin{proof}
In the ample relative budget regime we have $\xi_x \to \infty$.
By Markov's inequality,
\begin{align}
  1-q_x
  =
  \doubleP_{\tau \sim \pi_b(\cdot \mid x)}[T(\tau)>H]
  \le
  \frac{\doubleE_{\tau \sim \pi_b(\cdot \mid x)}[T(\tau)^2]}{H^2}.
\end{align}
By Assumption~\ref{ass:moments}, since $v_x=\doubleV[T(\tau)]=\Theta(\mu_x^2)$, we have
\begin{align}
  \doubleE_{\tau \sim \pi_b(\cdot \mid x)}[T(\tau)^2]
  =
  \doubleV_{\tau \sim \pi_b(\cdot \mid x)}[T(\tau)]+\mu_x^2
  =
  \Theta(\mu_x^2),
\end{align}
and therefore $1-q_x = \Ocal(\mu_x^2/H^2) = \Ocal(1/\xi_x^2)$.

Let $R(\tau)$ be the shaped reward under $\pi_b(\cdot\mid x)$, and recall
$\sigma_{b,x}^2 := \doubleV_{\tau\sim\pi_b(\cdot\mid x)}[R(\tau)]$.
Lemma~\ref{lemma:reward_moments} gives
\begin{align}
  \doubleV_{\tau\sim\pi_b(\cdot\mid x)}[R(\tau)]
  =
  q_x \tilde v_x
  +
  q_x(1-q_x) (H+1-\tilde\mu_x)^2.
  \label{eq:varR-decomp}
\end{align}

Since $\tilde v_x \le \doubleE_{\tau\sim\pi_b(\cdot\mid x)}[T(\tau)^2\mid T(\tau)\le H]$,
\begin{align}
  \tilde v_x
  \le
  \frac{\doubleE_{\tau\sim\pi_b(\cdot\mid x)}[T(\tau)^2 \,\mathbb{I}\{T(\tau)\le H\}]}{\doubleP_{\tau\sim\pi_b(\cdot\mid x)}[T(\tau)\le H]}
  \le
  \frac{\doubleE_{\tau\sim\pi_b(\cdot\mid x)}[T(\tau)^2]}{q_x}
  =
  \Ocal(\mu_x^2),
\end{align}
using $\doubleE_{\tau\sim\pi_b(\cdot\mid x)}[T(\tau)^2]=\Theta(\mu_x^2)$ and $q_x\to 1$.

Next, note that $H=\xi_x\mu_x$ and $\tilde\mu_x\le H$ imply
$0\le H+1-\tilde\mu_x \le H+1 = \Theta(H)$. Hence the second term in
\eqref{eq:varR-decomp} satisfies
\begin{align}
  q_x(1-q_x)\,(H+1-\tilde\mu_x)^2
  \le
  (1-q_x)\,(H+1)^2
  =
  \Ocal \left(\frac{\mu_x^2}{H^2}\right)H^2
  =
  \Ocal (\mu_x^2).
\end{align}
Combining the two bounds in \eqref{eq:varR-decomp} yields
\begin{align}
    \sigma_{b,x}=\sqrt{\doubleV_{\tau\sim\pi_b(\cdot\mid x)}[R(\tau)]} = \Ocal(\mu_x).
\end{align}

By Lemma~\ref{lemma:anticonc_T_tail}, for each $(x,\varepsilon)$ there exists a scalar
$\eta(x,\varepsilon)>0$ such that
\begin{align}
  c_x(\varepsilon)
  =
  \doubleP_{\tau \sim \pi_b(\cdot \mid x)} \left[T(\tau)\le \eta(x,\varepsilon)\,\mu_x\right].
\end{align}
To make the truncation effect explicit, note that
\begin{align}
    \mu_x- \tilde{\mu}_x
    = \frac{\doubleE_{\tau \sim \pi_b(\cdot \mid x)}[T(\tau) \mathbb{I}\{T(\tau)>H\}]}{q_x}
    \le \frac{\sqrt{\doubleE_{\tau \sim \pi_b(\cdot \mid x)}[T(\tau)^2]\doubleP_{\tau \sim \pi_b(\cdot \mid x)}[T(\tau)>H]}}{q_x},
\end{align}
Applying $\doubleP_{\tau \sim \pi_b(\cdot \mid x)}[T(\tau)>H]=O (\mu_x^2/H^2)$ from the Markov inequality and $\doubleE_{\tau \sim \pi_b(\cdot \mid x)}[T(\tau)^2]=\Theta(\mu_x^2)$ by \cref{ass:moments},
we get $\mu_x-\tilde\mu_x=\Ocal(\mu_x^2/H)=\Ocal(\mu_x/\xi_x)$ and hence $\tilde\mu_x/\mu_x\to 1$ as $\xi_x\to\infty$.
Also, Lemma~5.1 gives
\begin{align}
    \eta(x,\varepsilon)\mu_x=\min\{H,\ H+1-\doubleE_{\tau \sim \pi_b(\cdot \mid x)}[R(\tau)]-\sigma_{b,x}\sqrt{\varepsilon}\},
\end{align}
and since $\doubleE_{\tau \sim \pi_b(\cdot \mid x)}[R(\tau)]=q_x(H+1-\tilde\mu_x)$ and $\sigma_{b,x}=\Ocal(\mu_x)$, for all sufficiently large $\xi_x$ we have
$\eta(x,\varepsilon)\ge \tilde\mu_x/\mu_x - C\sqrt{\varepsilon}$ for some constant $C$.
Thus, choosing $\varepsilon_0>0$ small enough yields $\eta(x,\varepsilon)\ge \eta_{\min}>0$ uniformly for all
$\varepsilon\le \varepsilon_0$, which implies $c_x(\varepsilon)=\Theta(1)$ by Assumption~\ref{assumption:left_tail_T_tau}.

Let $z_0$ be the constant from Assumption~\ref{assumption:left_tail_T_tau} and define the clamped value
$\bar\eta(x,\varepsilon):=[\eta(x,\varepsilon)]_0^{z_0}$.
By Assumption~\ref{assumption:left_tail_T_tau}, there exists $z_1\in(0,z_0]$ with $f(z_1)>0$.
Choose $\varepsilon$ small enough so that
$\min\{\eta_{\min},z_0\}\ge z_1$; equivalently, $\bar\eta(x,\varepsilon)\ge z_1$ for all $x$ with $\xi_x\ge\xi_0$.
By monotonicity,
\begin{align}
    c_x(\varepsilon)=\doubleP_{\tau \sim \pi_b(\cdot \mid x)}[T\le \eta(x,\varepsilon)\mu_x]\ \ge\ \doubleP_{\tau \sim \pi_b(\cdot \mid x)}[T\le \bar\eta(x,\varepsilon)\mu_x]
\ \ge\ c_- f(\bar\eta(x,\varepsilon)).
\end{align}
Since $f$ is nondecreasing and satisfies the doubling condition with $f(z_1)>0$, we have $f(z)>0$ for all $z\in(0,z_0]$; hence
$f(\bar\eta(x,\varepsilon))\ge f(\min\{\eta_{\min}, z_0\})=:c_f>0$ and therefore $c_x(\varepsilon)\ge c_-c_f=\Omega(1)$.
Then Assumption~\ref{assumption:left_tail_T_tau} yields, uniformly over such $x$,
\begin{align}
  c_x(\varepsilon)
  &\ge
  \doubleP_{\tau\sim\pi_b(\cdot\mid x)} \left[T(\tau)\le \bar\eta(x,\varepsilon)\,\mu_x \right]
  \ge
  c_-\, f(\bar\eta(x,\varepsilon))
  \ge
  c_-\, f(z_1)
  =: c_\star
  >0.
\end{align}
Hence, for any slice $\Xcal(\xi)$ with $\xi\ge\xi_0$ we have
\begin{align}
  c_0(\xi)=\inf_{x\in\Xcal(\xi)} c_x(\varepsilon)\ \ge\ c_\star,
\end{align}
while trivially $c_0(\xi)\le 1$. Therefore $c_0(\xi)=\Theta(1)$.
\end{proof}

\subsection{Standard deviation under Gamma Model}
\label{proof_lemma:sigma_RL_gamma}

\begin{lemma}[Standard deviation under Gamma model.]
\label{lemma:sigma_RL_gamma}
Let $\gamma$ denote the regularized lower incomplete gamma function.
Define
\begin{align*}
\label{eq:c_rl_gamma}
    \bigl(C_{\mathrm{RL}}(K,\xi)\bigr)^2
    &\coloneqq (K\xi)^2 \gamma(K,K\xi) - 2K^2\xi\,\gamma(K{+}1,K\xi) \\
    &\quad + K(K{+}1)\,\gamma(K{+}2,K\xi) \\
    &\quad - \left(K\xi\,\gamma(K,K\xi) - K\,\gamma(K{+}1,K\xi)\right)^2.
\end{align*}
Then, the standard deviation of the cumulative reward under the gamma model satisfies
\begin{align}
\sigma_b(K,\xi) = \frac{C_{\mathrm{RL}}(K,\xi)}{p}
= \frac{C_{\mathrm{RL}}(K,\xi)}{K\xi}\,H .
\end{align}
\end{lemma}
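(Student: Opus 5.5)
The plan is to compute the first two moments of the proxy reward $R(\tau)=\max\{0,H-T\}$ directly under $T\sim\mathsf{Gamma}(K,p)$. The tool is the standard identity for truncated gamma moments: for integer $j\ge 0$,
\begin{align*}
\doubleE\bigl[T^j\,\doubleI\{T\le H\}\bigr]=\frac{\Gamma(K+j)}{\Gamma(K)\,p^j}\,\gamma(K+j,pH).
\end{align*}
To see this, absorb $t^j$ into the gamma density. The integrand becomes a $\mathsf{Gamma}(K+j,p)$ density, with normalizing constant $\Gamma(K+j)/(\Gamma(K)p^j)$, and its regularized lower incomplete gamma evaluated at $pH$ appears. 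Throughout, I substitute $pH=K\xi$, which follows from $\xi=H/\doubleE[T]=pH/K$.

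\textbf{Step 1 (first moment).} Since $R=(H-T)\doubleI\{T\le H\}$,
\begin{align*}
\doubleE[R]=H\gamma(K,K\xi)-\frac{K}{p}\gamma(K+1,K\xi)=\frac{1}{p}\bigl(K\xi\,\gamma(K,K\xi)-K\,\gamma(K+1,K\xi)\bigr),
\end{align*}
where the last equality uses $H=K\xi/p$.

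\textbf{Step 2 (second moment).} Expand $(H-T)^2=H^2-2HT+T^2$ on the event $\{T\le H\}$ and apply the identity with $j=0,1,2$; note $\Gamma(K+2)/\Gamma(K)=K(K+1)$. This gives
\begin{align*}
\doubleE[R^2]=\frac{1}{p^2}\Bigl((K\xi)^2\gamma(K,K\xi)-2K^2\xi\,\gamma(K+1,K\xi)+K(K+1)\gamma(K+2,K\xi)\Bigr).
\end{align*}

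\textbf{Step 3 (assemble).} Compute $\doubleV[R]=\doubleE[R^2]-\doubleE[R]^2$. The common factor $p^{-2}$ pulls out, and what remains is exactly the displayed expression for $(C_{\mathrm{RL}}(K,\xi))^2$. Taking square roots gives $\sigma_b=C_{\mathrm{RL}}/p$, and substituting $1/p=H/(K\xi)$ gives the second form.

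The only real care point is the bookkeeping in the cross term. Here $2H\cdot K/p=2K^2\xi/p^2$ must be matched to the coefficient $2K^2\xi$ in the definition. I also need to check that the nonnegativity of $C_{\mathrm{RL}}^2$ is automatic, which holds because it equals $p^2$ times a variance. I expect no genuine obstacle beyond verifying the truncated-moment identity, which is a one-line change of the density's shape parameter.
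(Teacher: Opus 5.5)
Your proposal is correct and follows the paper's proof essentially step for step. Both compute $\doubleE[R]$ and $\doubleE[R^2]$ for the proxy reward $\max\{0,H-T\}$ via the truncated gamma moments $\frac{\Gamma(K+j)}{\Gamma(K)p^j}\gamma(K+j,pH)$ for $j=0,1,2$, take $\doubleE[R^2]-\doubleE[R]^2$, factor out $p^{-2}$, and substitute $pH=K\xi$, with the same cross-term bookkeeping $2H\cdot K/p = 2K^2\xi/p^2$.
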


\begin{proof}
For simplicity, in the Gamma model analysis (Section 5.3 and Appendix A.6), we treat the step count $T$ as a continuous random variable and use the continuous analogue of the bi-level shaped reward $R = \max\{0, H-T\}$
which differs from \cref{eq:R_tau_x} by an additive constant (i.e., ``$+1$'') on successful trajectories.
This simplification does not affect the asymptotic scalings in our regime analyses, and we can obtain the same order-level conclusions.

Define $C_\mathrm{RL}(K,\xi)$ as
\begin{equation}
    \bigl(C_\mathrm{RL}(K,\xi)\bigr)^2 \coloneqq p^2 \doubleV[R(\tau)]
\end{equation}
To find an expression for this quantity, we must first compute the variance of $R(\tau)$, which is given by the standard formula $\doubleV[R(\tau)] = \doubleE[R(\tau)^2] - (\doubleE[R(\tau)])^2$. 
We proceed by calculating the first two raw moments of $R(\tau)$.
For this analysis, the discrete negative binomial distribution of $T(\tau)$ is approximated by a continuous Gamma distribution, $T(\tau) \sim \text{Gamma}(K, p)$, which is a valid approximation for small success probabilities $p$.
The probability density function is
\begin{align}
    f(t; K, p) = \frac{p^K}{\Gamma(K)}t^{K-1}e^{-pt}.
\end{align}

First, we compute the expected reward, $\doubleE[R(\tau)]$.
\begin{align}
    \doubleE[R(\tau)] &= \doubleE[\max(0, H - T(\tau))] \nonumber \\
    &= \int_0^H (H - t) f(t; K,p) \,dt \nonumber \\
    &= H \int_0^H f(t; K,p) \,dt - \int_0^H t f(t; K,p) \,dt
\end{align}
The first integral represents the cumulative distribution function of the Gamma distribution evaluated at $H$, which is the regularized lower incomplete gamma function, $\gamma(K, pH)$.
The second integral corresponds to the first moment of the Gamma distribution truncated at $H$.
Leveraging the results from Lemma A.1, this integral evaluates to $\frac{K}{p}\gamma(K+1, pH)$.
Therefore, the expected reward is:
\begin{equation}
    \doubleE[R(\tau)] = H \cdot \gamma(K, pH) - \frac{K}{p} \gamma(K+1, pH)
\end{equation}

Next, we compute the second raw moment of the reward, $\doubleE[R(\tau)^2]$.
\begin{align}
    \doubleE[R(\tau)^2] &= \doubleE[(\max(0, H - T(\tau)))^2] \nonumber \\
    &= \int_0^H (H - t)^2 f(t; K,p) \,dt \nonumber \\
    &= \int_0^H (H^2 - 2Ht + t^2) f(t; K,p) \,dt \nonumber \\
    &= H^2 \int_0^H f(t) \,dt - 2H \int_0^H t f(t) \,dt + \int_0^H t^2 f(t) \,dt
\end{align}
The first two terms are directly related to the moments calculated for $\doubleE[R(\tau)]$. The third term is the second moment of the truncated Gamma distribution, which evaluates to $\frac{K(K+1)}{p^2}\gamma(K+2, pH)$.
Substituting these results, we get:
\begin{equation}
    \doubleE[R(\tau)^2] = H^2 \gamma(K, pH) - 2H \frac{K}{p} \gamma(K+1, pH) + \frac{K(K+1)}{p^2} \gamma(K+2, pH)
\end{equation}

Now we can compute $\bigl(C_\mathrm{RL}(K,\xi)\bigr)^2 = p^2 \left( \doubleE[R(\tau)^2] - (\doubleE[R(\tau)])^2 \right)$.
\begin{align*}
    \bigl(C_\mathrm{RL}(K,\xi)\bigr)^2 = & \ p^2 \left[ H^2 \gamma(K, pH) - \frac{2HK}{p} \gamma(K+1, pH) + \frac{K(K+1)}{p^2} \gamma(K+2, pH) \right] \\
    & - p^2 \left[ H \gamma(K, pH) - \frac{K}{p} \gamma(K+1, pH) \right]^2 \\
    = & \ [ (pH)^2 \gamma(K, pH) - 2pHK \gamma(K+1, pH) + K(K+1) \gamma(K+2, pH) ] \\
    & - [ pH \gamma(K, pH) - K \gamma(K+1, pH) ]^2
\end{align*}
Finally, we introduce the relative budget  $\xi \coloneqq \frac{pH}{K}$, which implies $pH = K\xi$. Substituting this into the expression above yields the final result as stated in Equation (23) of the paper:
\begin{align}
    \bigl(C_\mathrm{RL}(K,\xi)\bigr)^2 = & \ [ (K\xi)^2 \gamma(K, K\xi) - 2K^2\xi \gamma(K+1, K\xi) + K(K+1) \gamma(K+2, K\xi) ] \nonumber \\
    & - [ K\xi \gamma(K, K\xi) - K \gamma(K+1, K\xi) ]^2
\end{align}
\end{proof}

\cref{fig:c_rl} shows the relationship between $\xi$ and $C_\mathrm{RL}(K, \xi)$ for each $K \in \{1, 2, 5, 10\}$.
Observe that $C_\mathrm{RL}(K, \xi)$ converges to $\sqrt{K}$ when $\xi \to \infty$.

\begin{figure}[h]
    \centering
    \includegraphics[width=0.4\linewidth]{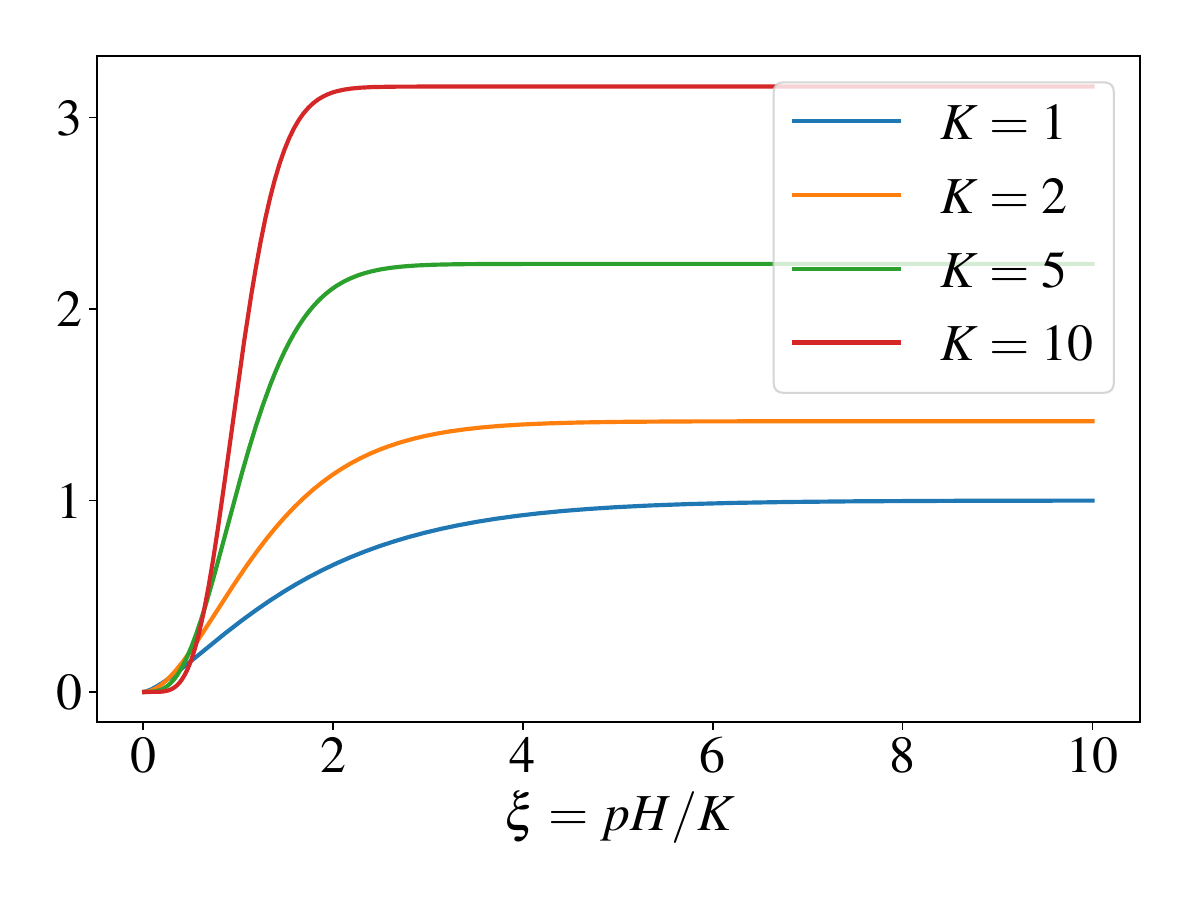}
    \caption{Relationship between $\xi$ and $C_\mathrm{RL}(K, \xi)$.}
    \label{fig:c_rl}
\end{figure}

\subsection{Anti-concentration under Gamma Model}
\label{proof_lemma:anti-concentration-gamma}

\begin{lemma}[Anti-concentration under Gamma model.]
\label{lemma:anti-concentration-gamma}
    Define 
    $\psi (K,\xi,\varepsilon)
    \coloneqq 1 - \gamma(K,K\xi) + \frac{1}{\xi} \gamma(K{+}1,K\xi) - \frac{C_{\mathrm{RL}}(K,\xi)}{K\xi} \sqrt{\varepsilon}$.
    Under the gamma model, the anti-concentration coefficient is represented as
    \begin{align}
    c(K,\xi,\varepsilon) 
    & = \doubleP_{\tau\sim\pi_b(\cdot\mid x)}\left[ T(\tau) \le H \cdot [\psi (K,\xi,\varepsilon)]_0^1 \right] \\
    & = \gamma \bigl(K, K\xi \cdot [\psi(K,\xi,\varepsilon)]_0^1 \bigr).
    \end{align}
\end{lemma}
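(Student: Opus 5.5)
The plan is to reduce the anti-concentration event to a threshold event on $T(\tau)$, exactly as in the proof of \cref{lemma:anticonc_T_tail}, and then evaluate every quantity in closed form under the gamma model. We work with the continuous proxy reward $R(\tau)=\max\{0,H-T(\tau)\}$ and $T(\tau)\sim\mathsf{Gamma}(K,p)$ with $pH=K\xi$. The mean reward was already computed in the proof of \cref{lemma:sigma_RL_gamma}:
\begin{align*}
\doubleE[R(\tau)]=H\gamma(K,K\xi)-\tfrac{K}{p}\gamma(K{+}1,K\xi).
\end{align*}
The same lemma gives $\sigma_b=C_{\mathrm{RL}}(K,\xi)H/(K\xi)$.

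First, take the event $R(\tau)\ge\doubleE[R(\tau)]+\sigma_b\sqrt{\varepsilon}$. Its right-hand side is strictly positive whenever $\sigma_b>0$ (or $\varepsilon>0$), and $\doubleE[R]\ge 0$. Since $R(\tau)=0$ on failures, the event therefore forces $T(\tau)<H$ and $R(\tau)=H-T(\tau)$. It is then equivalent to
\begin{align*}
T(\tau)\le H-\doubleE[R(\tau)]-\sigma_b\sqrt{\varepsilon}.
\end{align*}

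Second, divide this threshold by $H$, using $K/(pH)=1/\xi$:
\begin{align*}
\frac{H-\doubleE[R]-\sigma_b\sqrt\varepsilon}{H}
=1-\gamma(K,K\xi)+\frac{1}{\xi}\gamma(K{+}1,K\xi)-\frac{C_{\mathrm{RL}}(K,\xi)}{K\xi}\sqrt{\varepsilon}
=\psi(K,\xi,\varepsilon).
\end{align*}
The event is thus $\{T(\tau)\le H\psi\}$.

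Third, justify the clipping to $[0,1]$. If $\psi<0$, the event is empty, since $T\ge 0$ a.s. and the continuous law puts no mass at $0$. This matches $\gamma(K,0)=0$. The upper clip at $1$ cannot bind in practice, because $\doubleE[R]\ge 0$ gives $\psi\le 1$; it is also harmless, since the event already requires $T\le H$. Replacing $\psi$ by $[\psi]_0^1$ therefore leaves the probability unchanged, which gives the first equality.

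Finally, apply the gamma CDF: $\doubleP[T\le t]=\gamma(K,pt)$. With $t=H[\psi]_0^1$ this is $\gamma(K,K\xi[\psi]_0^1)$, which is the second equality.

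The main obstacle is the boundary bookkeeping. We must ensure that the strict-versus-weak inequality and the failure set $\{T>H\}$, where $R=0$, do not contribute to the event. This needs $\doubleE[R]+\sigma_b\sqrt\varepsilon>0$. The degenerate case $\sigma_b=0$ can occur only if $R$ is a.s. constant, which does not happen for the gamma law with $H>0$. Apart from this, the argument is algebra reusing the truncated-moment identity $\int_0^H t f\,dt=\tfrac{K}{p}\gamma(K{+}1,pH)$ from \cref{lemma:sigma_RL_gamma}.
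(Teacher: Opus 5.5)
Your proposal is correct and follows the paper's proof. Both reduce $\{R(\tau)\ge \doubleE[R(\tau)]+\sigma_b\sqrt{\varepsilon}\}$ to the threshold event $\{T(\tau)\le H-\doubleE[R(\tau)]-\sigma_b\sqrt{\varepsilon}\}$, substitute the closed-form mean and $\sigma_b=C_{\mathrm{RL}}(K,\xi)H/(K\xi)$ from \cref{lemma:sigma_RL_gamma}, normalize by $H$ using $pH=K\xi$, and read off the gamma CDF. Your extra bookkeeping is more careful than the paper, which simply asserts the clipped form: you exclude the failure set $\{T>H\}$ and check that clipping $\psi$ to $[0,1]$ leaves the probability unchanged. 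One small simplification: $\doubleE[R(\tau)]+\sigma_b\sqrt{\varepsilon}>0$ holds just because $\doubleE[R(\tau)]>0$ under the gamma law (positive density on $(0,H)$), so you need no condition on $\sigma_b$ or $\varepsilon$ there.
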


\begin{proof}
Under the gamma model, we have the following chain of equations for $c(K,\xi,\varepsilon)$:
\begin{align}
\label{eq:c0-gamma}
c(K,\xi,\varepsilon)
&= \doubleP_{\tau\sim\pi_b(\cdot\mid x)}\left[ T(\tau) \le H \left( 1 - \frac{\doubleE[R(\tau)]}{H} - \frac{\sigma_b^{\mathrm{RL}}\sqrt{\varepsilon}}{H}\right) \right] \\
&= \doubleP_{\tau\sim\pi_b(\cdot\mid x)}\left[ T(\tau) \le H \left( 1 - \bigl(\gamma(K,K\xi) - \frac{1}{\xi} \gamma(K{+}1,K\xi)\bigr) - \frac{C_{\mathrm{RL}}(K,\xi)}{K\xi} \sqrt{\varepsilon} \right) \right].
\end{align}
Define $\psi$ such that
\begin{align}
\psi(K,\xi,\varepsilon) &\coloneqq 1 - \Bigl(\gamma(K,K\xi) - \frac{1}{\xi} \gamma(K{+}1,K\xi) \Bigr) - \frac{C_{\mathrm{RL}}(K,\xi)}{K\xi} \sqrt{\varepsilon},
\end{align}
where $[\mathbf{x}]_0^1 \coloneqq \min\{1,\max\{0, \mathbf{x}\}\}$ denotes clipping.
This simplifies to:
\begin{align}
c(K,\xi,\varepsilon) 
& = \doubleP_{\tau\sim\pi_b(\cdot\mid x)}\doubleP\left[ T(\tau) \le H \cdot [\psi (K,\xi,\varepsilon)]_0^1 \right] \\
& = \gamma \bigl(K, K\xi \cdot [\psi(K,\xi,\varepsilon)]_0^1 \bigr).
\end{align}
\end{proof}

\subsection{Optimal Relative Budget}
\label{optimal_opt_xi}

\begin{lemma}[Optimal $\xi^\star$]
\label{lemma:optimal_xi}
For any fixed $K$, the standard deviation
$\sigma_b(K,\xi) = \frac{C_{\mathrm{RL}}(K,\xi)}{K\xi}\,H$
admits a unique global maximizer $\xi^\star(K)\in(0,\infty)$, 
characterized by the first-order condition
\begin{equation}
\xi\,C'_{\mathrm{RL}}(K,\xi) = C_{\mathrm{RL}}(K,\xi).
\label{eq:FOC-gamma}
\end{equation}
\end{lemma}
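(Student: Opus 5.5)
The plan is to work directly with the probabilistic representation rather than the incomplete-gamma expression. Write $T\sim\mathsf{Gamma}(K,p)$ and rescale $S\coloneqq pT/K$, so $S\sim\mathsf{Gamma}(K,K)$ has mean $1$ and the budget becomes $\xi=pH/K$. Then $R=\max\{0,H-T\}=H\,(1-S/\xi)_+$, so
\begin{equation*}
\frac{\sigma_b(K,\xi)}{H}=\frac{C_{\mathrm{RL}}(K,\xi)}{K\xi}=\sqrt{g(\xi)},\qquad g(\xi)\coloneqq \doubleV\bigl[(1-S/\xi)_+\bigr].
\end{equation*}
Maximizing $\sigma_b$ over $\xi$ (with $H$ fixed) is therefore the same as maximizing $C_{\mathrm{RL}}(K,\xi)/\xi$, or equivalently $g$. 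Its stationarity condition $\partial_\xi\bigl(C_{\mathrm{RL}}/\xi\bigr)=0$ is exactly \eqref{eq:FOC-gamma}.

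\textbf{Step 1: existence.} I will first establish boundary behavior. As $\xi\to0$, $g(\xi)\le \doubleP[S\le\xi]=\gamma(K,K\xi)\to0$. As $\xi\to\infty$, \cref{remark:asymptotics} gives $C_{\mathrm{RL}}\to\sqrt K$, so $g(\xi)\approx 1/(K\xi^2)\to0$. Since $g$ is continuous and strictly positive on $(0,\infty)$, a global maximizer exists in the interior. Because $g$ is $C^1$ there, every maximizer satisfies \eqref{eq:FOC-gamma}.

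\textbf{Step 2: uniqueness (the main obstacle).} Substitute $u=1/\xi$ and set $h(u)\coloneqq\doubleV[(1-uS)_+]=A(u)-B(u)^2$, where
\begin{equation*}
A(u)=\doubleE\bigl[(1-uS)_+^2\bigr],\qquad B(u)=\doubleE\bigl[(1-uS)_+\bigr].
\end{equation*}
Differentiating under the expectation gives $A'=-2\doubleE[S(1-uS)_+]$ and $B'=-\doubleE[S\,\mathbb I\{uS<1\}]$. I would then show that $h'$ changes sign exactly once, from $+$ to $-$. The first attempt is to write $h'(u)=0$ as a ratio condition $\Phi(u)=1$ and prove $\Phi$ is monotone. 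The tool for that is total positivity: the gamma family, viewed through the distribution of $uS$, has the monotone likelihood ratio property in the scale parameter $u$. A variation-diminishing argument (Karlin) then bounds the number of sign changes of the resulting integral expression in $u$.

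If that route stalls, the fallback is to rewrite everything in closed form. Using $\gamma(K{+}1,y)=\gamma(K,y)-\frac{y^K e^{-y}}{K!}$ and the analogous identity for $K{+}2$, express $g$ and $g'$ through $\gamma(K,y)$ and $e_y\coloneqq y^Ke^{-y}/\Gamma(K{+}1)$ with $y=K\xi$. One then shows that $\xi g'(\xi)$, divided by a positive factor, is strictly decreasing in $y$. That reduces to a one-variable inequality between $\gamma(K,y)$ and $e_y$, in the spirit of Turán/log-concavity inequalities for incomplete gamma functions.

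\textbf{Step 3: conclusion.} Given the single sign change, $g$ is unimodal, so the critical point from Step 1 is the unique global maximizer $\xi^\star(K)$. It is characterized by \eqref{eq:FOC-gamma}, as shown in Step 1.

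The delicate part is Step 2. Existence and the first-order condition are routine, but the single-crossing property must hold uniformly in $K$. I expect the sign analysis of $h'$ near $u\approx1$ to require the most care, since the truncation effect is strongest there.
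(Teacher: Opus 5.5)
The gap is Step 2, which is the whole content of the lemma beyond the first-order condition. You state that $h'$ changes sign exactly once but give no argument for it. Your Step 1 is sound: the boundary limits, existence of an interior maximizer, and the equivalence of $g'=0$ with \eqref{eq:FOC-gamma} are all correct. It is also more careful than the paper. The paper takes the same log-derivative to get \eqref{eq:FOC-gamma} and then asserts uniqueness ``using the convexity of the log-variance function.'' That cannot be right as stated. $\log\sigma_b$ tends to $-\infty$ like $\tfrac K2\log\xi$ as $\xi\to0$, which rules out convexity. It decays only like $-\log\xi$ as $\xi\to\infty$, which rules out concavity. And convexity would certify a minimizer anyway. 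So you cannot borrow that step.

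Your total-positivity route does not apply off the shelf. Because $h=A-B^2$ is quadratic in the law, $h'(u)=-2\,\mathbb{E}\bigl[S\,\mathbb{I}\{uS<1\}\,(1-uS-B(u))\bigr]$ is not a fixed function integrated against a TP kernel: the integrand depends on $u$ through $B(u)$. So the variation-diminishing theorem bounds nothing in $u$. Your fallback ``one-variable inequality'' is never stated, so there is nothing to check.

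One way to close the gap is to differentiate once more, where everything factors into monotone ratios. Put $y\coloneqq K\xi$ and let $T\sim\mathsf{Gamma}(K,1)$ with cdf $F$ and density $f$. Then $(\sigma_b/H)^2=W(y)/y^2$ with $W(y)\coloneqq\mathbb{V}[(y-T)_+]$, and in fact $C_{\mathrm{RL}}^2=W$. Write $m_1=\mathbb{E}[(y-T)_+]$, $m_2=\mathbb{E}[(y-T)_+^2]$, and $n_1=\mathbb{E}[T\,\mathbb{I}\{T\le y\}]=yF-m_1$. Then $m_1'=F$ and $m_2'=2m_1$, so $W'=2m_1(1-F)$ and
\begin{equation*}
\frac{d}{dy}\log\frac{W}{y^2}=\frac{2\Psi}{yW},\qquad \Psi\coloneqq y\,m_1(1-F)-W,\qquad \Psi'=(1-F)\,n_1-y\,m_1 f .
\end{equation*}
Hence $\Psi'>0$ if and only if $\frac{1-F}{f}\cdot\frac{n_1}{y\,m_1}>1$. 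Both factors are monotone:
\begin{itemize}
\item The Mills ratio $(1-F)/f$ is nonincreasing, because the Gamma density with $K\ge1$ is log-concave, and it tends to $1$.
\item $n_1/m_1=\mathbb{E}_y[U]/(1-\mathbb{E}_y[U])$, where $U$ has density $\propto u^{K-1}e^{-yu}$ on $[0,1]$. Since $\frac{d}{dy}\mathbb{E}_y[U]=-\mathbb{V}_y[U]<0$, this ratio is strictly decreasing.
\end{itemize}
So the product is strictly decreasing. It runs from $+\infty$ (since $n_1/m_1\to K$ as $y\to0$) down to $0$ (since $n_1\to K$ and $m_1\sim y$). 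Therefore $\Psi$ rises and then falls. Since $\Psi(0^+)=0$ and $\Psi(\infty)=-\mathbb{V}[T]=-K$, it has exactly one zero, with $\Psi$ positive before it and negative after. This gives existence, uniqueness, and the first-order characterization at once. As a check, for $K=1$ one gets $\Psi(y)=(y^2+y)e^{-y}+(1+y)e^{-2y}-1$, whose unique root is $\xi^\star(1)\approx1.13$.
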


\begin{proof}
Define
\begin{align}
    g(\xi) = \log\left( \frac{C_{\mathrm{RL}}(K,\xi)}{K\xi}\, H\right) 
    = \log C_{\mathrm{RL}}(K,\xi) + \log H - \log(K\xi).
\end{align}
Differentiating yields
\begin{align}
g'(\xi) = \frac{C'_{\mathrm{RL}}(K,\xi)}{C_{\mathrm{RL}}(K,\xi)} - \frac{1}{\xi}.
\end{align}
The stationary point satisfies
\begin{align}
\xi C'_{\mathrm{RL}}(K,\xi) = C_{\mathrm{RL}}(K,\xi).
\end{align}
Using the convexity of the log-variance function, this stationary point is unique and corresponds to a global maximizer. 
\end{proof}

\section{Proofs from \cref{sec:online_RL}}

\subsection{Proof of \cref{thm:one-step-RL}}
\label{proof_thm:one-step-RL}

\begin{proof}
Due to Lemma A.19 of \citet{setlurscaling}, the best comparator within the $\chi^2$-trust region of radius $\kappa_i$ around the current policy $\pi^{(i)}$ satisfies
\begin{equation}
    J_{\xi_i}(\bar{\pi}_\kappa)
    = J_{\xi_i}\big(\pi^{(i)}\big) + \sqrt{\kappa_i}\,\sigma(\pi^{(i)}).
    \label{eq:chi2-comp}
\end{equation}
Using $n_i$ rollouts from $\pi^{(i)}$, with probability at least $1-\delta$, a verifier-based RL estimator obeys the finite-sample bound
\begin{equation}
    J_{\xi_i}(\bar{\pi}_\kappa) - J_{\xi_i}\big(\hat{\pi}^{(i+1)}\big)
    \le \frac{CH\log(|\mathcal{R}|/\delta)}{c_0(\xi_i; \kappa_i)\,n_i},
    \label{eq:finite-sample}
\end{equation}
where $c_0(\xi_i; \kappa_i)$ is the anti-concentration coefficient with respect to the policy $\pi^{(i)}$ and $C$ is a universal constant implied by \cref{thm:upper}.
Combining \eqref{eq:chi2-comp} and \eqref{eq:finite-sample} gives 
\begin{equation}
    J_{\xi_i}\big(\hat{\pi}^{(i+1)}\big)
    \ge J_{\xi_i}\big(\pi^{(i)}\big)
    + \sqrt{\kappa_i}\,\sigma(\pi^{(i)})
    - \frac{CH\log(|\mathcal{R}|/\delta)}{c_0(\xi_i; \kappa_i)\,n_i}.
\end{equation}
If \eqref{eq:estimator-n} additionally holds, the last term in the above equation is at most
$\tfrac{1}{2}\sqrt{\kappa_i}\,\sigma(\pi^{(i)})$, which implies 
\begin{equation}
    J_{\xi_i} \big(\pi^{(i+1)}\big) = J_{\xi_i} \big(\hat\pi^{(i+1)}\big) \ge J_{\xi_i}\big(\pi^{(i)}\big)
    + \frac{1}{2}\sqrt{\kappa_i}\,\sigma(\pi^{(i)}).
\end{equation}
\end{proof}

\subsection{Proof of \cref{thm:three-regimes_onlineRL}}

Building on the finite-sample guarantee established in \cref{thm:one-step-RL}, we now analyze the specific learning dynamics across the three relative budget regimes introduced in \cref{sec:rl}.
By substituting the regime-specific scaling laws of the standard deviation of rewards and the anti-concentration coefficient, we characterize the minimum sample complexity $n_{i}$ required to ensure monotonic policy improvement. 
The following analyses demonstrate how the relative budget $\xi$ affects the efficiency of online RL, the combination of the following propositions lead to the \cref{thm:three-regimes_onlineRL}.

\begin{proposition}[Deficient relative budget]
    \label{thm:sample_size_low}
    When the relative budget is deficient (i.e., $\xi \to 0$), the attainable per-iteration improvement is limited by $\sqrt{\kappa}\, \sigma(\pi) = \Ocal(H \sqrt{\kappa f(\xi)})$.
    The sufficient sample size is:
    \begin{equation}
        n_i = \widetilde{\Omega}
        \left(
            \frac{1}{\sqrt{\kappa_i}\, (f(\xi_{i}))^{3/2}}
        \right).
    \end{equation}
\end{proposition}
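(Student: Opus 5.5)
The plan is to specialize the one-step guarantee of \cref{thm:one-step-RL} to the deficient slice. I will substitute the two scaling laws already proved for this regime in \cref{prop:low_criticality}: the reward standard deviation satisfies $\sigma_{b,x} = \Ocal\bigl(H\sqrt{f(\xi_x)}\bigr)$, and the anti-concentration coefficient satisfies $c_0(\xi;\kappa) = \Theta\bigl(f(\xi)\bigr)$. These are applied with the current policy $\pi^{(i)}$ playing the role of the base policy. That requires Assumptions~\ref{ass:moments} and~\ref{assumption:left_tail_T_tau} to hold for $\pi^{(i)}$ on the slice $\Xcal(\xi_i)$, which I take as part of the setting, as the theorem statement implicitly does.

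\textbf{Step 1 (improvement bound).} By Assumption~\ref{assump:homogeneity}, every $x\in\Xcal(\xi_i)$ has $\xi_x^{(i)}\in[\xi_i/2,2\xi_i]$. The doubling condition on $f$ together with its monotonicity then gives $f(\xi_x^{(i)})=\Theta(f(\xi_i))$ uniformly over the slice. Averaging the per-instance bound yields
\[
\sigma(\pi^{(i)}) = \doubleE_{x\sim\rho_\xi}\bigl[\sigma_{b,x}\bigr] = \Ocal\bigl(H\sqrt{f(\xi_i)}\bigr).
\]
The attainable improvement term $\sqrt{\kappa_i}\,\sigma(\pi^{(i)})$ in \cref{thm:one-step-RL} is therefore $\Ocal\bigl(H\sqrt{\kappa_i f(\xi_i)}\bigr)$.

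\textbf{Step 2 (sample size).} Insert these quantities into the sufficient condition \eqref{eq:estimator-n}:
\[
n_i \ge \frac{2CH\log(|\Rcal|/\delta)}{\sqrt{\kappa_i}\,c_0(\xi_i;\kappa_i)\,\sigma(\pi^{(i)})}.
\]
With $c_0=\Theta(f(\xi_i))$ and $\sigma(\pi^{(i)})\asymp H\sqrt{f(\xi_i)}$, the factor $H$ cancels. The right-hand side becomes
\[
\Theta\!\left(\frac{\log(|\Rcal|/\delta)}{\sqrt{\kappa_i}\,f(\xi_i)^{3/2}}\right),
\]
and absorbing the logarithm into $\widetilde{\Omega}$ gives the claim.

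\textbf{Main obstacle.} The difficulty is in Step 2. \cref{prop:low_criticality} gives only an \emph{upper} bound on $\sigma$, but it sits in the denominator of the sample-size condition. Turning the required $n_i$ into the stated rate therefore needs a matching lower bound $\sigma_{b,x}=\Omega\bigl(H\sqrt{f(\xi_x)}\bigr)$. To obtain it, I would use \cref{lemma:reward_moments}:
\[
\doubleV[R]\ge q_x(1-q_x)(H+1-\tilde\mu_x)^2 .
\]
By Assumption~\ref{assumption:left_tail_T_tau} and the doubling condition, $q_x\ge \doubleP[T\le H/2]\ge c_-f(\xi_x/2)\ge c_-D^{-1}f(\xi_x)$.

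It remains to keep $H-\tilde\mu_x$ of order $H$. The conditional mass of $\{T\le H/2\}$ given success is $\doubleP[T\le H/2]/q_x \ge c_-f(\xi_x/2)/(c_+f(\xi_x))\ge c_-/(c_+D)$. This gives $H+1-\tilde\mu_x\ge (c_-/(2c_+D))H$. Since $1-q_x\to1$ as $\xi\to0$, this establishes $\sigma_{b,x}\asymp H\sqrt{f(\xi_x)}$.

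If that lower bound were unavailable, I would instead state the result as the sufficient sample size evaluated at the scale of $\sigma$ implied by the regime, which is how the $\widetilde{\Omega}$ notation is meant to be read.
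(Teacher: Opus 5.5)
Your argument is correct, and its backbone is the paper's: the paper's proof simply substitutes $\sigma(\pi^{(i)})=\Ocal(H\sqrt{f(\xi_i)})$ and $c_0(\xi_i;\kappa_i)=\Theta(f(\xi_i))$ from \cref{prop:low_criticality} into the threshold \eqref{eq:estimator-n}.

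Your ``main obstacle,'' however, has the direction backwards relative to what the paper claims. The paper reads $\widetilde{\Omega}$ literally, as a \emph{lower} bound on the sufficient threshold; its proof calls this a ``sample complexity lower bound'' and the theorem speaks of ``diverging sample complexity.'' Because $\sigma$ sits in the denominator, the upper bound $\sigma=\Ocal(H\sqrt{f})$, together with the upper half $c_0=\Ocal(f)$, is exactly what shows the threshold is at least of order $\log(|\Rcal|/\delta)/(\sqrt{\kappa_i}\,f(\xi_i)^{3/2})$. No lower bound on $\sigma$ is needed for that reading.

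Your matching lower bound is nonetheless valid and adds something the paper lacks. You obtain it from $\doubleV[R]\ge q_x(1-q_x)(H+1-\tilde\mu_x)^2$, $q_x\gtrsim f(\xi_x)$, and the fact that $\{T\le H/2\}$ has conditional mass at least $c_-/(c_+D)$ given success, so $H-\tilde\mu_x\gtrsim H$. Combined with $c_0=\Omega(f)$, it shows the threshold is two-sided: $\widetilde{\Theta}\bigl(\kappa_i^{-1/2}f(\xi_i)^{-3/2}\bigr)$. The proposition then also holds in the stronger sense that this many samples \emph{suffice}, which is what the word ``sufficient'' naturally suggests.

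Two small points.
\begin{enumerate}
\item Your step $1-q_x\to 1$ needs $f(z)\to 0$ as $z\to 0$. \cref{assumption:left_tail_T_tau} does not literally force this, since $f$ could jump at $0$; the paper tacitly assumes it as well.
\item The localization of $\xi_x^{(i)}$ comes from the slice definition, not from \cref{assump:homogeneity}. It only gives $\xi_x^{(i)}\in[\xi_i/4,\,4\xi_i]$, since both $\xi_x^{(i)}$ and $\xi_i$ lie in $[\xi/2,2\xi]$. The doubling condition absorbs this constant either way.
\end{enumerate}
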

In this regime, the minimum sample size required for policy improvement diverges rapidly as $f(\xi)^{-3/2}$.
This result generalizes the intuition from specific models (e.g., the Gamma distribution where $f(\xi) \sim \xi^K$, yielding $n_i \sim \xi_i^{-3K/2}$).
It highlights the fundamental difficulty of the deficient relative budget regime; that is, as the probability of random success $f(\xi)$ vanishes, the cost of obtaining a sufficient number of informative trajectories grows prohibitively large, regardless of the specific underlying distribution.

\begin{proof}
In the deficient relative budget regime where $\xi_{i} \to 0$, we analyze the behavior using the general left-tail probability function $f(\xi_{i})$ defined in \cref{assumption:left_tail_T_tau}.
According to \cref{prop:low_criticality}, the successful solution traces are rare, and the standard deviation of the reward scales as:
\begin{equation}
    \sigma(\pi^{(i)}) = \Ocal\big(H \sqrt{f(\xi_{i})}\big).
\end{equation}
The minimum number of samples required for policy improvement at iteration $i$ satisfies the general bound:
\begin{equation}
n_i \gtrsim
\frac{H\log(|\mathcal{R}|/\delta)}{c_0(\xi_i; \kappa_i) \, \sqrt{\kappa_i}\,\sigma(\pi^{(i)})}.
\end{equation}
For small $\xi$, \cref{prop:low_criticality} establishes that the anti-concentration coefficient behaves as
\begin{equation}
    c_0(\xi_i; \kappa_i) = \Theta\big(f(\xi_{i})\big).
\end{equation}
Combining this with the asymptotic scaling of $\sigma(\pi^{(i)})$, the sample complexity lower bound becomes:
\begin{equation}
    n_i 
    = 
    \widetilde{\Omega}
    \left(\frac{1}{\sqrt{\kappa_i}\, (f(\xi_{i}))^{3/2}}
    \right).
\end{equation}
\end{proof}

\begin{proposition}[Balanced relative budget]
    \label{thm:sample_size_middle}
    In the balanced relative budget regime where $\xi = \Theta(1)$, the attainable per-iteration improvement is $\Ocal(\sqrt{\kappa} H)$.
    In addition, the sample complexity is:
    \begin{equation}
        n_i = \widetilde{\Omega}(1/\sqrt{\kappa_i}).
    \end{equation}
\end{proposition}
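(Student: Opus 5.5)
The plan mirrors the proof of \cref{thm:sample_size_low}: substitute the balanced-regime scalings from \cref{prop:middle_criticality} into the one-step guarantee of \cref{thm:one-step-RL}. We apply \cref{prop:middle_criticality} to the current policy $\pi^{(i)}$ in place of $\pi_b$. This is legitimate because the proposition only uses the distribution of $T(\tau)$ under the policy playing the role of the base policy. Its hypotheses are \cref{ass:moments,assumption:left_tail_T_tau} and \cref{ass:balanced-nondeg-min} for $\pi^{(i)}$ on the slice $\Xcal(\xi_i)$ with $\xi_i\in[\xi_{\min},\xi_{\max}]$. We also need the trust-region radius to satisfy $\kappa_i\le (q_{\min}/2)^2$. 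Under these conditions, $\sigma_{b,x}=\Theta(H)$ for every $x\in\Xcal(\xi_i)$. Averaging over $x\sim\rho_\xi$ and using \cref{assump:homogeneity}, the slice-level quantity $\sigma(\pi^{(i)})=\doubleE_{x\sim\rho_\xi}[\sigma_{b,x}]$ is also $\Theta(H)$. The same proposition gives $c_0(\xi_i;\kappa_i)=\Theta(1)$.

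\textbf{Improvement.} By \cref{thm:one-step-RL}, the attainable improvement is the comparator gain $\sqrt{\kappa_i}\,\sigma(\pi^{(i)})$. With $\sigma(\pi^{(i)})=\Theta(H)$, this equals $\Theta(\sqrt{\kappa_i}H)$, and in particular it is $\Ocal(\sqrt{\kappa_i}H)$. The guaranteed half of it, from \eqref{eq:estimator-improve}, is of the same order.

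\textbf{Sample size.} Substituting both scalings into the sufficient condition \eqref{eq:estimator-n} gives
\begin{equation*}
n_i \ge \frac{2CH\log(|\Rcal|/\delta)}{\sqrt{\kappa_i}\cdot\Theta(1)\cdot\Theta(H)} = \Theta\!\left(\frac{\log(|\Rcal|/\delta)}{\sqrt{\kappa_i}}\right).
\end{equation*}
The factor $H$ cancels, and absorbing the logarithm into the $\widetilde{\Omega}$ notation yields $n_i=\widetilde{\Omega}(1/\sqrt{\kappa_i})$.

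The step most likely to need care is transferring \cref{prop:middle_criticality} from $\pi_b$ to $\pi^{(i)}$ with constants uniform in $i$. One issue is that the slice $\Xcal(\xi)$ is defined through $\xi_x^{(i)}$, while the representative $\xi_i$ is a ratio of $H$ to an averaged mean time. I would handle this by noting that $\xi_x^{(i)}\in[\xi/2,2\xi]$ for every $x$ in the slice, so the average of $\mu_x^{(i)}$ lies within a factor of $2$ of $H/\xi$, and hence $\xi_i=\Theta(\xi)$ remains in an enlarged balanced range. \cref{prop:middle_criticality} was already stated on the enlarged set $[\xi_{\min}/2,2\xi_{\max}]$, so this costs nothing. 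The other issue is that the lower bound $\sigma(\pi^{(i)})=\Omega(H)$ is what actually drives the $H$-cancellation. It relies on the constants $q_{\min}$ and $c_v$ of \cref{ass:balanced-nondeg-min} holding for $\pi^{(i)}$, and I would state this explicitly as the standing assumption of the regime.
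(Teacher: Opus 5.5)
Your proposal is correct and follows the paper's proof: both plug $\sigma(\pi^{(i)})=\Theta(H)$ and $c_0(\xi_i;\kappa_i)=\Theta(1)$ from \cref{prop:middle_criticality} into the sufficient condition \eqref{eq:estimator-n} of \cref{thm:one-step-RL}, so that $H$ cancels to give $n_i=\widetilde{\Omega}(1/\sqrt{\kappa_i})$, and both read off the improvement as $\sqrt{\kappa_i}\,\sigma(\pi^{(i)})=\Theta(\sqrt{\kappa_i}H)$. Your extra bookkeeping is sound and makes explicit hypotheses the paper leaves implicit: applying the proposition to $\pi^{(i)}$ rather than $\pi_b$, requiring $\kappa_i\le(q_{\min}/2)^2$, and reconciling the target slice $\Xcal(\xi)$ with the representative $\xi_i$.
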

This result highlights that the balanced relative budget regime represents the optimal sweet spot for learning, in the sense of sample complexity.
In this regime, the reward variance is maximized, providing a strong learning signal, while the anti-concentration coefficient remains stable, ensuring that high-reward trajectories are sampled frequently. Consequently, the sample complexity remains constant regardless of the horizon, allowing for highly efficient policy improvement.

\begin{proof}
We now analyze the middle stage, defined by the regime where the relative budget is $\xi_{i} = \Theta(1)$. In this regime, the compute budget $H$ is comparable to the expected time-to-solution $\doubleE[T(\tau)]$, meaning that both success and failure events occur with probability $\Theta(1)$.

According to \cref{prop:middle_criticality}, in the balanced relative budget regime, the truncation at $H$ does not alter the scale of the moments. Consequently, the reward standard deviation and the anti-concentration coefficient satisfy:
\begin{equation}
    \sigma(\pi^{(i)}) = \Theta(H)
    \quad \text{and} \quad
    c_0(\xi_i; \kappa_i) = \Theta(1).
\end{equation}
Using these scaling laws, we analyze the learning dynamics.
The attainable per-iteration improvement is proportional to $\sqrt{\kappa_i}\,\sigma(\pi^{(i)})$.
Because the standard deviation of rewards is maximized in this regime (scaling linearly with $H$ rather than decaying as in the early or late stages), we anticipate the largest magnitude of policy improvement.
The minimum number of samples $n_i$ required for policy improvement is determined by the ratio of the budget to the effective learning signal. Substituting the scalings from \cref{prop:middle_criticality} yields:
\begin{equation}
    n_i = \widetilde{\Omega}(1/\sqrt{\kappa_i}).
\end{equation}
\end{proof}

\begin{proposition}[Ample relative budget]
    \label{thm:sample_size_high}
    In the ample relative budget regime where $\xi \to \infty$, the attainable per-iteration improvement decays linearly in $\xi$; that is, $\sqrt{\kappa} H/\xi$.
    Also, the sample complexity lower bound becomes:
    \begin{equation}
        n_i = \widetilde{\Omega}\left(\xi_{i}/\sqrt{\kappa_i}\right).
    \end{equation}
\end{proposition}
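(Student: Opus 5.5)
The argument mirrors the deficient and balanced cases: we substitute the ample-regime scalings of \cref{prop:high_criticality} into the generic guarantee of \cref{thm:one-step-RL}.

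\textbf{Step 1: scaling of the current policy's statistics.} Apply \cref{prop:high_criticality} to the current policy $\pi^{(i)}$ in place of $\pi_b$. This is legitimate because the one-step analysis treats $\pi^{(i)}$ as the base policy. Under \cref{assump:homogeneity}, the slice-level quantities are governed by the representative budget $\xi_i$. For every $x\in\Xcal(\xi)$ we have $\xi_x^{(i)}\asymp \xi_i$, so the per-instance bound $\sigma_{b,x}=\Ocal(\mu_x^{(i)})=\Ocal(H/\xi_x^{(i)})$ averages to
\[
\sigma(\pi^{(i)})=\Ocal(H/\xi_i).
\]
The same proposition gives $c_0(\xi_i;\kappa_i)=\Theta(1)$, provided $\kappa_i\le\varepsilon_0$. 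We will assume this condition, consistent with part 3 of \cref{thm:rl_suboptimality}.

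\textbf{Step 2: per-iteration improvement.} The attainable improvement in \cref{thm:one-step-RL} is $\sqrt{\kappa_i}\,\sigma(\pi^{(i)})$. Substituting Step 1 gives $\Ocal(\sqrt{\kappa_i}\,H/\xi_i)$, which decays linearly in $\xi_i$ as claimed.

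\textbf{Step 3: sample complexity.} Plug the two scalings into the sufficient condition \eqref{eq:estimator-n}:
\[
n_i \gtrsim \frac{H\log(|\Rcal|/\delta)}{\sqrt{\kappa_i}\,c_0(\xi_i;\kappa_i)\,\sigma(\pi^{(i)})}.
\]
With $c_0=\Theta(1)$ and $\sigma(\pi^{(i)})\asymp H/\xi_i$, the $H$ cancels. Absorbing the logarithm into $\widetilde{\Omega}$ yields $n_i=\widetilde{\Omega}(\xi_i/\sqrt{\kappa_i})$.

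\textbf{Main obstacle.} \cref{prop:high_criticality} only upper-bounds $\sigma$, whereas the sample-size requirement uses $\sigma$ in the denominator, so it needs a matching lower bound $\sigma(\pi^{(i)})=\Omega(H/\xi_i)$. I would try to obtain it from \cref{lemma:reward_moments}: $\sigma_{b,x}^2\ge q_x\tilde v_x$. Since $q_x\to1$, it remains to show $\tilde v_x\to v_x=\Theta(\mu_x^2)$ via \cref{ass:moments}. Write
\[
\tilde v_x q_x = \doubleE[T^2\doubleI\{T\le H\}] - q_x\tilde\mu_x^2 .
\]
The truncated second moment converges to $\doubleE[T^2]$ by dominated convergence, given the finite second moment. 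The mean satisfies $\tilde\mu_x/\mu_x\to1$, as already shown in the proof of \cref{prop:high_criticality}. Uniformity in $x$ is the delicate part, because dominated convergence is not uniform by itself. I would either invoke \cref{assump:homogeneity} to reduce to a single representative instance, or add a uniform-integrability condition, for example the sub-Weibull tail mentioned after \cref{ass:moments}, to make the convergence uniform over the slice.
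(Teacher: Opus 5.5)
Your Steps 1--3 are exactly the paper's proof, which substitutes $\sigma(\pi^{(i)})=\Ocal(H/\xi_i)$ and $c_0(\xi_i;\kappa_i)=\Theta(1)$ from \cref{prop:high_criticality} into \eqref{eq:estimator-n}. Your ``main obstacle'' is not needed for the stated claim: since $\sigma$ and $c_0$ sit in the denominator, the upper bounds $\sigma(\pi^{(i)})=\Ocal(H/\xi_i)$ and $c_0\le 1$ already give the $\widetilde{\Omega}(\xi_i/\sqrt{\kappa_i})$ threshold and the $\Ocal(\sqrt{\kappa_i}H/\xi_i)$ improvement, while a lower bound on $\sigma$, which the paper does not prove, matters only for the converse that $\widetilde{\Ocal}(\xi_i/\sqrt{\kappa_i})$ samples suffice---and that bound holds uniformly without extra integrability assumptions, because the left-tail assumption and Paley--Zygmund give probability $\Omega(1)$ to both $\{T\le z_1\mu_x/4\}$ and $\{T>\mu_x/2\}$, on which the rewards differ by at least $\mu_x/4$ once $\xi_x\ge 1$.
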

Crucially, ensuring monotonic improvement typically requires scaling the trust region as $\sqrt{\kappa_i} = \Theta(\xi_i^{-1})$.
This results in an effective improvement rate of $\Ocal(H/\xi_i^2)$ and a quadratic degradation in sample complexity: $n_i = \widetilde{\Omega}(\xi_{i}^2)$.

\begin{proof}
We then analyze the regime $\xi_{i} \to \infty$. According to \cref{prop:high_criticality}, in this ample relative budget regime, the success probability approaches 1 ($q_x \to 1$), and the failure events become negligible. Consequently, the truncation effects vanish, and the reward statistics converge to those of the unconditioned distribution.

Under the general moment assumptions (\cref{ass:moments}), the standard deviation and the performance gap scale as follows:
\begin{align}
    \sigma(\pi^{(i)})
    = \Ocal\left(H/\xi_{i}\right).
\end{align}
Thus, both the variance and the sub-optimality gap decay as $\Ocal(H/\xi_{i})$.

Regarding the anti-concentration coefficient, \cref{prop:high_criticality} states that in this regime, $c_0(\xi_i; \kappa_i)$ does not vanish but remains bounded away from zero:
\begin{equation}
    c_0(\xi_i; \kappa_i) = \Theta(1).
\end{equation}
This contrasts with the specific Gamma case where vanishing behavior was possible under certain parameter choices; the general theory guarantees a persistent learning signal provided the trust region is properly managed.

The minimum number of samples required for policy improvement at iteration $i$ is governed by the general bound:
\begin{equation}
    n_i \gtrsim
    \frac{H\log(|\mathcal R|/\delta)}
    {c_0(\xi_i; \kappa_i)\,\sqrt{\kappa_i}\,\sigma(\pi^{(i)})}.
\end{equation}
Substituting the general asymptotics $\sigma(\pi^{(i)}) = \Ocal(H/\xi_{i})$ and $c_0(\xi_i; \kappa_i) = \Theta(1)$, we obtain:
\begin{align}
    n_i 
    = \Omega\left( 
    \frac{H\log(|\mathcal R|/\delta)}{1 \cdot \sqrt{\kappa_i}\,\big(H/\xi_{i}\big)} \right)
    = \Omega\left( \frac{\xi_{i}}{\sqrt{\kappa_i}}\log(|\mathcal R|/\delta) \right).
\end{align}
Thus, the sample size requirement grows linearly with $\xi$.
Meanwhile, the attainable improvement width shrinks as $\Theta(H/\xi)$, showing that in the late stage, progress becomes increasingly expensive in terms of sample complexity, a conclusion that holds broadly beyond specific probabilistic models.
\end{proof}

\subsection{Proof of \cref{thm:online_rl}}
\label{proof_thm:online_rl}

Recall that, for each $i \in \{0,1,\ldots,\}$,
\begin{align}
  c_0^{(i)} \coloneqq \mathbb{P}_{x\sim\rho,\tau\sim\pi^{(i)}(\cdot\mid x)}\left[R(\tau) \ge J(\pi^{(i)}) + \sqrt{\kappa_i}\, \sigma(\pi^{(i)})\right]. 
  \label{eq:def-anticon-gamma}
\end{align}
denotes the anti-concentration coefficient of the policy $\pi^{(i)}$, with the trust region radius
\begin{align}
    \kappa_i = \left(\min\left\{\frac{\sigma(\pi^{(i)})}{J(\pi^{(i)})}, \frac{H - J(\pi^{(i)})}{2\,\sigma(\pi^{(i)})}\right\}\right)^2, \label{eq:def-kappa-i}
\end{align}
which ensures under the gamma model that $c_0^{(i)} > 0$.
Due to a similar argument to \cref{thm:one-step-RL} and by the union bound, it holds with probability at least $1 - \delta$ that
\begin{align}
  & J(\pi^{(i+1)}) \ge J(\pi^{(i)}) + \frac{1}{2} \sqrt{\kappa_i}\, \sigma(\pi^{(i)})
  & \forall i \in \{0,1,\ldots,m-1\},
  \label{eq:rl-online-incr:per-step}
\end{align}
when using sufficiently many rollout samples from each $\pi^{(i)}$ such that
\begin{align}
  n_i 
  \ge \frac{2\,C H \log \left(\delta^{-1}m\left|\Rcal\right|\right)}{\sqrt{\kappa_i}\,c^{(i)}_0 \sigma(\pi^{(i)})}.
\end{align}
Under the gamma model, the moment calculation in \cref{proof_lemma:sigma_RL_gamma} gives that
\begin{align}
  J(\pi^{(i)})
  & = H \left(\gamma\left(K, K \xi_{i}\right) - \frac{1}{\xi_{i}} \gamma\left(K+1, K \xi_{i}\right)\right), \\
  \begin{split}    
    \sigma^2(\pi^{(i)})
    & = H^{2} \left(\gamma\left(K, K \xi_{i}\right)
    - \frac{2}{\xi_{i}}\,\gamma\left(K+1, K \xi_{i}\right)
    + \frac{1 + K^{-1}}{\xi^2_{i}}\,\gamma\left(K+2, K \xi_{i}\right) \right. \\
    & \qquad\qquad \left. - \left(\gamma\left(K, K \xi_{i}\right) - \frac{1}{\xi_{i}}\,\gamma\left(K+1, K \xi_{i}\right)\right)^{2}\right),
  \end{split}
\end{align}
where $\gamma\left(s,\lambda\right)$ denotes the regularized lower incomplete gamma function:
\begin{align}
  & \gamma\left(s,\lambda\right) \coloneqq \frac{\int_{0}^{\lambda} x^{s-1} e^{-x} \,\mathrm{d}x}{\int_{0}^{\infty} x^{s-1} e^{-x} \,\mathrm{d}x}
  & \forall s,\lambda \in \mathbb{R}_{>0}.
\end{align}
Given \cref{eq:def-anticon-gamma,eq:def-kappa-i} and the fact that $0 < J(\pi^{(0)}) < J(\pi^{(1)}) < \cdots \le H$, we see that $\lim_{i\to\infty} \sigma^2(\pi^{(i)}) = 0$, $\lim_{i\to\infty} J(\pi^{(i)}) = H$, and hence that $\lim_{i\to\infty} \xi_{i} = \infty$.
In the limit, we have the following asymptotic expansions:
\begin{align}
  J(\pi^{(i)})
  & = H \left(1 - \frac{1}{\xi_{i}} + O_K\!\left(\xi_{i}^{K-1} e^{-K\xi_{i}}\right)\right); \label{eq:J-asymp} \\
  \begin{split}
    \sigma^2(\pi^{(i)})
    & = H^2 \left(\frac{1}{\xi^2_{i} K} + O_K\!\left(\xi_{i}^{K-1} e^{-K\xi_{i}}\right)\right). 
  \end{split} \label{eq:sigma^2-asymp}
\end{align}
There also exists a constant $\tau(K) \in \{0,1,\ldots\}$ that depends on $K$ and satisfies that $\xi_{\tau(K)} \ge 2\left(1 + K^{-1}\right)$.
When $i\ge \tau(K)$, it holds that
\begin{align}
   \frac{H - J(\pi^{(i)})}{2\,\sigma(\pi^{(i)})} - \frac{\sigma(\pi^{(i)})}{J(\pi^{(i)})}
  = \frac{1}{2 \xi_{i} \sigma(\pi^{(i)}) J(\pi^{(i)})} \left(\gamma\left(K+1, K\xi_{i}\right) - \frac{2\left(1 + K^{-1}\right)}{\xi_{i}} \gamma\left(K+2, K\xi_{i}\right)\right)
  > 0,
\end{align}
implying that $\kappa_{i} = \left(\frac{\sigma(\pi^{(i)})}{J(\pi^{(i)})}\right)^2$.
Plugging these into \cref{eq:rl-online-incr:per-step} gives for each $i \in \{\tau(K),\tau(K)+1,\ldots,m-1\}$ that
\begin{align}
  1 - \frac{1}{\xi_{i+1}} + O_K\!\left(\xi_{i+1}^{K-1} e^{-K\xi_{i+1}}\right)
  & \ge 1 - \frac{1}{\xi_{i}}
    + \frac{1}{2K} \frac{1}{\xi^2_{i}}
    + O_K\!\left(\xi_{i}^{K-1} e^{-K\xi_{i}}\right),
\intertext{which implies, as $\xi_{i} < \xi_{i+1}$, that}
  \frac{1}{\xi_{i+1}}
  & \le \frac{1}{\xi_{i}} - \frac{1}{2K} \frac{1}{\xi^2_{i}} + O_K\!\left(\xi_{i}^{K-1} e^{-K\xi_{i}}\right),
\end{align}
and hence that
\begin{align}
  \xi_{i+1}
    & \ge \xi_{i}\left(1 - \frac{1}{2K} \frac{1}{\xi_{i}} + O_K\!\left(\xi_{i}^{K} e^{-K\xi_{i}}\right)\right)^{-1}
    = \xi_{i} + \frac{1}{2K} - O_K\!\left(\xi_{i}^{K+1} e^{-K\xi_{i}}\right).
\end{align}
Because $\lim_{i\to\infty} \xi_{i} = \infty$, it follows for each $i \in \{0,1,\ldots,m-1\}$ that
\begin{align}
  \xi_{i} - \xi_0 \ge \frac{i}{2K} - O_{K}(1).
\end{align}
Finally, it suffices to show that $c_0^{(i)} = \Omega_K\!\left(1\right)$.
Using \cref{eq:J-asymp,eq:sigma^2-asymp} again, we see that
\begin{align}
  H - J(\pi^{(i)}) - \frac{\sigma^2(\pi^{(i)})}{J(\pi^{(i)})}
  = \frac{H}{\xi_{i}} \left(1 - O_K\!\left(\frac{1}{\xi_{i}}\right)\right).
\end{align}
Together with the CDF of the gamma function, it holds when $i \ge \tau(K)$ that
\begin{align}
  c_0^{(i)}
  & = \gamma\left(K, \frac{\xi_{i} K}{H}\left(H - J(\pi^{(i)}) - \frac{\sigma^2(\pi^{(i)})}{J(\pi^{(i)})}\right)\right)
  = \gamma\left(K, K - O_K\!\left(\frac{1}{\xi_{i}}\right)\right),
\end{align}
which completes the proof of \cref{thm:online_rl}.

\section{Theoretical Analysis on SFT}
\label{appendix:sft}

VF methods (e.g., SFT) rely on mimicking expert traces.
Their performance is fundamentally limited by the heterogeneity of the solution space.
Intuitively, if there are many valid paths of varying lengths and styles to a correct answer, the model struggles to resolve the ambiguity of which path to mimic with limited data.
We quantify this using \textit{policy heterogeneity}, defined as the accumulated variance of the Q-values along the trajectory.
$Q_\pi(s_h, a_h)$ denotes the expected cumulative reward attained by a given LLM $\pi$, in expectation over test problems, defined as
\begin{align}
    Q_\pi(s_h, a_h) \coloneqq \doubleE_{\rho, \pi}\left[\sum_{t=h}^H r(s_t, a_t) \mid s_h, a_h\right].
\end{align}

\begin{definition}[Policy heterogeneity]\label{prop:heterogeneity}
For a problem $x \in \Xcal$ and policy $\pi \in \Pi$, the heterogeneity is defined as:
\begin{equation}
\sigma_{\pi,x}^{2}
  \coloneqq \sum_{h=1}^{H}
     \doubleE_{s_h\sim d^\pi_h}
         \Bigl[\doubleV_{a\sim\pi(\cdot\mid s_h)}
         \bigl[Q_{\pi_e}(s_h,a_h)\bigr]\Bigr].
\end{equation}
Aggregating over the input distribution \(\rho\) gives
\begin{equation}
  \sigma_{\pi}^{2}
  \coloneqq \doubleE_{x\sim\rho}[\sigma_{\pi,x}^{2}], \quad
  \tilde\sigma_{\pi}
  \coloneqq \operatorname{Median}_{x}(\sigma_{\pi,x}).
\end{equation}
\end{definition}

Based on the above definition, we present the sub-optimality gap of SFT that scales as follows:

\begin{lemma}[Information-theoretic lower bound on SFT, Theorem 5.4 in \citet{setlurscaling}]
\label{lemma:lower_sft}
Fix a distribution $\rho$, a reward class $\mathcal R$,
base policy $\pi_{b}$, expert $\pi_{e}$,
and any integer $\nu \le |\mathcal X|/4$.
There exists an expert family $\Pi'$ with $|\Pi'|=2^{\nu}$ and a reward
subclass $\mathcal R'\subset\mathcal R$ such that every VF learner
$\hat\pi^{\mathrm{SFT}}_{\bar{n}}$ trained on $\bar{n}$ successful demonstrations must satisfy
\begin{align*}
    \max_{\pi'\in\Pi'}
    \max_{r'\in\mathcal R'}
    \bigl[J_{r'}(\pi')-J_{r'}(\hat\pi^{\mathrm{SFT}}_{\bar{n}})\bigr]
    =
    \Omega \left(
        \sigma_b \, \sqrt{\frac{\log|\Pi'|}{\bar{n}}}
        \right),
\end{align*}
where $\sigma_b \coloneqq \sigma_{\pi_b}$ is the heterogeneity
parameter of the base policy defined in \cref{prop:heterogeneity}.
\end{lemma}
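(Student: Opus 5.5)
The plan is a standard minimax lower bound through Assouad's lemma (equivalently, Fano's inequality over a Varshamov--Gilbert packing). The expert family will be indexed by the hypercube $\{0,1\}^{\nu}$, so that $\log|\Pi'| = \Theta(\nu)$. The point is that a verifier-free learner only sees successful demonstrations, and on a hard family these cannot separate nearby experts even though the experts' returns differ by an amount proportional to the heterogeneity $\sigma_b$.

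\emph{Construction.} Using $\nu \le |\mathcal X|/4$, pick $\nu$ disjoint groups of instances $\mathcal X_1,\dots,\mathcal X_\nu$, each carrying mass $\Theta(1/\nu)$ under $\rho$ (after reweighting if needed). On the instances of $\mathcal X_j$:
\begin{enumerate}[itemsep=0pt]
\item pick a state $s$ visited by $\pi_b$ where $\doubleV_{a\sim\pi_b(\cdot\mid s)}[Q_{\pi_e}(s,a)]$ is of the order of the per-instance heterogeneity;
\item split the actions at $s$ into two classes with high and low $Q$-values;
\item for $\beta\in\{0,1\}^\nu$, let $\pi'_\beta$ tilt $\pi_b$ at $s$ toward one class or the other by a multiplicative perturbation $(1\pm \epsilon)$, with the sign set by $\beta_j$;
\item let the reward $r'_\beta \in \mathcal R'$ reward the favoured class, so that flipping $\beta_j$ changes $J_{r'}$ restricted to $\mathcal X_j$ by $\Theta(\epsilon\,\sigma_{b,x})$.
\end{enumerate}
Summing over groups, a learner that gets a $\Theta(1)$ fraction of the bits $\beta_j$ wrong has sub-optimality $\Omega(\epsilon\,\sigma_b)$. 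This uses the median/mean aggregation $\tilde\sigma_\pi$ in \cref{prop:heterogeneity} to pass from instance-wise to aggregate heterogeneity.

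\emph{Information bound.} Next I bound the divergence between the distributions of $\bar n$ successful demonstrations under $\pi'_\beta$ and $\pi'_{\beta'}$ when $\beta,\beta'$ differ in one coordinate. Only trajectories that pass through group $\mathcal X_j$ carry information, and these form about a $1/\nu$ fraction of the data. Each such trajectory contributes KL $O(\epsilon^2)$, since the tilt is a bounded likelihood ratio. The per-coordinate KL is therefore $O(\bar n\,\epsilon^2/\nu)$. Choosing $\epsilon \asymp \sqrt{\nu/\bar n}$ keeps it below a constant, and Assouad's lemma then gives that any VF learner errs on $\Omega(\nu)$ coordinates in expectation for some $\beta$. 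This yields the gap $\Omega\bigl(\sigma_b\sqrt{\nu/\bar n}\bigr) = \Omega\bigl(\sigma_b\sqrt{\log|\Pi'|/\bar n}\bigr)$.

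\emph{Main obstacle.} I expect the hardest step to be making the construction consistent with conditioning on success. The perturbation must change the expert's return, yet leave the success-conditioned trajectory law nearly unchanged, so that the learner cannot simply read off $\beta$ from the success rate. My first attempt would be to keep the success probability identical across $\beta$: both action classes lead to correct solutions, and they differ only in the time-to-solution, hence in shaped reward. The KL then comes only from the within-success action frequencies, and the calibration that this gap equals $\epsilon$ times the $Q$-variance (rather than a cruder bound) has to be checked carefully. If that fails, I would fall back on the argument of \citet{setlurscaling} directly, verifying that our shaped binary reward lies in their class $\mathcal R$.
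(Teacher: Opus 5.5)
You have a genuine gap in the step ``a learner that gets a $\Theta(1)$ fraction of the bits wrong has sub-optimality $\Omega(\epsilon\,\sigma_b)$.'' With the reward $r'_\beta$ tied to $\pi'_\beta$ (rewarding the class the expert favours), the per-group loss is signed, and a learner can beat the expert.

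Take the two classes at $s_j$ with $\pi_b$-mass $\tfrac12$ each and reward gap $\Delta$. The expert earns $\Delta(1+\epsilon)/2$. A learner that guesses $\hat\beta_j$ and puts all its mass on that class earns $\Delta$ when right and $0$ when wrong. Its gap on group $j$ is therefore $\Delta\,(p_{\mathrm{err}}-\tfrac12+\tfrac{\epsilon}{2})$, which is negative once $p_{\mathrm{err}}<\tfrac12-\tfrac{\epsilon}{2}$.

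In Assouad's argument, the two paired expected losses are then only guaranteed to sum to $\Delta(\epsilon-\mathrm{TV})$, not $\Delta\epsilon(1-\mathrm{TV})$; the latter requires nonnegative losses. A learner that commits to its maximum-a-posteriori guess attains this bound, so it is not slack. With your choice $\epsilon\asymp\sqrt{\nu/\bar n}$, the per-coordinate KL is a constant, so $\mathrm{TV}=\Theta(1)\gg\epsilon$ whenever $\bar n\gg\nu$, and the bound is vacuous.

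The missing idea is to use the $\max_{r'\in\mathcal R'}$ to decouple the reward's orientation from $\beta$. Let $\mathcal R'$ contain both orientations independently on every group. For the \emph{realized} $\hat\pi$, pick on each group the orientation that penalizes the direction in which $\hat\pi$ deviates from $\pi'_\beta$. The gap then becomes $\sum_j \rho(\mathcal X_j)\,\Delta_j\,\bigl|\hat\pi(c^j_0\mid s_j)-\pi'_\beta(c^j_0\mid s_j)\bigr|$. This is a nonnegative loss with pairwise separation $\epsilon\Delta_j$, and Assouad then goes through.

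This forces the statement to be about the realized gap (with constant probability, or as an expectation of the max). Otherwise, with the max over $r'$ outside the expectation, the counterexample is simple. In your construction the success-filtered empirical frequency at $s_j$ is an (essentially) unbiased estimate of $\pi'_\beta(\cdot\mid s_j)$, and $J_{r'}$ is affine in it. So every fixed $r'$ would give a near-zero expected gap.

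Two further points. First, the calibration ``flipping $\beta_j$ changes $J$ by $\Theta(\epsilon\,\sigma_{b,x})$'' assumes a single state carries the whole heterogeneity. But $\sigma^2_{\pi,x}$ sums visitation-weighted $Q$-variances over all $H$ steps. A perturbation at one state $s$ buys only $\sqrt{d^{\pi}_h(s)\,\doubleV_{a\sim\pi(\cdot\mid s)}[Q_{\pi_e}(s,a)]}$ of return per unit of $\sqrt{\mathrm{KL}}$.

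To capture the full heterogeneity you must tilt every visited state along the advantage, e.g.\ $\pi'(a\mid s)=\pi_e(a\mid s)\bigl(1\pm\epsilon\,A_{\pi_e}(s,a)/\sigma_{\pi_e,x}\bigr)$. By the performance-difference lemma this moves the return by about $\epsilon\,\sigma_{\pi_e,x}$ at per-trajectory KL $O(\epsilon^2)$. Note that it naturally produces the expert's heterogeneity, not $\sigma_b$. Also, $\rho$ is fixed, so instead of reweighting it you must carry the group masses $\rho(\mathcal X_j)$ through the choice of $\epsilon_j$.

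Second, the paper itself does not prove this lemma. It imports Theorem~5.4 of \citet{setlurscaling}, which is stated with the expert's median heterogeneity $\tilde\sigma_e$. It then replaces $\tilde\sigma_e$ by $\sigma_b$ using their Lemma~5.3: the expert lies in a $\chi^2$-ball around $\pi_b$, so the two heterogeneities are comparable. Your fallback of appealing to \citet{setlurscaling} directly needs exactly this transfer step, which you do not mention. Your own group-wise construction, aggregating through the median, would likewise deliver a median-type quantity rather than the mean-square $\sigma_{\pi_b}$ written in the statement.
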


While the rigorous lower bound depends on the expert's specific heterogeneity $\tilde{\sigma}_e$, we present the bound in terms of the base policy's heterogeneity $\sigma_b$.
This substitution is valid because the expert policy is constrained to lie within a $\chi^2$-neighborhood of the base policy, implying that $\sigma_e \approx \sigma_b$ (Lemma 5.3 in \citet{setlurscaling}).

\subsection{Characterizing Sub-optimality Gap of SFT with Relative Budget}

We characterize the sub-optimality of SFT with the relative budget $\xi$.
Since the SFT dataset contains only successful solution traces, this notion of heterogeneity reduces to the variance of the reward distribution conditioned on success.
We now define this heterogeneity parameter for each problem $x$.
By \eqref{eq:R_tau_x}, the variance for a given $x$ is:
\begin{equation}
    \doubleV[R(\tau) \mid T(\tau) \le H, x] = \doubleV[T(\tau) \mid T(\tau) \le H, x].
\end{equation}

We now introduce a scaling factor $C_{\mathrm{SFT}}(\xi_x) \in [0,1]$ so as to relate the truncated and non-truncated variances for problem $x$:
\begin{equation}
    \doubleV[T(\tau) \mid T(\tau) \le H, x] 
    = \doubleV[T(\tau) \mid x] \cdot \bigl(C_\mathrm{SFT}(\xi_x)\bigr)^2.
\end{equation}
where $\xi_x$ is the problem-dependent criticality from \Cref{definition:criticality_x}.
This scaling factor $C_\mathrm{SFT}(\xi_x)$ has the following qualitative properties.
First, as $\xi_x \to 0$ (i.e., $H \to 0$ for a fixed $\mu_x$), the variance of successful traces $\doubleV[T \mid T \le H, x]$ approaches $0$; thus, $\lim_{\xi_x \to 0} C_\mathrm{SFT}(\xi_x) = 0$.
Also, as $\xi_x \to \infty$ (i.e., $H \to \infty$), the truncation condition $T \le H$ becomes vacuous for problem $x$. 
From \Cref{ass:moments}, $P(T < \infty \mid x) = 1$.
The truncated variance approaches the full variance $\doubleV[T(\tau) \mid x]$; hence, $\lim_{\xi_x \to \infty} C_\mathrm{SFT}(\xi_x) = 1$.

Consequently, the standard deviation for problem $x$ scales as
\begin{align}
    \sigma_{b,x}^{\mathrm{SFT}}(\xi_x)
    \coloneqq
    \sqrt{\doubleV \bigl[T(\tau)\mid T(\tau)\le H,\,x\bigr]}
    =
    \Theta\bigl(\mu_x \cdot C_{\mathrm{SFT}}(\xi_x)\bigr)
    =
    \Theta\left(\frac{C_{\mathrm{SFT}}(\xi_x)}{\xi_x}\,H\right),
\end{align}
where we used $\xi_x = H/\mu_x$ from \Cref{definition:criticality_x}. 
Following \citet{setlurscaling}, the aggregate SFT sub-optimality will depend on a robust aggregate of these per-problem heterogeneities; we use the median $\tilde{\sigma}_b^{\mathrm{SFT}} \coloneqq \mathrm{Median}_{x}\{\sigma_{b,x}^{\mathrm{SFT}}(\xi_x)\}$.

\begin{theorem}[Sub-optimality gap of SFT with problem-dependent criticality]
Let $\bar{n}$ denote the total number of successful solution traces collected.
\begin{align}
    J_{r}(\bar\pi_{\kappa}) - J_{r}(\hat\pi^{\mathrm{SFT}}_{\bar{n}})
    &= \Omega\left(\frac{\tilde{\sigma}_b^\mathrm{SFT}}{\sqrt{\bar{n}}}\right),
    \label{eq:theorem_sft_1_x}
\end{align}
where \(\tilde{\sigma}_b^{\mathrm{SFT}} = \mathrm{Median}_{x}\bigl\{\Theta\bigl(H\,C_{\mathrm{SFT}}(\xi_x)/\xi_x\bigr)\bigr\}\).
\end{theorem}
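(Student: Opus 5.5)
The plan is to instantiate the information-theoretic lower bound of Lemma~\ref{lemma:lower_sft} with the relative-budget characterization of the heterogeneity parameter derived just above. The bound there is stated in terms of $\sigma_b$. Following \citet{setlurscaling}, the aggregate used in the instance-dependent version is the median $\tilde\sigma_b$ of the per-problem heterogeneities. So the proof reduces to three steps: express this per-problem heterogeneity through the success-conditioned reward variance, then relate that variance to $\xi_x$, and finally plug the result into the lower bound.

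\textbf{Step 1: heterogeneity as conditional reward spread.} For each $x$, I would argue that policy heterogeneity $\sigma_{b,x}$ restricted to successful traces equals, up to constants, the standard deviation of $R(\tau)$ conditioned on $T(\tau)\le H$.
\begin{itemize}
\item Under the deterministic concatenation dynamics, the $Q$-values of the bi-level reward form a Doob martingale along the trajectory.
\item By the law of total variance, the summed per-step conditional variances $\sum_h \doubleE[\doubleV_a[Q(s_h,a_h)]]$ telescope to the total variance of the terminal return, here computed under the success-conditioned trace distribution.
\item By \eqref{eq:R_tau_x}, on success $R(\tau)=H+1-T(\tau)$, so this variance equals $\doubleV[T(\tau)\mid T(\tau)\le H,x]$.
\end{itemize}
This identifies $\sigma_{b,x}$ with $\sigma^{\mathrm{SFT}}_{b,x}(\xi_x)$.

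\textbf{Step 2: scaling in $\xi_x$.} I would use the definition of $C_{\mathrm{SFT}}$ together with Assumption~\ref{ass:moments} ($v_x=\Theta(\mu_x^2)$) and $\mu_x=H/\xi_x$. This gives $\sigma^{\mathrm{SFT}}_{b,x}=\Theta(H\,C_{\mathrm{SFT}}(\xi_x)/\xi_x)$. Taking medians over $x$ preserves the $\Theta$ uniformly, since the constants in Assumption~\ref{ass:moments} are uniform over $\Xcal_\sharp$.

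\textbf{Step 3: apply Lemma~\ref{lemma:lower_sft}.} I would take $\nu$ of constant order (e.g.\ $\nu=1$, so $\log|\Pi'|=\Theta(1)$). Lemma~\ref{lemma:lower_sft} then gives a hard expert/reward pair on which any verifier-free learner suffers gap $\Omega(\tilde\sigma_b/\sqrt{\bar n})$. To pass from the expert $\pi'$ to the comparator $\bar\pi_\kappa$, I would invoke the $\chi^2$-neighborhood argument cited after the lemma (Lemma~5.3 of \citet{setlurscaling}). That argument lets the hard expert be chosen inside $\Pi_\kappa$, so $J_r(\bar\pi_\kappa)\ge J_r(\pi')$, and it also gives $\sigma_e\asymp\sigma_b$. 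Substituting Step~2 yields \eqref{eq:theorem_sft_1_x}.

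\textbf{Main obstacle.} I expect the main obstacle to be Step~1. The heterogeneity is defined through $Q_{\pi_e}$ under $d^\pi_h$, not as a reward variance. The martingale identity is exact only when the inner policy and the $Q$-function correspond to the same policy, and the conditioning on success changes the sampling distribution. I would first prove the identity for $\pi=\pi_e$ equal to the success-conditioned base policy, where it is exact. I would then absorb the mismatch using $\sigma_e\asymp\sigma_b$ within the trust region, which costs only constant factors consistent with the $\Omega$ and $\Theta$ statements.
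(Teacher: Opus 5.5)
Your proposal follows the paper's own route: reduce the heterogeneity to $\doubleV[T(\tau)\mid T(\tau)\le H,x]$, rescale through $C_{\mathrm{SFT}}(\xi_x)$ using $v_x=\Theta(\mu_x^2)$ and $\mu_x=H/\xi_x$, take the median, and plug into Lemma~\ref{lemma:lower_sft} via the $\chi^2$-neighborhood substitution $\sigma_e\approx\sigma_b$; your Doob-martingale derivation of Step~1 is a real improvement, since the paper only asserts that reduction. One caveat, shared with the paper: the success-conditioned base policy you take as $\pi_e$ has per-instance $\chi^2$ divergence $1/q_x-1$ from $\pi_b$, so placing it in $\Pi_\kappa$ requires $\kappa\ge\doubleE_{x\sim\rho}[1/q_x]-1$ (you need this both for $J_r(\bar\pi_\kappa)\ge J_r(\pi')$ and for a constant-factor $\sigma_e\asymp\sigma_b$), which forces $\kappa\to\infty$ once deficient instances with $q_x=\Theta(f(\xi_x))\to 0$ enter, and since your Step~1 already computes $\sigma_{e,x}$ exactly, it is cleaner to invoke the $\tilde\sigma_e$ form of the bound directly and choose $\kappa$ large enough for the comparator step.
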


The gap still scales as $1/\sqrt{\bar n}$, implying slow improvement with more expert data.
For a fixed problem $x$, the factor $C_{\mathrm{SFT}}(\xi_x)/\xi_x$ typically attains its maximum at an intermediate $\xi_x=\Theta(1)$, highlighting a regime where imitation is particularly difficult due to high heterogeneity among successful traces.
The aggregate (median) gap is dominated by problems in this intermediate regime.

\subsection{Concrete Analysis with Gamma Distribution}

As with the analyses on RL, we also analyze a specific setting in which the criticality does not vary across instances and the time to correct solution can be modeled via a Gamma distribution.
The variance of a truncated negative binomial distribution does not admit a simple closed-form expression. In the small-$p$ regime, however, the negative binomial distribution is well-approximated by a Gamma distribution, which allows us to obtain tractable closed-form expressions for its truncated moments.
Here, the variance can be related to the non-truncated variance ($\doubleV[T(\tau)] \approx K/p^2$) via a scaling factor $\bigl(C_\mathrm{SFT}(K, \xi)\bigr)^2$ defined as follows:
\begin{lemma}
    \label{lemma:c_sft}
    Let $\Gamma(\mathbf{x}) \coloneq \int_0^\infty y^{\mathbf{x}-1} e^{-y} dy$ be a gamma function.
    Using the regularized lower incomplete gamma function $\gamma(\mathbf{x}_1, \mathbf{x}_2) \coloneqq \frac{1}{\Gamma(\mathbf{x}_1)} \int_0^{\mathbf{x}_2} y^{\mathbf{x}_1 - 1} e^{-y} dy$, let us define the scaling factor $C_\mathrm{SFT}: \doubleZ_+ \times \doubleR_+ \to [0, 1]$ as
    \begin{equation}
    \label{eq:c_sft_gamma}
    \bigl(C_\mathrm{SFT}(K, \xi)\bigr)^2 = (K+1) \frac{\gamma(K+2, K\xi)}{\gamma(K, K\xi)} - K \left(\frac{\gamma(K+1, K\xi)}{\gamma(K, K\xi)}\right)^2.
    \end{equation}
    Then, the following equation holds:
    \begin{equation}
    \doubleV[T(\tau) \mid T(\tau) \le H] = \doubleV[T(\tau)] \cdot \bigl(C_\mathrm{SFT}(K, \xi)\bigr)^2 \approx \frac{K}{p^2} \cdot \bigl(C_\mathrm{SFT}(K, \xi)\bigr)^2.
\end{equation}
\end{lemma}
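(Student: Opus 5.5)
The computation reduces to the truncated moments of a gamma density, exactly as in the proof of \cref{lemma:sigma_RL_gamma}. Write $f(t;K,p) = \frac{p^K}{\Gamma(K)} t^{K-1} e^{-pt}$ for the density of $T(\tau)\sim\mathsf{Gamma}(K,p)$. First I compute the truncated raw moments $M_j \coloneqq \doubleE[T(\tau)^j \,\doubleI\{T(\tau)\le H\}]$ for $j=0,1,2$. The key identity is
\begin{equation*}
t^j f(t;K,p) = \frac{\Gamma(K+j)}{\Gamma(K)\,p^j}\, f(t;K+j,p).
\end{equation*}
It gives
\begin{equation*}
M_0=\gamma(K,pH),\qquad M_1=\frac{K}{p}\,\gamma(K+1,pH),\qquad M_2=\frac{K(K+1)}{p^2}\,\gamma(K+2,pH).
\end{equation*}
These are the same integrals already used for $\doubleE[R(\tau)]$ and $\doubleE[R(\tau)^2]$ in \cref{proof_lemma:sigma_RL_gamma}.

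Next, the conditional moments are $M_1/M_0$ and $M_2/M_0$, so
\begin{equation*}
\doubleV[T(\tau)\mid T(\tau)\le H] = \frac{M_2}{M_0} - \Bigl(\frac{M_1}{M_0}\Bigr)^2 .
\end{equation*}
Substituting the expressions above and factoring out $K/p^2$ gives
\begin{equation*}
\frac{K}{p^2}\left[(K+1)\frac{\gamma(K+2,pH)}{\gamma(K,pH)} - K\Bigl(\frac{\gamma(K+1,pH)}{\gamma(K,pH)}\Bigr)^2\right].
\end{equation*}
Using $pH = K\xi$ from the definition $\xi = pH/K$, the bracket becomes exactly $(C_{\mathrm{SFT}}(K,\xi))^2$ as defined in \eqref{eq:c_sft_gamma}. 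Since the untruncated gamma variance is $\doubleV[T(\tau)] = K/p^2$, this yields the claimed identity, and the ``$\approx$'' enters only through the negative binomial to gamma approximation.

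The main remaining point is to justify that $C_{\mathrm{SFT}}$ indeed maps into $[0,1]$, as asserted in its definition. Nonnegativity is immediate, because the bracket equals a conditional variance divided by $K/p^2$.

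For the upper bound $C_{\mathrm{SFT}}\le 1$, I would show that truncating a log-concave density from above does not increase its variance. For $K\ge 1$ the gamma density is log-concave, and by the standard result on truncations of log-concave distributions the conditional variance $\doubleV[T\mid T\le H]$ is nondecreasing in $H$. Letting $H\to\infty$ recovers the full variance $K/p^2$, so the bracket is at most $1$. This monotonicity step is the only non-computational ingredient. If I wanted to avoid citing it, I would instead differentiate the conditional variance in $H$ and check the sign, which is where I expect the main effort.

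The limits $C_{\mathrm{SFT}}\to 0$ as $\xi\to 0$ and $C_{\mathrm{SFT}}\to 1$ as $\xi\to\infty$ then follow from $\gamma(s,\lambda)\to 1$ as $\lambda\to\infty$ and from the small-$\lambda$ expansion $\gamma(s,\lambda)\sim \lambda^s/\Gamma(s+1)$.
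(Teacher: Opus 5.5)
Your proposal is correct and follows the paper's proof essentially step for step. You compute the truncated moments of $\mathsf{Gamma}(K,p)$ via regularized incomplete gamma functions, where your density-shift identity is just the paper's change of variables $t = px$. You then form $M_2/M_0 - (M_1/M_0)^2$, factor out $K/p^2$, and substitute $pH = K\xi$. Your extra arguments go beyond the paper, which simply asserts the codomain $[0,1]$ and states the limits qualitatively: the log-concavity justification that $C_{\mathrm{SFT}}\le 1$ (valid since $K\ge 1$) and the limit checks as $\xi\to 0$ and $\xi\to\infty$.
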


\begin{proof}
Let $X$ be a random variable following a Gamma distribution with shape $K$ and rate $p$, denoted $X \sim \text{Gamma}(K, p)$.
We aim to find its variance, conditional on the event $X \le H$, where $H$ is the token budget.
First of all, the variance of any random variable $X$ truncated from above at $H$ is given by the standard formula:
\begin{equation}
    \doubleV[X | X \le H] = \doubleE[X^2 | X \le H] - \left(\doubleE[X | X \le H]\right)^2
\end{equation}
Here, our task reduces to finding the first and second moments of the truncated Gamma distribution.

Let $X \sim \text{Gamma}(K, p)$.
The $k$-th moment of $X$ truncated at $H$ is defined as:
\begin{equation}
    \doubleE[X^k | X \le H] = \frac{\int_0^H x^k f(x) \,dx}{\int_0^H f(x) \,dx}
\end{equation}
where $f(x)$ is the probability density function of the Gamma distribution:
\begin{equation}
    f(x; K, p) = \frac{p^K}{\Gamma(K)} x^{K-1} e^{-px}
\end{equation}
The denominator, $\int_0^H f(x) \,dx$, is the cumulative distribution function (CDF) evaluated at $H$. For the Gamma distribution, this is the regularized lower incomplete gamma function, $\gamma(K, pH)$.
We compute the integral in the numerator for an arbitrary moment $k$:
\begin{align*}
\int_0^H x^k f(x) \,dx &= \int_0^H x^k \frac{p^K}{\Gamma(K)} x^{K-1} e^{-px} \,dx \\
&= \frac{p^K}{\Gamma(K)} \int_0^H x^{K+k-1} e^{-px} \,dx
\end{align*}
After a change of variables ($t=px$), the integral can be expressed using the regularized lower incomplete gamma function $\gamma(s,z)$. Since the integral $\int_0^z t^{s-1} e^{-t} \,dt$ is equal to $\Gamma(s)\gamma(s,z)$ by definition, we have:
\begin{align*}
\frac{p^K}{\Gamma(K)} \int_0^H x^{K+k-1} e^{-px} \,dx
&= \frac{p^K}{\Gamma(K)} \int_0^{pH} \left(\frac{t}{p}\right)^{K+k-1} e^{-t} \frac{dt}{p} \\
&= \frac{1}{p^k \Gamma(K)} \int_0^{pH} t^{K+k-1} e^{-t} \,dt \\
&= \frac{1}{p^k \Gamma(K)} \left[ \Gamma(K+k) \gamma(K+k, pH) \right] \\
&= \frac{\Gamma(K+k)}{p^k \Gamma(K)} \gamma(K+k, pH)
\end{align*}

The first moment of $X$ truncated from above at $H$ is given by:
\begin{equation}
    \doubleE[X | X \le H] = \frac{\frac{\Gamma(K+1) \gamma(K+1, pH)}{p \Gamma(K)}}{\gamma(K, pH)} = \frac{K}{p} \frac{\gamma(K+1, pH)}{\gamma(K, pH)}.
\end{equation}
Also, the second moment of $X$ truncated from above at $H$ is given by:
\begin{equation}
    \doubleE[X^2 | X \le H] = \frac{\frac{\Gamma(K+2) \gamma(K+2, pH)}{p^2 \Gamma(K)}}{\gamma(K, pH)} = \frac{K(K+1)}{p^2} \frac{\gamma(K+2, pH)}{\gamma(K, pH)}.
\end{equation}

Substitute the moments back into the variance formula:
\begin{equation}
    \doubleV[X | X \le H] = \frac{K(K+1)}{p^2} \frac{\gamma(K+2, pH)}{\gamma(K, pH)} - \left( \frac{K}{p} \frac{\gamma(K+1, pH)}{\gamma(K, pH)} \right)^2.
\end{equation}

The scaling factor is the ratio of the truncated variance to the non-truncated variance, where $\doubleV[X] = K/p^2$.
\begin{align*}
    \bigl(C_\mathrm{SFT}(K, \xi)\bigr)^2 &= \frac{\doubleV[X | X \le H]}{\doubleV[X]} \\
    &= \frac{\frac{K(K+1)}{p^2} \frac{\gamma(K+2, pH)}{\gamma(K, pH)} - \frac{K^2}{p^2} \left( \frac{\gamma(K+1, pH)}{\gamma(K, pH)} \right)^2}{K/p^2} \\
    &= (K+1)\frac{\gamma(K+2, pH)}{\gamma(K, pH)} - K\left(\frac{\gamma(K+1, pH)}{\gamma(K, pH)}\right)^2.
\end{align*}
Finally, substituting the criticality parameter $\xi = pH/K$ gives
\begin{equation}
    \bigl(C_\mathrm{SFT}(K, \xi)\bigr)^2 = (K+1)\frac{\gamma(K+2, K\xi)}{\gamma(K, K\xi)} - K\left(\frac{\gamma(K+1, K\xi)}{\gamma(K, K\xi)}\right)^2.
\end{equation}
\end{proof}
This allows us to write the standard deviation $\sigma_b^\text{SFT}$ as:
\begin{align}
\sigma_b^\text{SFT}(K, \xi) = \sqrt{\doubleV[T(\tau) \mid T(\tau) \le H]} \approx \frac{\sqrt{K}}{p} C_\mathrm{SFT}(K, \xi).
\end{align}
Using the criticality parameter $\xi = pH/K$, this can also be expressed as:
\begin{align}
    \sigma_b^\text{SFT}(K, \xi) \approx \frac{C_\mathrm{SFT}(K, \xi)}{\sqrt{K}\xi} H,
\end{align}
where $C_\mathrm{SFT}(K, \xi) \in \doubleR_+$ is a positive scalar in $[0, 1]$ that depends on both $K$ and $\xi$.
As shown in \Cref{fig:sft_1}, for any fixed $K$, it behaves similarly to the geometric case ($K=1$); that is, $\lim_{\xi \to 0} C_\mathrm{SFT}(K, \xi) = 0$ and $\lim_{\xi \to \infty} C_\mathrm{SFT}(K, \xi) = 1$. 
This function captures the reduction in variance due to truncation.

\begin{theorem}[Sub-optimality gap of SFT]
Let $\bar{n}$ denote the number of the successful solution traces.
\begin{align}
    J_{r}(\bar\pi_{\kappa}) - J_{r}(\hat\pi^{\mathrm{SFT}}_{\bar{n}})
    &= \Omega\left(\frac{\sigma_b^\mathrm{SFT}}{\sqrt{\bar{n}}}\right)
    = \Omega\left(C_\mathrm{SFT}(K, \xi) \cdot \frac{\sqrt{K}}{p \sqrt{\bar{n}}}\right) \\
    &= \Omega\left(\frac{C_\mathrm{SFT}(K, \xi)}{\sqrt{K}\xi} \cdot \frac{H}{\sqrt{\bar{n}}}\right).
    \label{eq:theorem_sft_1}
\end{align}
\end{theorem}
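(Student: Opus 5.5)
The plan is to obtain the theorem by substituting the Gamma-model statistics into the SFT lower bound of \cref{lemma:lower_sft}. First, I will instantiate \cref{lemma:lower_sft} with the heterogeneity parameter specialized to the SFT setting. Since the demonstrations consist only of successful traces, the relevant heterogeneity is the spread of returns among successful traces. By \eqref{eq:R_tau_x}, $R(\tau)=H+1-T(\tau)$ on the event $T(\tau)\le H$, so $\doubleV[R\mid T\le H]=\doubleV[T\mid T\le H]$. Following the convention adopted right after \cref{lemma:lower_sft}, I will replace the expert heterogeneity by the base-policy heterogeneity $\sigma_b^{\mathrm{SFT}}$; this is justified by the $\chi^2$-neighborhood argument of Lemma 5.3 in \citet{setlurscaling}. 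Choosing $\nu=\Theta(1)$, so that $\log|\Pi'|=\Theta(1)$, then gives the first equality
\[
J_r(\bar\pi_\kappa)-J_r(\hat\pi^{\mathrm{SFT}}_{\bar n})=\Omega\bigl(\sigma_b^{\mathrm{SFT}}/\sqrt{\bar n}\bigr).
\]
Here the comparator $\bar\pi_\kappa$ plays the role of the hard expert $\pi'\in\Pi'$ inside the trust region.

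Second, I will evaluate $\sigma_b^{\mathrm{SFT}}$ under the Gamma model using \cref{lemma:c_sft}, which gives
\[
\doubleV[T\mid T\le H]=\frac{K}{p^2}\,\bigl(C_{\mathrm{SFT}}(K,\xi)\bigr)^2 .
\]
Taking square roots yields $\sigma_b^{\mathrm{SFT}}=\sqrt K\,C_{\mathrm{SFT}}(K,\xi)/p$, which gives the second expression in \eqref{eq:theorem_sft_1}. Because the criticality is homogeneous across instances in this setting, the median aggregation in the general theorem reduces to this single value.

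Third, I will substitute $p=K\xi/H$, which follows from $\xi=pH/K$, and simplify:
\[
\frac{\sqrt K}{p}=\frac{H}{\sqrt K\,\xi}.
\]
This gives the final form $\Omega\bigl(C_{\mathrm{SFT}}(K,\xi)H/(\sqrt K\,\xi\sqrt{\bar n})\bigr)$.

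The main obstacle is the first step, which requires carefully justifying two identifications. The first is that the Q-value heterogeneity of \cref{prop:heterogeneity} equals the conditional reward variance on successful traces. I would argue this through the law of total variance applied along the trajectory: with deterministic transitions, the sum over $h$ of the expected conditional variances of $Q$ equals the total variance of the return, here conditioned on success. The second is the substitution $\sigma_e\approx\sigma_b$, for which I would rely directly on the cited Lemma 5.3 of \citet{setlurscaling}. The Gamma continuous relaxation ($\approx$) is treated as exact, consistent with \cref{lemma:c_sft}.
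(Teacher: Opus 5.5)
Your proposal is correct and takes the same route as the paper. The paper likewise plugs the success-conditioned heterogeneity $\sigma_b^{\mathrm{SFT}}=\sqrt{\mathbb{V}[T\mid T\le H]}$ into \cref{lemma:lower_sft} (using $\sigma_e\approx\sigma_b$), evaluates it via \cref{lemma:c_sft} as $\sqrt{K}\,C_{\mathrm{SFT}}(K,\xi)/p$, and substitutes $p=K\xi/H$. Your law-of-total-variance argument goes slightly beyond the paper, which simply asserts that the heterogeneity reduces to the conditional return variance; your argument is valid when applied to the success-conditioned policy together with its own $Q$-function.
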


The sub-optimality gap still scales with $1/\sqrt{\bar{n}}$, indicating slow improvement with more expert data.
Also, SFT's performance degrades as the average problem difficulty $K/p = \doubleE[T(\tau)]$ increases.
Furthermore, for a fixed problem difficulty (fixed $K, p$), increasing the compute budget $H$ (and thus $\xi$) increases the final term $\frac{C_\mathrm{SFT}(K, \xi)}{\sqrt{K}\xi} H$, as $C_\mathrm{SFT}$ approaches 1 while $\xi$ grows linearly with $H$.
The term $\frac{C_\mathrm{SFT}(K,\xi)}{\sqrt{K}\xi}$, which dictates the scale of the sub-optimality, is plotted as a function of $\xi$ in \Cref{fig:sft_2}.
The plot reveals a critical insight: this term reaches a maximum value around $\xi = 1$, indicating that the sub-optimality gap is at its largest under this condition.
This peak highlights a key limitation of SFT: its performance deteriorates when the compute budget $H$ is on the same order as the expected time-to-solution $\doubleE[T(\tau)]$.
This regime maximizes the variance among successful traces, making imitation particularly difficult.
This is because such a condition maximizes the heterogeneity of the successful solution traces used for training. The high variance in these successful examples, which range from very fast solutions to those that barely finish within the budget, makes it exceptionally difficult for the model to learn a single, coherent strategy through imitation.

\begin{figure}[t]
  \centering
  \begin{subfigure}[b]{0.48\textwidth}
    \centering
    \includegraphics[height=4.5cm]{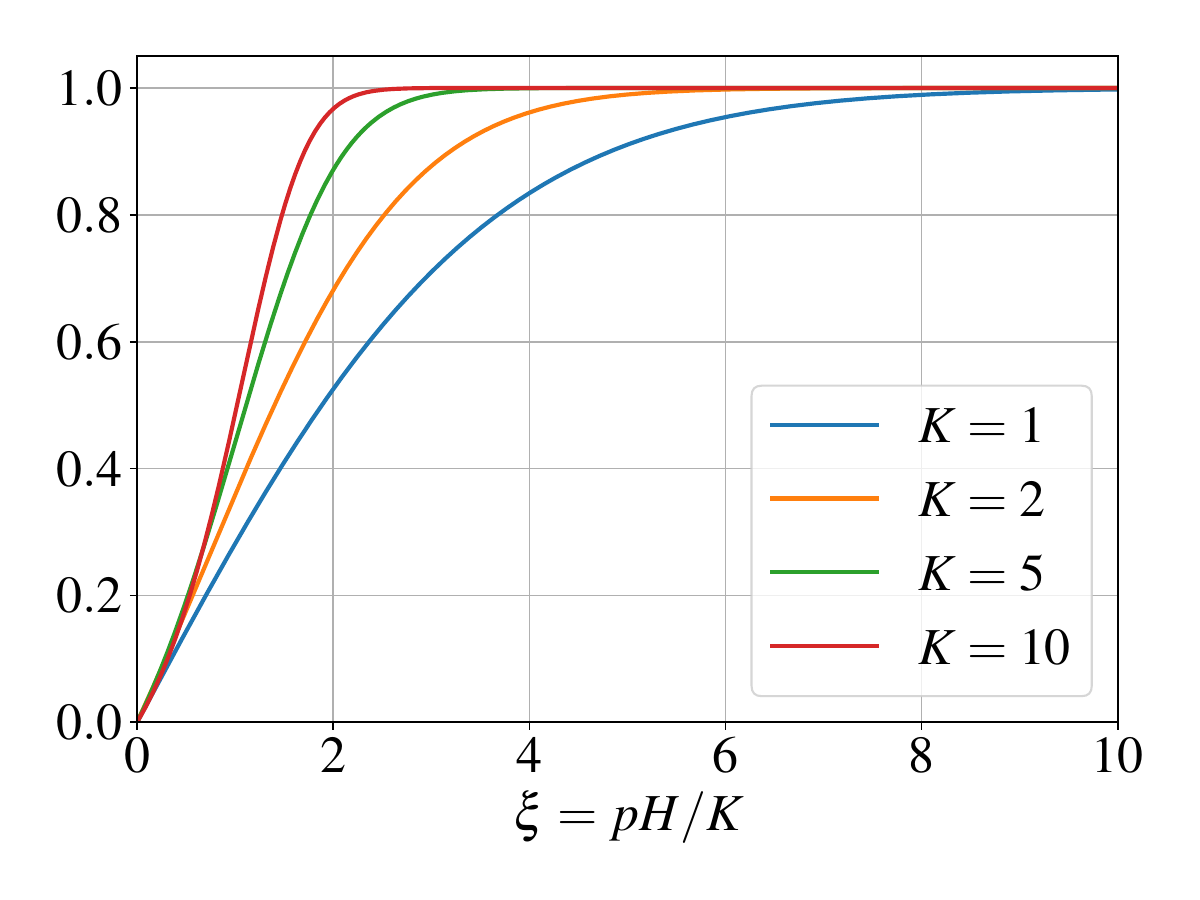}
    \caption{$C_\mathrm{SFT}(K, \xi)$ in \cref{eq:c_sft_gamma}.}
    \label{fig:sft_1}
  \end{subfigure}
  \begin{subfigure}[b]{0.48\textwidth}
    \centering
    \includegraphics[height=4.5cm]{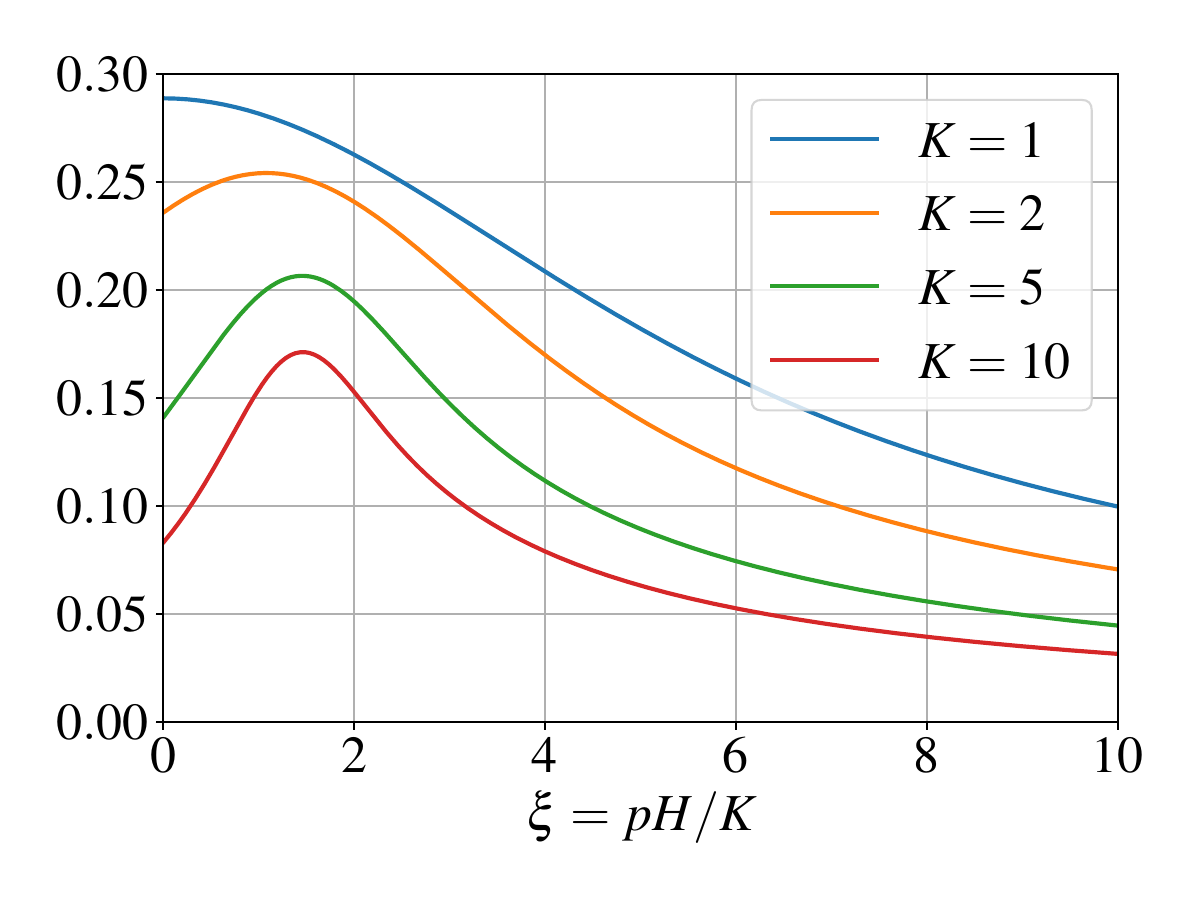}
    \caption{$\frac{C_\mathrm{SFT}(K, \xi)}{\sqrt{K}\xi}$ in \cref{eq:theorem_sft_1}.}
    \label{fig:sft_2}
  \end{subfigure}
  \label{fig:sft}
  \caption{Relations between $\xi$ and each coefficient regarding SFT.}
\end{figure}

\end{document}